\definecolor{darkgreen}{rgb}{0, .5, 0}
\crefname{equation}{}{}
\crefname{section}{section}{sections}
\crefname{figure}{figure}{figures}
\crefname{table}{table}{tables}
\crefname{example}{example}{examples}
\crefname{proposition}{proposition}{propositions}
\Crefname{section}{Section}{Sections}
\Crefname{figure}{Figure}{Figures}
\Crefname{table}{Table}{Tables}
\Crefname{definition}{Definition}{Definitions}
\Crefname{theorem}{Theorem}{Theorems}
\Crefname{remark}{Remark}{Remarks}
\Crefname{example}{Example}{Examples}
\Crefname{proposition}{Proposition}{Propositions}
\numberwithin{equation}{section}
\newcommand{\N}{\mathbb{N}}
\DeclareMathOperator*{\argmax}{arg\,max}
\DeclareMathOperator*{\argmin}{arg\,min}
\NewDocumentCommand{\E}{o}{%
    \mathbb{E}^g\IfValueT{#1}{\big( {#1} \big)}
}
    \NewDocumentCommand{\Eg}{o}{%
        \mathbb{E}^g\IfValueT{#1}{\big( {#1} \big)}
    }
\renewcommand{\N}{\mathbb{N}}
\newcommand{\Rd}{\mathbb{R}^d}
\NewDocumentCommand{\Flow}{o}{%
    \operatorname{Flow}{\IfValueT{#1}{\left({#1}\right)}}
}
\NewDocumentCommand{\supp}{o}{%
    \operatorname{supp}{\IfValueT{#1}{\left({#1}\right)}}
}
\newcommand{\eqdef}{\ensuremath{\stackrel{\mbox{\upshape\tiny def.}}{=}}}
\definecolor{MidnightBlue}{RGB}{25,25,112}
\definecolor{MidnightBlueComplementingGreen}{RGB}{25,112,25}
\definecolor{MidnightBlueComplementingPurple}{RGB}{112,25,112}
\definecolor{MidnightBlueComplementingRed}{RGB}{112,25,69}
\definecolor{coolblack}{rgb}{0.0, 0.18, 0.39}
\definecolor{deepjunglegreen}{rgb}{0.0, 0.29, 0.29}
\definecolor{applegreen}{rgb}{0.55, 0.71, 0.0}
\definecolor{WowColor}{rgb}{.75,0,.75}
\definecolor{MildlyAlarming}{rgb}{0.85,0.25,0.1}
\definecolor{SubtleColor}{rgb}{0,0,.50}
\definecolor{SubtleColor2}{rgb}{0.6,0.21,.50}
\definecolor{lasallegreen}{rgb}{0.03, 0.47, 0.19}
\definecolor{amethyst}{rgb}{0.6, 0.4, 0.8}
\definecolor{auburn}{rgb}{0.43, 0.21, 0.1}
\definecolor{americanrose}{rgb}{1.0, 0.01, 0.24}
\definecolor{darkorchid}{rgb}{0.6, 0.2, 0.8}
\definecolor{persiangreen}{rgb}{0.0, 0.65, 0.58}
\newcounter{margincounter}
\NewDocumentCommand{\AK}{moo}{
    \IfValueF{#2}{
                {{\scriptsize
                            \textcolor{persiangreen}{
                                \textbf{AK:}
                                {#1}
                            }
                        }}
        }
    \IfValueT{#2}{\IfValueF{#3}{
                        \marginnote{{\scriptsize
                            \textcolor{persiangreen}{ 
                            \textbf{AK:}
                            \textit{{#1}}
                            }
                        }}
        }}
    \IfValueT{#3}{
                                    {{\scriptsize
                            \textcolor{persiangreen}{
                            \hfill\\
                                \noindent 
                                \textbf{AK:}
                                \textit{{#1}}
                            \hfill\\
                            }
                        }}
        }
                    }
\NewDocumentCommand{\MG}{moo}{
    \IfValueF{#2}{
                {{\scriptsize
                            \textcolor{darkorchid}{
                                \textbf{MG:}
                                {#1}
                            }
                        }}
        }
    \IfValueT{#2}{\IfValueF{#3}{
                        \marginnote{{\scriptsize
                            \textcolor{darkorchid}{ 
                            \textbf{MG:}
                            \textit{{#1}}
                            }
                        }}
        }}
    \IfValueT{#3}{
                                    {{\scriptsize
                            \textcolor{darkorchid}{
                            \hfill\\
                                \noindent 
                                \textbf{MG:}
                                \textit{{#1}}
                            \hfill\\
                            }
                        }}
        }
                    }
\definecolor{britishracinggreen}{rgb}{0.0, 0.26, 0.15}
\let\underbrace\LaTeXunderbrace
\let\overbrace\LaTeXoverbrace
\title{Generative Neural Operators of Log-Complexity Can Simultaneously Solve Infinitely Many Convex Programs}
\author{
  Anastasis Kratsios%
  \thanks{Department of Mathematics, McMaster University and the Vector Institute, Canada 
    (\email{kratsioa@mcmaster.ca}).}
  \and
  Ariel Neufeld%
  \thanks{Division of Mathematical Sciences, Nanyang Technological University, Singapore 
    (\email{ariel.neufeld@ntu.edu.sg}).}
  \and
  Philipp Schmocker%
  \thanks{Division of Mathematical Sciences, Nanyang Technological University, Singapore
    (\email{philippt001@e.ntu.edu.sg}).}
}
\date{}
\newcommand\numberthis{\addtocounter{equation}{1}\tag{\theequation}}
\newcounter{termcounter}
\renewcommand{\thetermcounter}{\Roman{termcounter}}
\crefname{term}{term}{terms}
\def\term{\@ifnextchar[\term@optarg\term@noarg}
\def\term@optarg[#1]#2{%
  \textup{#1}%
  \def\@currentlabel{#1}%
  \def\cref@currentlabel{[][2147483647][]#1}%
  \cref@label[term]{#2}}
\def\term@noarg#1{%
  \refstepcounter{termcounter}%
  \textup{(\thetermcounter)}%
  \cref@label[term]{#1}}
\begin{document}

\maketitle

\begin{abstract}
    Neural operators (NOs) are a class of deep learning models designed to simultaneously solve infinitely many related problems by casting them into an infinite-dimensional space, whereon these NOs operate. A significant gap remains between theory and practice: worst-case parameter bounds from universal approximation theorems suggest that NOs may require an unrealistically large number of parameters to solve most operator learning problems, which stands in direct opposition to a slew of experimental evidence.
    This paper closes that gap for a specific class of {NOs}, generative {equilibrium operators} (GEOs), using (realistic) finite-dimensional deep equilibrium layers, when solving families of convex optimization problems over a separable Hilbert space $X$. Here, the inputs are smooth, convex loss functions on $X$, and outputs are the associated (approximate) solutions to the optimization problem defined by each input loss.
    
    We show that when the input losses lie in suitable infinite-dimensional compact sets, our GEO can uniformly approximate the corresponding solutions to arbitrary precision, with rank, depth, and width growing only logarithmically in the reciprocal of the approximation error. We then validate both our theoretical results and the trainability of GEOs on three applications: (1) nonlinear PDEs, (2) stochastic optimal control problems, and (3) hedging problems in mathematical finance under liquidity constraints.
\end{abstract}

\noindent \textbf{Keywords:} Exponential Convergence, Proximal Splitting, Convex Optimization, Operator Learning, Stochastic Optimal Control, Non-Linear PDEs, Quadratic Hedging, Mathematical Finance.


\section{Introduction} 
\label{s:Intro}
Neural operators (NOs) amortize the computational cost of solving large families of problems by learning reusable structure across \textit{infinitely many} related tasks.
Unfortunately, there is currently a large gap between NO theory and practice, since the approximation guarantees for neural operators suggest that, though NOs can approximately solve most infinite-dimensional problems~\cite{kovachki2021universal,galimberti2022designing,korolev2022two,cuchiero23global,neufeld2023universal,kratsios2024mixture} they may need an exorbitant number of parameters~\cite{lanthaler2023operator,lanthaler2024operator} to do so; unless the target operators is extremely smooth~\cite{Marcati_2023_ExpConvDeepON,adcock2024optimal}. This is surprising, as most operators encountered in practice are not that smooth; yet, there is a vast and well-documented literature showing that neural operators can successfully resolve most computational problems using a feasible number of parameters; e.g.~\cite{li2020fourier,yang2021seismic,pathak2022fourcastnet,molinaro2023neural,kovachki2023neural,benitez2024out,azizzadenesheli2024neural,herde2024poseidon}. This large gap between theory and practice, thus cannot be resolved using tools from classical approximation theory.

This paper focuses precisely on closing this gap. We do so by 1) exhibiting a non-smooth but iterative structure which NOs can favourably leverage using their \textit{depth}; and 2) tweaking standard NOs with deep equilibrium layers to provably take advantage of this structure and solve broad classes of infinite families of optimization problems with sub-linear parametric complexity.  More precisely, we develop a NO solving (infinite) families of expressible as solutions to \textit{convex} optimization problems of a ``splittable'' form
\begin{equation}
\label{eq:optim}
    g \quad\mapsto\quad 
    \underset{x\in X}{\operatorname{argmin}}
    \,
        \ell_{f,g}(x)
    \,\,\mbox{ with }\,\,
        \ell_{f,g}(x) 
    \eqdef 
        f(x) + g(x)
\end{equation}
where $X$ is a separable Hilbert space, $f: X \to (-\infty, \infty]$ is a proper, convex, and lower semicontinuous function, and $g: X \to \mathbb{R}$ is convex and G\^{a}teaux differentiable with $(p-1)$-H\"{o}lder continuous gradient for some $p \ge 2$.  We approximate the associated \textit{loss-to-solution} ($g\mapsto $ minimizer of $f+g$, for fixed $f$) using neural operator-based foundation models for problems of the form~\eqref{eq:optim} since they are core to a variety of scientific issues ranging from: parametric families of non-linear PDEs~\cite{BeckCheridito_2021_DeepSplitting,marwah2023deep,li2020fourier,lu2021learning,lanthaler2022error,Khoo_2021_SolvingParametricPDE,Geist_2021_ParametricDiffEq,Kutyniok_2022_NNParPDE,Bhattacharya_2021_ModelRedParPDE,Berner_2020_ParFamKolmPDE,Khoo_2021_SolvingParametricPDE,Han_2017_SolvingHighDimPDEs,Marcati_2023_ExpConvDeepON}, stochastic optimal control~\cite{Soner_2005_StochasticOptimalControl,Fleming_1975_DetStochOptimalControl,Touzi_2014_OptimalStochasticControl,Li_2024_NeuralNetworkSOC,Bertsekas_2005_DynamicProgramming,Han_2018_DLforSOC,Chen_2018_OptimalControl,Bachouch_2022_DNNforSOC,Becker_2019_DeepOptimalStopping}, and quadratic hedging in mathematical finance~\cite{Schweizer_1999_QuadraticHedging,Pham_2000_QuadraticHedging,Lim_2004_QuadraticHedging,Buehler_2019_DeepHedging,Gnoatto_2024_DeepQuadraticHedging,Neufeld_2022_ChaoticHedging,Ruf_2022_HedgingLinearRegressionNN,Salvi_2025_RoughKernelHedging}. Additionally, any foundation model for the above can rapidly generate high-fidelity solutions that may either be used directly with minimal computational overhead or serve as inputs to a classical, case-specific downstream solver, yielding highly accurate solutions to a given convex optimization problem of the above form with little additional computational cost.

Our solutions come in the form of a newly-designed variant on NOs using deep equilibrium (DE) layers, which is both non-deterministic, i.e.\ generative, and does not rely on infinite-dimensional DE layers (which, in general, need not be compatible with real-world computation).  Our
\textbf{G}enerative deep \textbf{E}quilibrium \textbf{O}perator (GEO) architecture whose implicit bias encodes proximal forward-backward splitting procedures of~\cite{combettes2005signal} directly into its internal logic, allowing it to \textit{simultaneously} solve infinite families of the convex optimization problems in~\eqref{eq:optim} with minimal computational overhead.  Our model leverages \textit{proximal operators} as multivariate implicit nonlinear activation functions, thus extending standard \textbf{d}eep \textbf{eq}uilrium models (DEQ) \cite{bai2019deep} to infinite dimension, reflecting the recent developments in monotone DEQs \cite{winston2020monotone}, and which enjoy the convergence benefits of models leveraging fixed point iterations; e.g.\ DEQs with guarantees~\cite{gabor2024positive} in finite dimensions, or DEQs in infinite dimensions which either implicitly~\cite{furuya2024simultaneously} or explicitly~\cite{marwah2023deep,feischl2025neural} perform fixed point iterations.  Additionally, the generative aspect of our neural operator model builds on the generative adversarial neural operators of~\cite{rahman2022generative} and allows for a greater diversity in its predictions through internal sources of randomness.  Our generative DEQ lies at the intersection of \textit{deep equilibrium} and \textit{generative} modelling in \textit{infinite dimensions}, specialized in \textit{convex optimization} problems of a ``splittable'' form~\eqref{eq:optim}.

\subsection{Main Contributions}
Our main result (Theorem~\ref{thrm:Main_AproximateSelection}) shows that GEOs can approximate the \textit{loss-to-solution} mapping of any admissible $g$ in~\eqref{eq:optim} for the corresponding splittable convex optimization problem over $X$. Critically, when the set of all admissible $g$ is sufficiently well-behaved (formalized in~\eqref{eq:nice_gs}), the approximation can be achieved by GEOs whose depth grows at-most \textit{logarithmically} in the reciprocal of the approximation error $\varepsilon > 0$, and whose width and rank do not grow exponentially therein. Moreover, if both $f$ and all admissible $g$ are Lipschitz with a shared worst-case Lipschitz constant, then our second main result (Theorem~\ref{thrm:Main_NearOptimization}) shows that the optimal value itself can be recovered to roughly the same precision as the approximation accuracy of the loss-to-solution operator. Hence, \textit{feasibly small} GEOs can approximately solve infinitely many (nonlinear) splittable convex optimization problems to high accuracy, thereby bypassing known limitations of general neural operator solutions when approximating arbitrary continuous or smooth solution operators~\cite{lanthaler2024operator,galimberti2022designing}.  Our proof is based on the idea of approximately ``unrolling'' the proximal forward-backward splitting iterations of~\cite{combettes2005signal}, which have recently found quantitative foundations in~\cite{Bredies_FBSplitting_2008,guan_2021_forward}, onto the layers of our neural operator architecture.  Each of these results are predicated on the existence of a continuous approximate ($\eta$-)selectors for the coefficient ($g$) to solution operator for each splittable convex optimization problem in~\eqref{eq:optim}, with slack parameter $\eta>0$ (Proposition~\ref{prop:Existence}).

\subsection{Secondary Contributions}
We then apply our main results to problems in non-linear partial differential equations (PDEs) (Section~\ref{s:Applications__ss:PDEs}), stochastic optimal control (Section~\ref{s:Applications__ss:Control}), and finally to optimal hedging in mathematical finance (Section~\ref{s:Applications__ss:Finance}).  Each application explains and derives the relevant family of (non-linear) convex splittable optimization problems and is accompanied by a numerical illustration showing the reproducibility of our theoretical claims in each setting.  An additional finite-dimensional application is included in our supplementary material (Section~\ref{s:Applications__ss:Convex}).

\subsection{Related Work}
\label{s:Introduction__ss:RelatedWorks}
Our NO analysis resonates with recent efforts in scientific machine learning to embed algorithmic priors into learned architectures. These include PDE solvers using deep operator networks \cite{marwah2023deep,li2020fourier,lu2021learning}, neural realizations of classical schemes such as multi-grid or fixed-point constructions, e.g.~Cauchy-Lipschitz, Lax-Milgram, Newton-Kantorovich theorems (see~\cite{feischl2025neural}), and the design of structured nonlinear mappings with guaranteed geometric convergence via nonlinear Perron-Frobenius theory \cite{gabor2024positive}. We further highlight the connection between our neural operator architecture and the recent literature on operator learning in infinite dimensions \cite{kovachki2021universal,lanthaler2022error,lu2021learning} which typically suffers from the curse of dimensionality~\cite{lanthaler2024operator}, algorithm unrolling \cite{monga2021unrolling,mohammad2025deep} which writes various forms of algorithmic logic directly into neural network layers and monotone operator theoretic perspectives on DEQ \cite{BauschkeCombettes_2017CABook}.

\subsection{Organization of Paper}
\label{s:Introduction__ss:PaperOrg}
Section~\ref{s:Prelim} compiles the preliminary background and notation required in the formulation of our main result and it introduces our GEO model. Section~\ref{s:Main} contains the existence of a continuous (approximate) loss-to-solution operator and our main approximation guarantees thereof. Section~\ref{s:Applications} contains worked out applications of our results to PDEs, stochastic optimal control, and mathematical finance. Section~\ref{s:Conclusion} contains a conclusion, whereas all proofs are relegated to Section~\ref{s:Proofs}. Additional background on proximal operators is included in our paper's supplementary material (see Appendix~\ref{a:supplmat}).

\section{Preliminaries}
\label{s:Prelim}
We now cover the background and terminology required to formulate our results.
\paragraph{Notation}
Let $\mathbb{N}\eqdef \{0,1,2,\dots,\}$ and $\mathbb{N}_+\eqdef \{n\in \mathbb{N}:\, n>0\}$.  Given a vector field $V:\Rd\to\Rd$, we denote its support by $\supp[V]\eqdef \overline{\{x\in \Rd:\, V(x)\neq 0\}}$ where $\bar{A}$ denotes the closure of a subset $A\subseteq \mathbb{R}^d$ in the norm topology.  For each $N\in \N_+$, we define the $N$-simplex $\Delta_N\eqdef \{w\in [0,1]^N:\, \sum_{n=1}^N\, w_n=1\}$.  Let $\Gamma_0(X)$ denote the set of lower semi-continuous, proper, and convex (non-linear) maps from $X$ to $(-\infty,\infty]$.
We fix a probability space $(\Omega,\mathcal{F},\mathbb{P})$ on which all our random variables are defined.

For any $R \in \mathbb{N}_+$, we define the finite dimensional vector subspace $E_R \eqdef \operatorname{span}(\{e_j\}_{j=0}^{R-1}) \subseteq X$ and consider the \textit{projection operator}
\begin{alignat}{3}
    X & \ni x & \quad \mapsto \quad & & P_R(x) & \eqdef \sum_{j=0}^{R-1} \langle x,e_j \rangle e_j \in E_R,
    \intertext{the \textit{lifting operator}}
    \label{eq:lifting_operator}
    \mathbb{R}^R & \ni z \eqdef (z_0,...,z_{R-1})^\top & \quad \mapsto \quad & & z^{\uparrow:R} & \eqdef \sum_{j=0}^{R-1} z_j e_j \in E_R,
    \intertext{and the \textit{real-encoding operator}}
    X & \ni x & \quad \mapsto \quad & & x^{\downarrow:R} & \eqdef (\langle x,e_j\rangle)_{j=0}^{R-1} \in \mathbb{R}^R.
\end{alignat}
Observe that $(x^{\downarrow:R})^{\uparrow:R}=x$ for any $x\in E_R$ and $R\in \mathbb{N}_+$. Thus, in this sense, the operators ${\cdot}^{\uparrow:R}$ and ${\cdot}^{\downarrow:R}$ are purely formal identifications of $E_R$ with $\mathbb{R}^R$ and visa-versa.

\paragraph{The topology on Continuously Fr\'{e}chet-Differentiable Operators}
We henceforth equip $C(X,X)$ with the topology of uniform convergence on compact subsets of $X$.  We equip $C^1(X,\mathbb{R})$ with the locally-convex topology $\tau$ generated by the family of semi-norms $\{p_K\}_K$ defined for any $g\in C^1(X,\mathbb{R})$ by 
\[
        p_K(g)
    \eqdef 
        \sup_{x\in K}\,
                |g(x)|
            +
                \|\nabla g(x)\|_X
\]
where the family $\{p_K\}_K$ is indexed over all non-empty compact subsets $K$ of $X$.  Note that, by construction, the locally-convex topology $\tau$ on $C^1(X,\mathbb{R})$ is not metrizable when $X$ is not hemicompact; e.g.\ when $X$ is a locally-compact metric space.  
Now, by definition of $\tau$ on $C^1(X,\mathbb{R})$ and the uniform convergence on compact sets, the topology on $C(X,X)$ guarantees the continuity of the following non-linear operator from $C^1(X,\mathbb{R}) $ to $C(X,X)$ sending any $g\in C^1(X,\mathbb{R})$ to
\begin{equation}
\label{eq:gradient_map}
\begin{aligned}
    C^1(X,\mathbb{R}) \ni g & \,\, \mapsto \,\, \nabla g \in C(X,X)
.
\end{aligned}
\end{equation}
\paragraph{Convex Analysis in Banach Spaces}
The sub-differential of $f \in \Gamma_0(X)$ is defined as the set-valued mapping $\partial f : X \to X^{\star}$ given for every $x \in X$ by
\begin{equation}
\partial f(x) = \{x^{\star} \in X^{\star} : \langle x^{\star}, y - x \rangle \leq f(y) - f(x), \ \forall y \in X\}.
\end{equation}
A point $\hat{x}\in X$ is a minimizer of $f$ if and only if $0 \in \partial f(\hat{x})$. The sub-differential mapping $x \mapsto \partial f(x)$ has the property of monotonicity, i.e.,
\begin{equation}
\langle x_1^{\star} - x_2^{\star}, x_1 - x_2 \rangle \geq 0, \quad \forall x_1, x_2 \in X, \ \forall x_1^{\star} \in \partial f(x_1), \ x_2^{\star} \in \partial f(x_2).
\end{equation}
Let $g : X \to (-\infty, +\infty)$ be convex and G\^{a}teaux differentiable with the gradient operator $\nabla g$ being $(p - 1)$-H\"older-continuous on $X$ with $p \geq 2$, i.e., there exists a constant $L$ such that:
\begin{equation}
\|\nabla g(x) - \nabla g(y)\| \leq L\|x - y\|^{p-1}, \quad \forall x, y \in X.
\end{equation}
Our activation functions are defined using \textit{proximal operators}, sometimes called the proximity operator, associated to any given $f\in \Gamma_0(X)$ by
\begin{equation}
    \label{eq:prox_f__definition}
    \operatorname{prox}_f(x)\eqdef \operatorname{argmin}_{z\in X}\, f(z) + \frac{1}{2}\|z-x\|_X^2
\end{equation}
which is a well-defined Lipschitz (non-linear) monotone operator by; see e.g.~\cite[Chapter 24]{BauschkeCombettes_2017CABook}.
In the case where $f$ is additionally G\^{a}teaux differentiable, then we observe that
\begin{equation}
    \label{eq:proximal_equivalence}
    y = \operatorname{prox}_f(x) \quad \Longleftrightarrow \quad y = \left( \operatorname{id}_X + \nabla f \right)^{-1}(x),
\end{equation}
where $X \ni y \mapsto \nabla f(y) \in X$ is such that $\langle \nabla f(y), v \rangle_{X^{\star} \times X} = Df(y)(v)$ for all $v \in X$, and where the notation $\left( \operatorname{id}_X + \nabla f \right)^{-1}$ is defined in terms of a von Neumann series expansion.
Henceforth $X$ will be a separable \textit{infinite-dimensional} Hilbert space with a distinguished \textit{orthonormal basis} $(e_i)_{i=1}^{\infty}$.

\subsection{Our Generative Equilibrium Operator}
\label{s:Prelim__ss:Notation}
We would ideally like to use \textit{deep equilibrium layers} to introduce nonlinearity into our neural operator, via the proximal operator $\operatorname{prox}_f:X \to X$ associated to $f$ (see \eqref{eq:prox_f__definition}). In general, however, these operators may involve genuinely infinite-dimensional computations and thus may not be implementable on real-world machines. 
Using the projection operator $P_R$ any $f\in \Gamma_0(X)$ defines a rank $R$ multi-variate activation function $\sigma_f :X \rightarrow X$ sending any $x\in X$ to
\begin{equation}
\label{eq:activation_function}
\begin{aligned}
        \sigma_f(x) 
    \eqdef 
        \sum_{j=0}^{R-1}
        \,
            \langle 
                \operatorname{prox}_f(x)
            ,e_j\rangle \,  e_j.
\end{aligned}
\end{equation}
If infinitely many parameters were processable on our idealized computer, then by setting $R=\infty$, the activation function $\sigma_f$ would coincide with the proximal operator.

Importantly, unlike standard deep equilibrium layers for NOs, e.g.~\cite{marwah2023deep}, the map $\sigma_f$ is by construction \textit{implementable} using \textit{on a finitely parameterized} subspace of $X$; which need not be true for the proximal operator (equilibrium layer) in~\eqref{eq:prox_f__definition}.
Independently of the \textit{generative} aspect of our neural operator, our model diverges from the standard NO build in a number of subtle but key ways.  Most strikingly, we do not leverage a univariate activation, acting pointwise, but rather a \textit{structurally-dependent} multivariate activation function.  For every problem~\ref{eq:optim}, the (potentially) non-differentiable component of the objective function, namely $f$, includes a finite-rank operator which introduces non-linearity into our neural operator's updates.  

We additionally incorporate a gated residual connection, which allows information to be passed forward following the non-linear processing occurring at each layer.  At first glance, this is motivated by the empirically~\cite{borde2024scalable} and theoretically observed loss-landscape regularization effects of residual connections~\cite{riedi2023singular}.  However, as we will see in the proofs section, the connection runs deeper in our setting in connection with Forward-backward proximal splitting algorithms~\cite{combettes2005signal}.

\begin{definition}[Generative Equilibrium Operator]
\label{def:GEO}
Fix a rank $R\in \mathbb{N}_+$, a sampling level $M\in \mathbb{N}_+$, a depth $L\in \mathbb{N}_+$, a source of noise $\xi \in L^1(E_R)$, and some $f\in \Gamma_0(X)$. Then,
a Generative Equilibrium Operator with activation function $\sigma_f$ is a map $\mathcal{G}:\Omega\times C(X)\to E_R$ given for any $x\in X$ by
\allowdisplaybreaks
\begin{align}
\nonumber
        \mathcal{G}(\omega,g)
    & \eqdef 
        \big(
            A^{(L+1)} x^{(L+1)\downarrow:M}
        \big)^{\uparrow:M},
\\
\intertext{and iteratively for $l=0,\dots,L+1$ via}
\nonumber
        x^{(l+1)} 
    & \eqdef 
            \underbrace{
                \gamma^{(l)} 
                x^{(l)} 
            }_{\text{Skip Connection}}
        +
            \overbrace{
                (1-\gamma^{(l)})
            }^{\text{Gating}}
            \sigma_f\Big(
                    \underbrace{
                        A^{(l)}
                        x^{(l)}
                    }_{\text{Weights}}
                +
                    \big[
                        \overbrace{
                            B^{(l)}
                            \big(
                                \underbrace{
                                    g(x^{(l)} + x_m^{(l)})
                                }_{\text{Adaptive Sampling}}
                            \big)_{m=1}^M
                        }^{\text{g-Dependent weights}}
                    +
                        \underbrace{
                            b^{(l)}
                        }_{\text{Bias}}
                    \big]^{\uparrow:M}
            \Big),
    \\
\nonumber
            x^{(0)}
        & \eqdef 
            \xi(\omega),
\end{align}
\noindent
where $A^{(l)} \in \mathbb{R}^{R \times R}$ are \textit{weight matrices}, $B^{(l)} \in \mathbb{R \times M}$ are \textit{weight matrices}, and $b^{(l)}\in \mathbb{R}^R$ are \textit{bias vectors}, $\{x_{m}^{(l)}\}_{m,l=0}^{M,L}\subset X$ are \emph{sample points}, and $\gamma^{(l)}\in [0,1]$ are \emph{gating coefficients}, $l = 0,...,L$.
\end{definition}


\section{Main Guarantee}
\label{s:Main}

We begin by establishing the existence of a (nonlinear), continuous, approximate optimal selection operator, which we aim to approximate using our Generative Equilibrium Operator for~\eqref{eq:optim}. Note that, in general, a continuous optimal selector (corresponding to $\eta = 0$) may not exist. Moreover, even if a Borel-measurable selector does exist, it typically cannot be approximated by continuous objects such as our Generative Equilibrium Operator.
\hfill\\
Since we are only implementing an approximate solution operator, an approximation error is inevitable. Consequently, there is no issue in introducing an additional—but arbitrarily small—sub-optimality error in the solution operator in exchange for continuity, and hence, approximability. Of course, both sources of error can be asymptotically driven to zero, as is standard in approximation theory.
\hfill\\
Importantly, the near optimality is independent of the input in the class $\mathcal{X}_\lambda$ of inputs $g\in C^1(X)$ with uniformly bounded Fr\'{e}chet gradient defined by
\begin{equation}
\label{eq:Uniformly_FrechetGradientClass}
        \mathcal{X}_\lambda
    \eqdef 
        \Big\{
            g \in C^1(X) : \nabla g \mbox{ is convex and } \lambda\mbox{-Lipschitz}
        \Big\}
.
\end{equation}
\begin{proposition}[Existence of a randomized $\mathcal{O}(\eta)$-optimal selector]
\label{prop:Existence}
For every {\textit{approximate solution} parameter $\eta>0$} and any
$\lambda>0$ there exists an ``approximation solution'' operator $S_{\eta}:\Omega\times C^1(X)\to X$ satisfying: for every $\omega \in \Omega$, $S_{\eta}(\omega,\cdot):C^1(X)\to X$ is continuous and for every $g\in \mathcal{X}_\lambda$ it holds that
\[
        \ell_{f,g}(S_{\eta}(\omega,g))
        -
        \inf_{x\in X}\, \ell_{f,g}(x)
    \lesssim_{\omega}
         \eta,
\]
where $\ell_{f,g}$ is defined in \eqref{eq:optim} and $\lesssim_{\omega}$ hides a multiplicative constant depending only on $\omega$ and on $\lambda$ (thus independent of $g$ and of $\eta$).  
\end{proposition}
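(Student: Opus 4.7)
The approach is to construct $S_\eta$ by unrolling the Combettes-Wajs proximal forward-backward iteration~\cite{combettes2005signal} starting from the random seed $\xi(\omega)$, running a number of iterations tuned to $g$, and then reconciling the required $g$-adaptivity with continuity in $g$.

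First, for each $g \in \mathcal{X}_\lambda$, define the forward-backward operator $T_g : X \to X$ by $T_g(x) \eqdef \operatorname{prox}_{f/\lambda}\!\left(x - \lambda^{-1} \nabla g(x)\right)$. Standard convex-analytic facts from \cite[Ch.\ 24-25]{BauschkeCombettes_2017CABook} yield $\operatorname{Fix}(T_g) = \operatorname{argmin}_X \ell_{f,g}$ and firm non-expansiveness of $T_g$; moreover, since $\operatorname{prox}_{f/\lambda}$ is $1$-Lipschitz and $g \mapsto \nabla g$ is continuous from $(C^1(X), \tau)$ to $C(X,X)$ by~\eqref{eq:gradient_map}, the joint map $(x,g) \mapsto T_g(x)$ is continuous, and by composition so is every iterate $(x,g) \mapsto T_g^n(x)$ for $n \in \mathbb{N}$. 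Next, I would invoke the classical $O(1/n)$ descent rate of the forward-backward method with step-size $1/\lambda$ (see~\cite{Bredies_FBSplitting_2008,guan_2021_forward}): for any minimizer $x_g^\star$ of $\ell_{f,g}$,
\[
    \ell_{f,g}\!\left(T_g^n(\xi(\omega))\right) - \inf_X \ell_{f,g} \;\leq\; \frac{\lambda\,\|\xi(\omega) - x_g^\star\|^2}{2n}.
\]
Thus any integer $n \geq N(\omega,g,\eta) \eqdef \lceil \lambda\|\xi(\omega) - x_g^\star\|^2 / (2\eta) \rceil$ yields $\eta$-optimality.

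The count $N(\omega,g,\eta)$ is integer-valued and generally discontinuous in $g$, so $S_\eta(\omega,g) \eqdef T_g^{N(\omega,g,\eta)}(\xi(\omega))$ is not a continuous selector. To fix this, I would take a continuous real-valued majorant $\Phi$ of $g \mapsto \lambda\|\xi(\omega) - x_g^\star\|^2/(2\eta)$ and set
\[
    S_\eta(\omega,g) \eqdef \bigl(1 - t(g)\bigr)\, T_g^{\lfloor \Phi(g) \rfloor + 1}\bigl(\xi(\omega)\bigr) + t(g)\, T_g^{\lfloor \Phi(g) \rfloor + 2}\bigl(\xi(\omega)\bigr), \qquad t(g) \eqdef \Phi(g) - \lfloor \Phi(g) \rfloor.
\]
A direct check at integer crossings of $\Phi$ shows that $S_\eta(\omega,\cdot)$ is continuous on $(C^1(X), \tau)$, because at $\Phi(g)=n \in \mathbb{N}$ the interpolation collapses to $T_g^{n+1}(\xi(\omega))$ from both sides. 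Convexity of $\ell_{f,g}$ together with Jensen's inequality upgrades the per-iterate bound of Step~2 to $\ell_{f,g}(S_\eta(\omega,g)) - \inf_X \ell_{f,g} \lesssim_\omega \eta$, with the multiplicative constant depending on $\omega$ only through $\|\xi(\omega)\|$ and on $\lambda$ through the step-size choice.

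The main obstacle is constructing the continuous majorant $\Phi(g)$, because $g \mapsto x_g^\star$ may be multivalued and is generally not continuous in $\tau$. I would resolve this by either (i) appealing to a Michael-selection-type result applied to the $\eta$-argmin correspondence $g \mapsto \{x \in X : \ell_{f,g}(x) \leq \inf_X \ell_{f,g} + \eta\}$ after verifying its lower hemicontinuity (via $\Gamma$-convergence arguments for $\ell_{f,g}$ as $g$ varies in $\tau$) together with a paracompactness property of $\mathcal{X}_\lambda$; or (ii) replacing $\|\xi(\omega) - x_g^\star\|$ by a $\tau$-continuous proxy obtained from the optimality relation $-\nabla g(x_g^\star) \in \partial f(x_g^\star)$ combined with the $\lambda$-Lipschitzness of $\nabla g$ and the proper/convex structure of $f$. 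Either route produces a continuous approximate-selection operator achieving $\mathcal{O}(\eta)$ sub-optimality uniformly over $\mathcal{X}_\lambda$, as claimed.
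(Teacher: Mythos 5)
Your construction makes the iteration count $N(\omega,g,\eta)$ depend on $g$, which immediately creates the continuity obstacle you then spend the last paragraph trying (and failing) to resolve. The construction of the continuous majorant $\Phi$ is not actually given: you propose either a Michael-selection argument (which would require verifying lower hemicontinuity of the $\eta$-argmin correspondence and paracompactness of $\mathcal{X}_\lambda$ in the non-metrizable topology $\tau$, neither of which is established) or an unnamed ``$\tau$-continuous proxy'' via the optimality condition. Both are sketches of sketches, and neither is carried through. In the absence of a concrete $\Phi$, your definition of $S_\eta$ is not well-posed, so the proof is incomplete.

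The deeper issue is that the $g$-adaptivity is unnecessary, and recognizing this is the point of the proposition. The statement explicitly allows $\lesssim_\omega$ to hide a constant depending on $\omega$ and on $\lambda$ but \emph{not} on $g$. This means that once the $\mathcal{O}(1/L)$ descent rate of Lemma~\ref{lem:Splitting_Convergence} is established with a constant that is uniform over the class $\mathcal{X}_\lambda$ (which is precisely what the lemma asserts, the hidden constant being independent of $g$ once $\nabla g$ is $\lambda$-Lipschitz and the iterates are bounded), a \emph{fixed} number of iterations $L \approx \lceil 1/\eta \rceil$, chosen independently of $g$, already delivers $\eta$-optimality for \emph{every} $g \in \mathcal{X}_\lambda$ simultaneously. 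The paper's proof defines $S_\eta(\omega,g) \eqdef x_L^g$ with this fixed $L$ and $x_0 = \xi(\omega)$; continuity of $S_\eta(\omega,\cdot)$ is then immediate because it is a finite composition of maps $g \mapsto (1-\alpha_l)x + \alpha_l\operatorname{prox}_f(x - \lambda_l\nabla g(x))$, each continuous from $(C^1(X),\tau)$ to $X$ via the $1$-Lipschitzness of $\operatorname{prox}_f$ and the continuity of $g \mapsto \nabla g$ in \eqref{eq:gradient_map}. In short: you obtained the same per-iterate continuity and the same $\mathcal{O}(1/n)$ rate, but then fought a selection problem that disappears entirely once the iteration count is decoupled from $g$.
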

For any $r,\lambda\ge 0$, we consider the functions in $\mathcal{X}_\lambda(r)$ whose Fr\'{e}chet gradient is well-explained with few basis factors.  More precisely,
\begin{equation}
\label{eq:nice_gs}
    \mathcal{X}_\lambda(r)
\eqdef 
    \biggl\{
        g\in \mathcal{X}_\lambda:\,
            \sum_{i=R}^{\infty}
            \, 
            \big|(\partial_t g(x+te_i))|_{t=0}\big|^2
            \le r 2^{-R/2}
            \text{ for all $x \in X$ and $R \in \mathbb{N}$}
    \biggr\}
.
\end{equation}
The set $\mathcal{X}_\lambda(r)$ are a take on the exponentially ellipsoidal sets of~\cite{galimberti2022designing,alvarez2024neural}, which abstract the Fourier analytic characterization of smooth and rapidly decaying functions~\cite{neyt2025hermite} where the rapid decay conditions are on the function's Fr\'{e}chet gradient and not on the function itself.
Now that we know there exists a well-posed, continuous $\mathcal{O}(\eta)$-optimal solution operator for the family of convex optimization problems in~\eqref{eq:optim}, indexed by $g \in \mathcal{X}_\lambda(r)$ for any given $\eta>0 $ and $r,\lambda > 0$, we can meaningfully consider approximating them. 
\begin{theorem}
\label{thrm:Main_AproximateSelection}
For any $r,\lambda \ge 0$, and $f\in \Gamma_0(X)$, and any approximation error $\varepsilon>0$ there is a Generative Equilibrium Operator of rank $R\in \mathcal{O}(\log(1/\varepsilon))$, depth $L\in \mathcal{O}(\log(1/\varepsilon))$, and with $M\in \mathcal{O}(\log(1/\varepsilon))$ sample points satisfying
\begin{equation}
\label{eq:thrm:Main_AproximateSelection__Approx}
    \sup_{g\in \mathcal{X}_\lambda(r)}
    \,
        \big\|
                S_{\eta}(\omega,g)
            -
                \mathcal{G}(\omega,g)
        \big\|_X
    \lesssim_{\omega}
        \varepsilon
\qquad
    \mathbb{P}\mbox{-}a.s.
\end{equation}
where $\lesssim_{\omega}$ hides a multiplicative constant depending \textit{only on} the draw of $\omega\in \Omega$ and is
\textit{independent} of $\eta$, $\varepsilon$, and of any $g\in \mathcal{X}_\lambda(r)$.
\end{theorem}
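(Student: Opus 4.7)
The overall plan is to exploit the GEO architecture from Definition~\ref{def:GEO} to \emph{unroll} the proximal forward-backward (FB) splitting iteration of~\cite{combettes2005signal} for $\min_{x\in X}\,f(x)+g(x)$ directly onto the GEO's $L$ layers. Three sources of error must be controlled simultaneously: the finite-rank truncation from $X$ to $E_R$ (absorbed by $R$), the number of splitting iterations (absorbed by $L$), and the finite-difference reconstruction of $\nabla g$ from $g$-samples (absorbed by $M$). Each of these can be driven below the target accuracy $\varepsilon$ while scaling only logarithmically in $1/\varepsilon$, yielding the claimed $R,L,M\in\mathcal{O}(\log(1/\varepsilon))$.

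\textbf{Truncation and identification of one GEO layer with one FB step.} For $g\in\mathcal{X}_\lambda(r)$, the decay condition in~\eqref{eq:nice_gs} gives $\sum_{i\ge R}|\langle\nabla g(x),e_i\rangle|^2\le r\,2^{-R/2}$ uniformly in $x$, so restricting every iterate to $E_R$ introduces only $\mathcal{O}(r\,2^{-R/4})$ error, which is $\mathcal{O}(\varepsilon)$ once $R=\mathcal{O}(\log(1/\varepsilon))$. On $E_R$ the multivariate activation $\sigma_f$ in~\eqref{eq:activation_function} coincides with $\operatorname{prox}_{\tau f}$ composed with the inclusion (after absorbing the step size $\tau$ into $A^{(l)}$), so one GEO layer reproduces a relaxed FB step $y^{(l+1)}=\gamma^{(l)}y^{(l)}+(1-\gamma^{(l)})\operatorname{prox}_{\tau f}(y^{(l)}-\tau\nabla g(y^{(l)}))$, with the gating $\gamma^{(l)}$ carrying the relaxation and $A^{(l)}$ essentially the identity on $E_R$.

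\textbf{Sample-based gradient reconstruction.} Since the GEO accesses $g$ only through $M$ evaluations per layer, I would fix deterministic offsets $\{x_m^{(l)}\}_{m=1}^M\subset E_R$ forming a finite-difference stencil spanning $E_R$ (e.g.\ $x_m^{(l)}=\pm h\,e_j$ for $j=0,\dots,R-1$), and design $B^{(l)}$ and $b^{(l)}$ so that $B^{(l)}(g(x^{(l)}+x_m^{(l)}))_{m=1}^M+b^{(l)}$ reproduces $-\tau(\nabla g(x^{(l)}))^{\downarrow:R}$ up to a classical finite-difference error polynomial in the stencil width $h$, exploiting the Lipschitz/H\"older regularity of $\nabla g$. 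Choosing $M=\mathcal{O}(R)=\mathcal{O}(\log(1/\varepsilon))$ and $h$ a small polynomial in $\varepsilon$ then keeps the per-step gradient error below $\mathcal{O}(\varepsilon)$ uniformly over $\mathcal{X}_\lambda(r)$.

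\textbf{Linear convergence, stitching, and the main obstacle.} The crux, and the step I expect to be the main obstacle, is securing a \emph{linear} contraction rate $\|y^{(l)}-S_\eta(\omega,g)\|_X\le C(1-c)^l$ with $c>0$ depending only on $\lambda$ and on the slack parameter $\eta$ from Proposition~\ref{prop:Existence}. Under only Lipschitz-gradient plus convexity, classical FB splitting yields only an $\mathcal{O}(1/l)$ rate, which would force $L=\mathcal{O}(1/\varepsilon)$ and destroy the logarithmic claim; the log-depth must therefore come from the (Moreau/Tikhonov-type) regularization implicitly baked into the construction of $S_\eta$ in Proposition~\ref{prop:Existence}, which renders the composite objective strongly convex with modulus tied to $\eta$. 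Once this contraction is in hand, a triangle inequality combining (i) the truncation error, (ii) the per-step gradient error geometrically summed by the $1-c$ contraction, and (iii) the residual FB error after $L=\mathcal{O}(\log(1/\varepsilon))$ steps delivers $\|\mathcal{G}(\omega,g)-S_\eta(\omega,g)\|_X\lesssim_\omega\varepsilon$ uniformly over $\mathcal{X}_\lambda(r)$, with the random initialization $\xi(\omega)$ contributing only to the implicit prefactor. A secondary subtlety is uniformity of all constants across the infinite class $\mathcal{X}_\lambda(r)$, which I would handle via the effective compactness of $\mathcal{X}_\lambda(r)$ in the topology induced by~\eqref{eq:nice_gs}.
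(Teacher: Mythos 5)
Your first two paragraphs capture the right high-level plan (unroll a proximal FB step per GEO layer, absorb the infinite-dimensional part via rank-$R$ projection and the $\nabla g$ access via a finite-difference stencil, and pay only logarithmically in $R$ and $M$ thanks to the exponential tail decay built into $\mathcal{X}_\lambda(r)$). That part broadly mirrors what the paper does: Lemma~\ref{lem:SimpleGradBound} supplies the finite-difference gradient bound, Lemma~\ref{lem:BoundedFinDiff} its Lipschitz control, Lemma~\ref{lem:ProjFixed} the projection error, and the GEO layer identifications in~\eqref{eq:GEO_LAYER}--\eqref{eq:GEO_LAYER__specified} match your "one GEO layer $\equiv$ one relaxed FB step" picture.

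The gap is in the third paragraph, and it is fatal to the argument as you have set it up. You treat $S_\eta(\omega,g)$ as (a strongly-convex regularization of) the true minimizer of $\ell_{f,g}$ and therefore believe the only way to reach log-depth is to manufacture a linear contraction rate of the FB iterates towards that minimizer, appealing to a "Moreau/Tikhonov-type regularization implicitly baked into" Proposition~\ref{prop:Existence}. No such regularization exists in the paper, and none is needed: the explicit definition in~\eqref{eq:Sdelta_apprx_sol_map} makes $S_\eta(\omega,g)=x_L^g$ simply the \emph{$L$-th idealized FB iterate} started from $x_0=\xi(\omega)$, with $L$ fixed by $\eta$. Theorem~\ref{thrm:Main_AproximateSelection} therefore does not ask the GEO to converge to a fixed point at all; it asks the GEO's $L$-layer forward pass $z_L$ to track the exact $L$-step rollout $x_L$. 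The comparison is iterate-to-iterate, not iterate-to-minimizer, so the sublinear $\mathcal{O}(1/l)$ rate of generic FB splitting is irrelevant, and no strong convexity is invoked anywhere.

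The geometric factor $2^{-L}$ in the depth bound has a completely different origin than the contraction you are looking for: it comes from the \emph{decay condition on the relaxation weights} $\alpha_l\lesssim 2^{-l-L}$ in~\eqref{eq:decay_condition}, which geometrically suppresses the per-layer discrepancy introduced by replacing $\operatorname{prox}_f$ with $\sigma_f=P_R\circ\operatorname{prox}_f$ and $\nabla g$ with $\Delta_\delta^R(g)$. This yields $\|\hat x_L-z_L\|\le 2^{1-L}$ (Lemma~\ref{lem:Discretization_and_Projection}) and, combined with Lemma~\ref{lem:Discretization_bound} and the coupling $\delta=2^{-L}/R$, the estimate $\|x_L-z_L\|\lesssim 2^{-L}+2^{-R}$ in Proposition~\ref{prop:Main_Result__TechnicalForm}. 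Setting $R\asymp L\asymp\log(1/\varepsilon)$ finishes the proof. Without noticing that the target is a finite iterate and that the $\alpha_l$ schedule is what buys the geometric decay, your proposal cannot produce the required bound — the "strongly convex modulus tied to $\eta$" you hypothesize is not present and you have not constructed it, so the linear-rate argument you lean on is left unestablished.
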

If the function $f$ in~\eqref{eq:optim} is Lipschitz, then the Generative Equilibrium Operator $\mathcal{G}$ from Theorem~\ref{thrm:Main_AproximateSelection} approximately solves the splitting problem in~\eqref{eq:optim} for any suitably regular input $g$.
\begin{theorem}[Simultaneous Approximately Optimal Splitting]
\label{thrm:Main_NearOptimization}
Fix $\lambda_f,\lambda_g,\lambda \ge 0$, consider the setting of Theorem~\ref{thrm:Main_NearOptimization}, and let $\mathcal{G}$ be a GEO satisfying~\eqref{eq:thrm:Main_AproximateSelection__Approx}.  
If $f$ is additionally $\lambda_f$-Lipschitz, then for any $g\in \mathcal{X}_\lambda(r)$ with $\lambda$-Lipschitz Fr\'{e}chet gradient we additionally have
\begin{equation}
\label{eq:thrm:Main_AproximateSelection__LossOptima}
    {
        \ell_{f,g}(\mathcal{G}(g)) 
    - 
        \inf_{x\in X}\, \ell_{f,g}(x)
    }
    \lesssim_{\omega}
        \varepsilon + \eta 
\end{equation}
where $\lesssim_{\omega}$ hides a constant depending \textit{only on} the draw of $\omega\in \Omega$ and is independent of $\eta$, $\varepsilon$, and of any $g\in \mathcal{X}_\lambda(r)$.   
\end{theorem}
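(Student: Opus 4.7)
The plan is to reduce the optimization-error bound \eqref{eq:thrm:Main_AproximateSelection__LossOptima} to the approximation-error bound \eqref{eq:thrm:Main_AproximateSelection__Approx} of Theorem~\ref{thrm:Main_AproximateSelection} via a triangle-inequality decomposition, then to exploit the local Lipschitz behaviour of $\ell_{f,g}$ to convert $\|x-y\|_X$-errors into $\ell_{f,g}$-errors. Concretely, I would write
\[
    \ell_{f,g}(\mathcal{G}(\omega,g)) - \inf_{x\in X}\ell_{f,g}(x)
    =
    \underbrace{\bigl[\ell_{f,g}(\mathcal{G}(\omega,g)) - \ell_{f,g}(S_\eta(\omega,g))\bigr]}_{\text{(I)}}
    +
    \underbrace{\bigl[\ell_{f,g}(S_\eta(\omega,g)) - \inf_{x\in X}\ell_{f,g}(x)\bigr]}_{\text{(II)}},
\]
where (II) is controlled directly by Proposition~\ref{prop:Existence}, yielding $\text{(II)} \lesssim_\omega \eta$ uniformly over $g \in \mathcal{X}_\lambda \supseteq \mathcal{X}_\lambda(r)$.

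The main work is on term (I). First I would observe that $f$ is globally $\lambda_f$-Lipschitz by hypothesis, so it remains to show that $g$ is uniformly Lipschitz on $X$ with a constant depending only on $r$. This is the key technical step and follows by instantiating the decay condition in \eqref{eq:nice_gs} at $R=0$: for every $x\in X$,
\[
    \|\nabla g(x)\|_X^2
    =
    \sum_{i=0}^{\infty} \bigl|(\partial_t g(x+te_i))|_{t=0}\bigr|^2
    \le r\,2^{0}
    =
    r,
\]
so $g$ is $\sqrt{r}$-Lipschitz on the whole of $X$. Consequently $\ell_{f,g} = f+g$ is $(\lambda_f+\sqrt{r})$-Lipschitz on $X$, uniformly in $g\in \mathcal{X}_\lambda(r)$.

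With that Lipschitz estimate in hand, I would apply Theorem~\ref{thrm:Main_AproximateSelection} to bound
\[
    \text{(I)}
    \le
    (\lambda_f+\sqrt{r})\,
    \bigl\|\mathcal{G}(\omega,g) - S_\eta(\omega,g)\bigr\|_X
    \lesssim_\omega
    (\lambda_f+\sqrt{r})\,\varepsilon,
\]
and the constant $\lambda_f+\sqrt{r}$ can be absorbed into the $\lesssim_\omega$ convention since it is independent of $g\in\mathcal{X}_\lambda(r)$, $\eta$, and $\varepsilon$. Combining (I) and (II) yields \eqref{eq:thrm:Main_AproximateSelection__LossOptima}.

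The only potential obstacle is the global-versus-local Lipschitz distinction: if instead of $g\in\mathcal{X}_\lambda(r)$ one only assumed the $\lambda$-Lipschitz gradient condition (so that $g$ is a priori only locally Lipschitz with linear growth), then one would additionally need an $\omega$-dependent a priori bound on $\|S_\eta(\omega,g)\|_X$ and $\|\mathcal{G}(\omega,g)\|_X$, and restrict the Lipschitz estimate for $g$ to a ball containing both points, absorbing the radius into the $\lesssim_\omega$ constant. Under the stronger assumption $g\in \mathcal{X}_\lambda(r)$ used in the statement, however, the uniform bound $\|\nabla g\|_X\le \sqrt{r}$ derived above bypasses this difficulty entirely, and no further work is required.
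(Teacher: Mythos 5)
Your proof is correct and follows essentially the same decomposition as the paper: split the optimality gap at $S_\eta(\omega,g)$, bound the second piece by the $\eta$-optimality of $S_\eta$ (Proposition~\ref{prop:Existence}), and bound the first piece by the Lipschitz continuity of $\ell_{f,g}$ together with the approximation bound~\eqref{eq:thrm:Main_AproximateSelection__Approx}. Internally the paper runs the same argument via Proposition~\ref{prop:Main_Result__TechnicalForm}, decomposing at the exact FB iterate $x_L$, which coincides with $S_\eta(\omega,g)$ by construction, so the two decompositions are identical up to notation.

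One point worth highlighting: the paper's proof invokes a $\lambda_g$-Lipschitz hypothesis on $g$ (it is an explicit hypothesis in Proposition~\ref{prop:Main_Result__TechnicalForm} and is repeated verbatim at the end of the combined proof of Theorems~\ref{thrm:Main_AproximateSelection} and~\ref{thrm:Main_NearOptimization}), but this hypothesis does not actually appear in the statement of Theorem~\ref{thrm:Main_NearOptimization}, which only requires $g\in\mathcal{X}_\lambda(r)$ with $\lambda$-Lipschitz gradient. Your observation that evaluating the decay condition~\eqref{eq:nice_gs} at $R=0$ yields $\|\nabla g(x)\|_X^2\le r$ for all $x\in X$, hence $g$ is globally $\sqrt r$-Lipschitz uniformly over $\mathcal{X}_\lambda(r)$, closes that gap cleanly and makes the stated hypotheses self-contained (one can take $\lambda_g=\sqrt r$). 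This is a small but genuine improvement on the paper's exposition.
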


\paragraph{Why Approximate the $\eta$-Solution Operator Instead of the True Solution Operator?}
A subtle but important point is that the continuity of the $\eta$-approximate solution operator $S_\eta$ allows it to be approximated by continuous objects, such as our GEO models in Theorem~\ref{thrm:Main_AproximateSelection}, even when the true solution operator may not be approximable in this way. Crucially, since $S_\eta$ always achieves an $\eta$-optimal loss and is continuous, it admits such approximations with only an additional additive error of at most $\eta$ in the final loss (Theorem~\ref{thrm:Main_NearOptimization}). Note that $\eta$ may be chosen arbitrarily small.

\section{Numerical Experiments}
\label{s:Applications}

We illustrate in four different numerical examples how Generative Equilibrium Operators can be implemented on a computer to learn convex splitting problems of the form~\eqref{eq:optim}\footnote{All numerical experiments have been implemented in \texttt{Python} using the \texttt{Tensorflow} package and were executed on a high-performing computing (HPC) cluster provided by the Digital Research Alliance of Canada. The code can be found under the following link: \url{https://github.com/psc25/GenerativeEquilibriumOperator}.}.

\subsection{Learning the solution of a parametric family of non-linear PDEs}
\label{s:Applications__ss:PDEs}

Before applying the forward-backward proximal splitting algorithm to non-linear partial differential equations (PDEs), we first recall that the proximal operator can be understood as implicit Euler discretization of a gradient flow differential equation. More precisely, for a Hilbert space $X$ and a proper, lower semicontinuous, and convex function $f: X \rightarrow (-\infty,\infty]$, we consider the differential
\begin{equation}
    \label{EqGradFlow}
    \partial_t y(t) \in -\partial f(y(t)), \quad t \in [0,\infty).
\end{equation}
The solution $y: [0,\infty) \rightarrow X$ of \eqref{EqGradFlow} is called the gradient flow of $f: X \rightarrow (-\infty,\infty]$. If $f: X \rightarrow (-\infty,\infty]$ is differentiable, then an implicit Euler discretization of \eqref{EqGradFlow} along a partition $0 < t_0 < t_1 < ...$ leads to
\begin{equation*}
    \frac{y(t_{k+1})-y(t_k)}{t_{k+1}-t_k} \approx -\nabla f(y(t_{k+1})), \quad\quad k \in \mathbb{N}.
\end{equation*}
Hence, we observe that
\begin{equation*}
    y(t_{k+1}) = (\operatorname{id}_X + (t_{k+1}-t_k) \nabla f)^{-1}(y(t_k)) = \operatorname{prox}_{(t_{k+1}-t_k) f}(y(t_k)), \quad\quad k \in \mathbb{N}.
\end{equation*}
Thus, the proximal minimization algorithm coincides with the implicit Euler method for numerically solving the gradient flow differential equation \eqref{EqGradFlow}. 
Now, for $X \eqdef L^2(U) \eqdef L^2(U,\mathcal{L}(U),du)$ with $U \subseteq \mathbb{R}^d$ and a given initial condition $y_0 \in L^2(U)$, we consider a non-linear partial differential equation (PDE) of the form
\begin{equation}
    \label{EqDefPDE}
    \frac{\partial y}{\partial t}(t,u) + (\mathcal{A}^* \mathcal{A} y(t,\cdot))(u) + q(y(t,u)) = 0, \quad\quad  (t,u) \in (0,T) \times U, 
\end{equation}
with initial condition $y(0,u) = y_0(u)$, $u \in U$, where $\mathcal{A}: \operatorname{dom}(\mathcal{A}) \subseteq L^2(U) \rightarrow L^2(U)$ is a (possibly unbounded) linear operator\footnote{However, this is not an issue as we can simply consider a bounded linear extension thereof by the Benyamini-Lindenstrauss theorem; see e.g.~\cite[Theorem 1.12]{BenyaminiLindenstrauss_2000_NonlinearFunctionalAnalysis}; which we may somewhat abusively also denote by $\mathcal{A}$.} with adjoint $\mathcal{A}^*$, and where $q: \mathbb{R} \rightarrow \mathbb{R}$ is a continuous non-linear function with $q \circ x \in L^2(U)$ for all $x \in L^2(U)$, whose antiderivative $Q: \mathbb{R} \rightarrow \mathbb{R}$ is convex and satisfies $Q \circ x \in L^1(U)$ for all $x \in L^2(U)$. Then, by applying an explicit Euler step to $\mathcal{A}^* \mathcal{A} y(t_k,\cdot)$ and an implicit Euler step to $q \circ y(t_k,\cdot)$ along a partition $0 < t_0 < t_1 < ...$, we obtain that
\begin{equation*}
    \frac{y(t_{k+1},\cdot) - y(t_k,\cdot)}{t_{k+1} - t_k} \approx -\mathcal{A}^* \mathcal{A} y(t_k,\cdot) - q(y(t_{k+1},\cdot)),
\end{equation*}
which is known as forward-backward splitting of PDEs (see also \cite{BeckCheridito_2021_DeepSplitting,Neufeld_2025_RandomDeepSplitting}). Moreover, we define the function $f(x) = \int_U Q(x(u)) du$ and $g(x) \eqdef \frac{1}{2} \Vert \mathcal{A} x \Vert_{L^2(U)}^2$ satisfying for every $v \in L^2(U)$ that
\begin{equation*}
    Df(x)(v) = \frac{d}{dh}\Big\vert_{h=0} \left( \int_U Q((x+hv)(u)) du \right) = \int_U q(x(u)) v(u) du = \langle q \circ x, v \rangle_{L^2(U)}
\end{equation*}
and
\begin{equation*}
    Dg(x)(v) = \frac{d}{dh}\Big\vert_{h=0} \left( \frac{1}{2} \Vert \mathcal{A}(x+hv) \Vert^2 \right) = \langle \mathcal{A} x, \mathcal{A} v \rangle_{L^2(U)} = \langle \mathcal{A}^* \mathcal{A} x, v \rangle_{L^2(U)},
\end{equation*}
which shows that $\nabla f(x) = q \circ x$ and $\nabla g(x) = \mathcal{A}^* \mathcal{A} x$. Hence, by using \eqref{eq:proximal_equivalence}, we observe that
\begin{equation*}
    \begin{aligned}
        y(t_{k+1},\cdot) & = (\operatorname{id}_X + (t_{k+1}-t_k) \nabla f)^{-1}(\operatorname{id}_X - (t_{k+1}-t_k) \mathcal{A}^* \mathcal{A}) y(t_k,\cdot) \\
        & = \operatorname{prox}_{(t_{k+1}-t_k) f}\left( y(t_k,\cdot) - (t_{k+1}-t_k) \mathcal{A}^* \mathcal{A} y(t_k,\cdot) \right),
    \end{aligned}
\end{equation*}
which shows that the proximal operator can be applied to learn the solution of the PDE \eqref{EqDefPDE}. 

\begin{example}
    For the Hilbert space $X \eqdef L^2(\mathbb{R}) \eqdef L^2(\mathbb{R},\mathcal{L}(\mathbb{R}),du)$ and $T,\nu > 0$, we consider the PDE \eqref{EqDefPDE} of linear reaction–diffusion type with $\mathcal{A}^* \mathcal{A} x \eqdef -\nu x''$ and $q(x(u)) = \min(x(u),0)$, i.e.
    \begin{equation}
        \label{EqDefPDE2}
        \frac{\partial y}{\partial t}(t,u) - \nu \frac{\partial^2 y}{\partial u^2}(t,u) +\frac{1}{2} \min(y(t,u),0) = 0, \quad\quad  (t,u) \in (0,T) \times \mathbb{R},
    \end{equation}
    with initial condition $y_0(y) = 5 u \, e^{-u^2}$, $u \in U$, where $\mathcal{A} x \eqdef \sqrt{\nu} x'$ with $\mathcal{A}^* x = -\mathcal{A} x$ due to integration by parts, and where $Q(s) \eqdef \mathds{1}_{(-\infty,0)}(s) \frac{s^2}{4}$. The proximal operator of $f(x) \eqdef \int_\mathbb{R} Q(x(u)) du$ is given by $\operatorname{prox}_f(x) = \left( u \mapsto x(u) - \frac{1}{4} \min(x(u),0) \right)$, whereas $g_\nu(x) \eqdef \frac{1}{2} \Vert \mathcal{A} x \Vert_{L^2(\mathbb{R})}^2 = \frac{\nu}{2} \Vert x' \Vert_{L^2(\mathbb{R})}^2$. In this setting, we aim to learn the operator
    \begin{equation}
        \label{EqDefParPDE}
        \mathbb{R} \ni \nu \quad \mapsto \quad \mathcal{S}(g_\nu) \eqdef y(T,\cdot) = \argmin_{x \in X} \left( f(x) + g_\nu(x) \right) \in L^2(\mathbb{R}),
    \end{equation}
    by a Generative Equilibrium Operator $\mathcal{G}$ of rank $R = 8$, depth $L = 10$, and width $M = 20$. To this end, we choose the Hermite-Gaussian functions $(e_j)_{j \in \mathbb{N}}$ as basis of $L^2(\mathbb{R})$, which are defined by $e_j(u) \eqdef \frac{H_j(u)}{(2^j j!)^{1/2}} \frac{e^{-u^2/2}}{\pi^{1/4}}$, for $u \in \mathbb{R}$, where $(H_j)_{j \in \mathbb{N}}$ are the physicist's Hermite polynomials (see \cite[Equation~22.2.14]{Abramowitz_1970_Handbook}). Moreover, we apply the Adam algorithm over 20000 epochs with learning rate $10^{-4}$ to train the Generative Equilibrium Operator on a training set consisting of $400$ randomly initialized parameters $\nu_1,...,\nu_{400} \in [0.01,0.4)$. In addition, we evaluate its generalization performance every 250-th epoch on a test set consisting of $100$ randomly initialized parameters $\nu_{401},...,\nu_{500} \in [0.01,0.4)$. Hereby, the reference solution $y(T,\cdot) \eqdef \mathcal{S}(g_\nu)$ of the non-linear PDE~\eqref{EqDefPDE2} is approximated by using a Multilevel Picard (MLP) algorithm (see, e.g., \cite{E_2019_MLP,E_2021_MLP}). The results are reported in Figure~\ref{FigNonLinearPDE}.
\end{example}

\begin{figure}[ht]
    \begin{minipage}[t]{0.49\textwidth}
        \centering
        \includegraphics[width=1.0\linewidth]{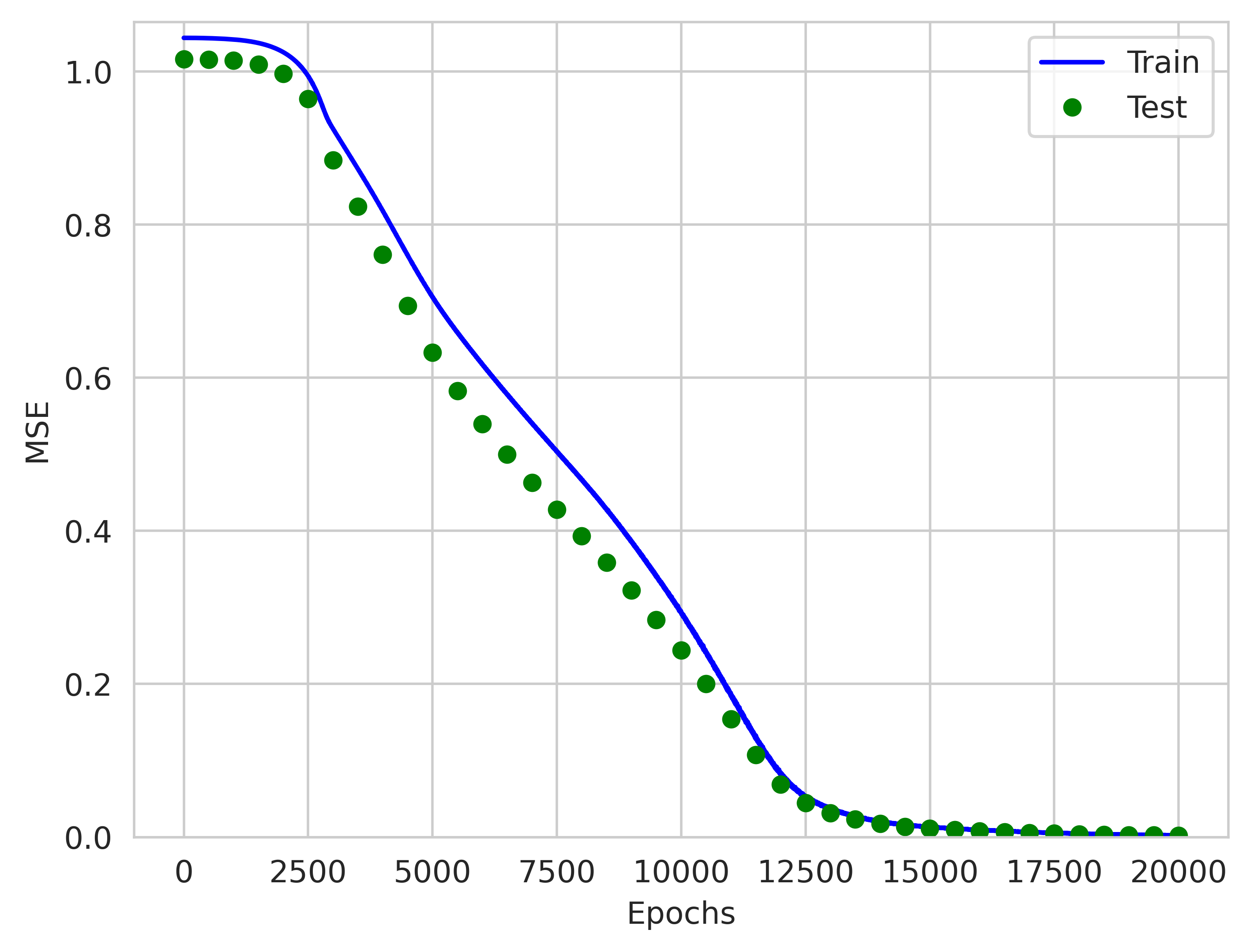}

        \subcaption{Learning performance}
    \end{minipage}
    \begin{minipage}[t]{0.49\textwidth}
        \centering
        \includegraphics[width=1.0\linewidth]{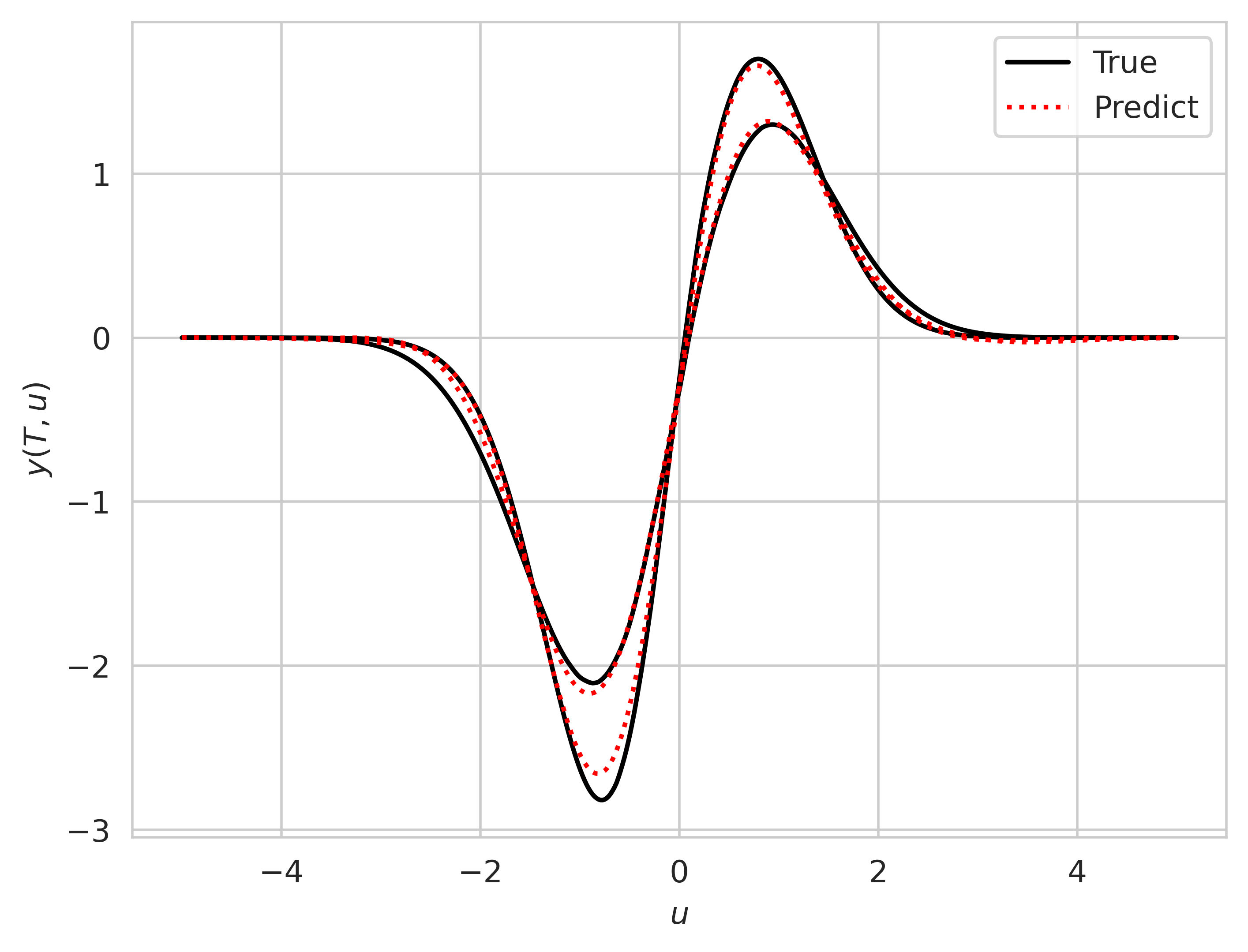}

        \subcaption{Solution of the PDE~\eqref{EqDefPDE2} for two $\nu_k$ of the test set}
    \end{minipage}
    \caption{Learning the map \eqref{EqDefParPDE} returning the solution of the parametric PDE~\eqref{EqDefPDE2} by a Generative Equilibrium Operator~$\mathcal{G}$. In (a), the learning performance is displayed in terms of the mean squared error (MSE) $\frac{1}{\vert K \vert} \sum_{k \in K} \Vert \mathcal{S}(g_{\nu_k}) - \mathcal{G}(g_{\nu_k}) \Vert^2$ on the training set (label ``Train'') and test set (label ``Test''). In (b), the predicted solution $\mathcal{G}(g_{\nu_k})$ (label ``Predict'') is compared to the true solution $y(T,\cdot) = \mathcal{S}(g_{\nu_k})$ (label ``True'') for two $\nu_k$ of the test set.}
    \label{FigNonLinearPDE}
\end{figure}

\subsection{Stochastic optimal control}
\label{s:Applications__ss:Control}

In this section, we apply the proximal learning framework to solve the stochastic optimal control problem. For $T > 0$, a filtered probability space $(\Omega,\mathcal{A},\mathbb{F},\mathbb{P})$ with filtration $\mathbb{F} \eqdef (\mathcal{F}_t)_{t \in [0,T]}$ satisfying the usual conditions, and an $\mathbb{F}$-adapted processes $x: [0,T] \times \Omega \rightarrow \mathbb{R}^n$ with $\mathbb{E}\big[\int_0^T \Vert x_t \Vert^2 dt \big] < \infty$, we assume that $y: [0,T] \times \Omega \rightarrow \mathbb{R}^d$ is a unique strong solution of the SDE
\begin{equation*}
    dy_t = \mu(t,y_t,x_t) dt + \sigma(t,y_t,x_t) dW_t, \quad\quad t \in [0,T],
\end{equation*}
where $y_0 \in \mathbb{R}^d$, $\mu: [0,T] \times \mathbb{R}^d \times \mathbb{R}^n \rightarrow \mathbb{R}^d$ and $\sigma: [0,T] \times \mathbb{R}^d \times \mathbb{R}^n \rightarrow \mathbb{R}^{d \times d}$ are sufficiently regular functions, and where $W$ is a $d$-dimensional Brownian motion.   
We denote by $X$ the Hilbert space of $\mathbb{F}$-adapted processes $x: [0,T] \times \Omega \rightarrow \mathbb{R}^n$ with $\Vert x \Vert_X \eqdef \mathbb{E}\big[\int_0^T \Vert x_t \Vert^2 dt \big] < \infty$. Besides using $f: X \rightarrow (-\infty,\infty]$ to implement some constraints, we minimize the objective function $g: X \rightarrow \mathbb{R}$ given by
\begin{equation*}
    g(x) = \mathbb{E}\left[ \int_0^T (-c)(t,y_t,x_t) dt + (-u)(y_T) \right],
\end{equation*}
which is equivalent to expected utility maximization from consumption (with function $c: [0,T] \times \mathbb{R}^d \times \mathbb{R}^n \rightarrow \mathbb{R}$) and from terminal wealth (with function $u: \mathbb{R}^d \rightarrow \mathbb{R}$). Under regularity assumptions on $\mu$, $\sigma$, $c$, and $u$, the corresponding value function satisfies a Hamilton-Jacobi-Bellman equation (see \cite{Fleming_1975_DetStochOptimalControl,Soner_2005_StochasticOptimalControl,Touzi_2014_OptimalStochasticControl} for details).

\begin{example}
    We consider Merton's optimal investment problem over a finite time horizon $T > 0$. For $r,\sigma > 0$ and $\mu \in \mathbb{R}$, we model the stock price $S$ and the ``risk-less'' bond price $B$ by
    \begin{equation*}
        dS_t = S_t (\mu dt + \sigma dW_t), \quad\quad dB_t = B_t r dt, \quad\quad t \in [0,T],
    \end{equation*}
    with initial values $S_0 > 0$ and $B_0 = 1$, where $W$ is a $d$-dimensional Brownian motion. Moreover, we assume that $\mathbb{F}$ is the $\mathbb{P}$-completion of the filtration generated by $W$. In this case, if $x \eqdef (x_t)_{t \in [0,T]}$ denotes the value invested into the stock, then the wealth process $y \eqdef (y_t)_{t \in [0,T]}$ of the corresponding self-financing trading strategy satisfies
    \begin{equation*}
        \frac{y_t}{B_t} = y_0 + \int_0^t \frac{x_s}{B_s} ((\mu - r) ds + \sigma dW_s), \quad\quad t \in [0,T],
    \end{equation*}
    for some initial value $y_0 \eqdef 1$ (see \cite[Equation~3.1]{Karatzas_1998_MethodsMathFinance}). In addition, we define the objective functions
    \begin{equation*}
        f(x) = 
        \begin{cases}
            0, & \text{if } x_t(\omega) \in [0,\infty) \text{ for all } (t,\omega) \in [0,T] \times \Omega \\
            \infty, & \text{otherwise},
        \end{cases}
        \quad\quad\quad g_u(x) = \mathbb{E}\left[ (-u)(y_T) \right],
    \end{equation*}
    which corresponds to utility maximization from terminal wealth (with utility function $u$) under the constraint that $x_t(\omega) \in [0,\infty)$ for all $(t,\omega) \in [0,T] \times \Omega$.  
    The proximal operator of $f: X \rightarrow (-\infty,\infty]$ is given by
    \begin{equation*}
        \operatorname{prox}_f(x) = \operatorname{proj}_{[0,\infty)}(x(\cdot)),
    \end{equation*}
    for all $x \in X$. Furthermore, by using the market price of risk $\lambda \eqdef \frac{\mu-r}{\sigma} \in \mathbb{R}$, we define the process $Z_t \eqdef \exp\big( -\lambda W_t - \frac{\lambda^2}{2} t \big)$, $t \in [0,T]$ and conclude that $(S_t/B_t)_{t \in [0,T]}$ is by Girsanov's theorem a martingale under the equivalent measure $\mathbb{Q} \sim \mathbb{P}$ with density $\frac{d\mathbb{Q}}{d\mathbb{P}} \eqdef Z_T$. 
    We can apply \cite[Theorem~3.7.6]{Karatzas_1998_MethodsMathFinance} to obtain that the optimal portfolio $x \eqdef (x_t)_{t \in [0,T]}$ with respect to the above utility maximization problem is given by
    \begin{equation*}
        x_t = \frac{\psi_t}{\sigma H_t} + \frac{\lambda}{\sigma H_t} \mathbb{E}\left[ H_T \xi \vert \mathcal{F}_t \right], \quad\quad t \in [0,T],
    \end{equation*}
    where $H \eqdef (H_t)_{t \in [0,T]} \eqdef (Z_t/B_t)_{t \in [0,T]}$, where $\xi \eqdef I(\mathcal{Y}(y_0) H_T)$ with $I$ being a left-inverse of $u'$ and $\mathcal{Y}$ being a right-inverse of $\mathcal{X}(y) \eqdef \mathbb{E}[H_T I(y H_T)]$, and where $\psi \eqdef (\psi_t)_{t \in [0,T]}$ satisfies $y_0 + \int_0^t \psi_s dW_s = \mathbb{E}[H_T \xi \vert \mathcal{F}_t]$ for all $t \in [0,T]$. In Table~\ref{TabOptimalPortfolio}, we compute the optimal portfolios for some utility functions.

    \begin{table}[ht]
        \begin{subtable}[t]{\textwidth}
            \begin{tabular}{l|l|l|l|l|}
                $u(x)$ & $I(y)$ & $\mathcal{X}(y)$ & $\mathcal{Y}(y_0)$ & $\xi$ \\
                \hline
                $\begin{cases}
                    \frac{(x-x_0)^{1-\eta}}{1-\eta}, & \eta \neq 1, \\
                    \ln(x-x_0), & \eta = 1.
                \end{cases}$ & $\frac{1}{y^{1/\eta}} + x_0$ & $\frac{1}{y^{1/\eta}} \mathbb{E}\left[ H_T^{1-1/\eta} \right] - \frac{x_0}{B_T}$ & $\frac{\mathbb{E}\left[ H_T^{1-1/\eta} \right]^\eta}{\left( y_0 - x_0/B_T \right)^\eta}$ & $\frac{y_0 - x_0/B_T}{\mathbb{E}\left[ H_T^{1-1/\eta} \right] H_T^{1/\eta}} + x_0$ \\
                \hline
            \end{tabular}
        \end{subtable}

        \vspace{0.2cm}

        \begin{subtable}[t]{\textwidth}
            \begin{tabular}{l|l|l|l|l|}
                $\mathbb{E}[H_T \xi \vert \mathcal{F}_t]$ & $\psi_t$ & $x_t$ \\
                \hline
                $\left( y_0 - \frac{x_0}{B_T} \right) \frac{Z_t^{1-1/\eta}}{\exp\left( \frac{\lambda^2}{2} \frac{1-\eta}{\eta^2} t \right)} + \frac{x_0 Z_t}{B_T}$ & $-\lambda \left( y_0 - \frac{x_0}{B_T} \right) \frac{(1-1/\eta)Z_t^{1-1/\eta}}{\exp\left( \frac{\lambda^2}{2} \frac{1-\eta}{\eta^2} t \right)} - \lambda \frac{x_0 Z_t}{B_T}$ & $\frac{\mu-r}{\sigma^2 \eta} \frac{(y_0 - x_0/B_T) Z_t^{1-1/\eta}}{H_t \exp\left( \frac{\lambda^2}{2} \frac{1-\eta}{\eta^2} t \right)}$ \\
                \hline
            \end{tabular}
        \end{subtable}

        \caption{Computation of optimal portfolios for power and logarithmic utility functions given by $u(x) \eqdef \frac{(x-x_0)^{1-\eta}}{1-\eta}$ if $\eta \in (0,\infty) \setminus \lbrace 1 \rbrace$ and $u(x) \eqdef \ln(x-x_0)$ if $\eta = 1$, where $x_0 \in \mathbb{R}$ is the reference point.}
        \label{TabOptimalPortfolio}
    \end{table}

    Now, we consider the closed vector subspace $X \subseteq L^2([0,T] \times \Omega,\mathcal{B}([0,T]) \otimes \mathcal{A},dt \otimes d\mathbb{P})$ of $W$-Markovian processes, i.e.~$\mathbb{F}$-predictable processes $x \eqdef (x_t)_{t \in [0,T]}$ such that $x_t$ is $\sigma(W_t)$-measurable, for all $t \in [0,T]$. In this setting, we learn the solution operator of the utility maximization problem
    \begin{equation}
        \label{EqUtilityMax}
        \left\lbrace u: D \subseteq \mathbb{R} \rightarrow \mathbb{R} \text{ is concave} \right\rbrace \ni u \quad \mapsto \quad \mathcal{S}(g_u) \eqdef \argmin_{x \in X \atop x_t \geq 0} \mathbb{E}\left[ (-u)(y_T) \right] = \argmax_{x \in X \atop x_t \geq 0} \mathbb{E}\left[ u(y_T) \right] \in X
    \end{equation}
    by a Generative Equilibrium Operator of rank $R = 10$, depth $L = 10$, and width $M = 40$. To this end, we choose the non-orthogonal basis $(e_{j_1,j_2})_{j_1,j_2 \in \mathbb{N}}$ of $X$ defined by $e_{j_1,j_2}(t) \eqdef t^{j_1} W_t^{j_2}$, $t \in [0,T]$, whence the coefficients of any $x \in X$ with respect to $(e_{j_1,j_2})_{j_1,j_2 \in \mathbb{N}}$ can be computed with the help of the Gram matrix (see the code). Moreover, we apply the Adam algorithm over 5000 epochs with learning rate $5 \cdot 10^{-5}$ and batchsize $100$ to train the Generative Equilibrium Operator on a training set consisting of 400 utility functions $u_k(y) \eqdef \frac{(y-x_{0,k})^{1-\eta_k}}{1-\eta_k}$, $k = 1,...,400$. In addition, we evaluate its generalization performance every 125-th epoch on a test set consisting of 100 utility functions $u_k(y) \eqdef \frac{(y-x_{0,k})^{1-\eta_k}}{1-\eta_k}$, $k = 401,...,500$. Hereby, the risk aversion parameters $\eta_1,...,\eta_{500} \in [0.25,0.75)$ and the reference points $x_{0,1},...,x_{0,500} \in [0,\infty)$ are randomly initialized. The results are reported in Figure~\ref{FigUtilityMax}.
\end{example}

\begin{figure}[ht]
    \begin{minipage}[c]{0.49\textwidth}
        \centering
        \includegraphics[width=0.99\linewidth]{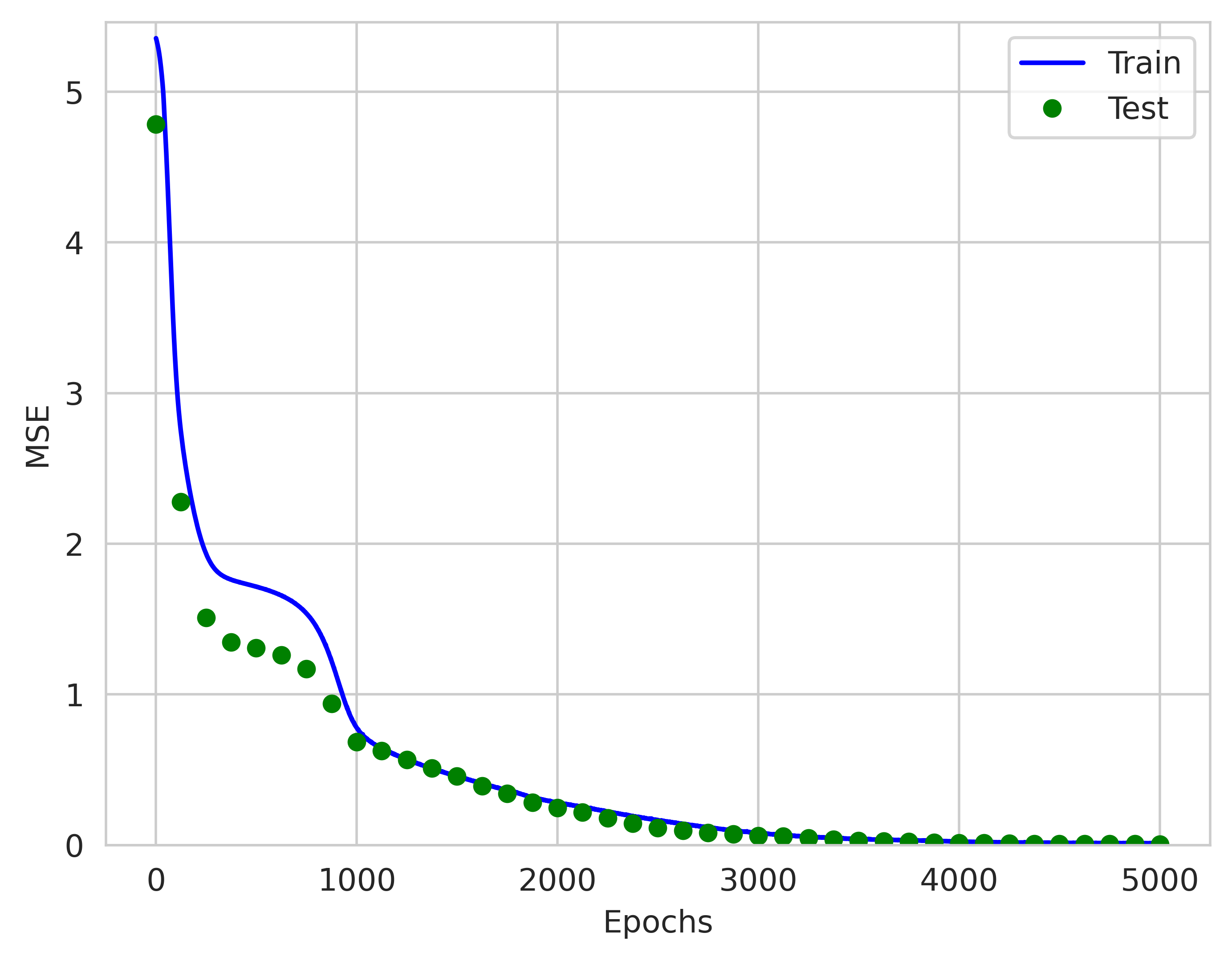}

        \subcaption{Learning performance}
    \end{minipage}
    \begin{minipage}[c]{0.49\textwidth}
        \centering
        \includegraphics[width=1.0\linewidth]{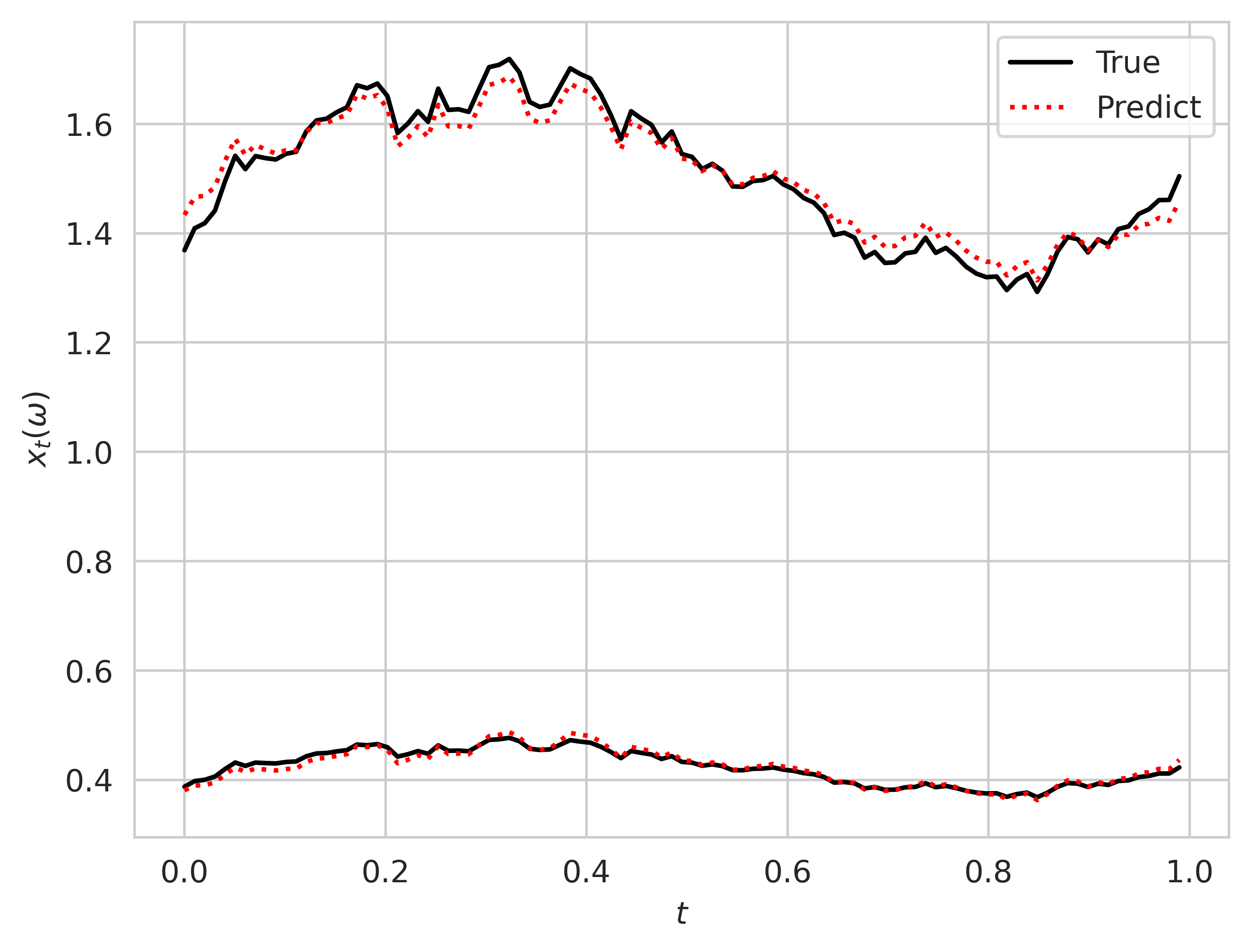}
        
        \subcaption{Optimal portfolio $x$ for two $\eta_k$ of the test set and some fixed $\omega \in \Omega$}
    \end{minipage}
    \caption{Learning the solution operator $\mathcal{S}$ of the utility maximization problem~\eqref{EqUtilityMax} by a Generative Equilibrium Operator~$\mathcal{G}$. In (a), the learning performance is displayed in terms of the mean squared error (MSE) $\frac{1}{\vert K \vert} \sum_{k \in K} \Vert \mathcal{S}(g_{u_k}) - \mathcal{G}(g_{u_k}) \Vert^2$ on the training set (label ``Train'') and test set (label ``Test''). In (b), the predicted solution $\mathcal{G}(g_{u_k})$ (label ``Predict'') is compared to the true solution $\mathcal{S}(g_{u_k})$ (label ``True'') for two $\eta_k$ of the test set and some fixed $\omega \in \Omega$.}
    \label{FigUtilityMax}
\end{figure}

\subsection{Quadratic hedging with liquidity constraint}
\label{s:Applications__ss:Finance}

In an incomplete financial market model, we learn the pricing/hedging operator that returns for a given financial derivative an approximation of the optimal hedging strategy in the sense of quadratic hedging under an additional liquidity constraint. For $T > 0$, a filtered probability space $(\Omega,\mathcal{A},\mathbb{F},\mathbb{P})$ with filtration $\mathbb{F} \eqdef (\mathcal{F}_t)_{t \in [0,T]}$ satisfying the usual conditions, and a continuous strictly positive semimartingale $S \eqdef (S_t)_{t \in [0,T]}$ with decomposition $S_t = S_0 + A_t + M_t$, $t \in [0,T]$, into a process of finite variation $A \eqdef (A_t)_{t \in [0,T]}$ and a local martingale $M \eqdef (M_t)_{t \in [0,T]}$ with $A_0 = M_0 = 0$, we consider the Hilbert space $\mathbb{R} \oplus L^2(S)$ of tuples $(x,\theta) \in \mathbb{R} \oplus L^2(S)$ equipped with the inner product $\langle (x,\theta), (y,\vartheta) \rangle_{\mathbb{R} \oplus L^2(S)} = x y + \langle \theta,\vartheta \rangle_{L^2(S)}$, where $L^2(S)$ denotes the Hilbert space of $\mathbb{F}$-predictable processes $\theta \eqdef (\theta_t)_{t \in [0,T]}$ such that $\mathbb{E}\big[ \big( \int_0^T \vert \theta_t dA_t \vert \big)^2 \big]^{1/2} + \mathbb{E}\big[ \int_0^T \theta_t^2 d\langle M \rangle_t \big]^{1/2} < \infty$, equipped with the inner product $\langle \theta,\vartheta \rangle_{L^2(S)} = \mathbb{E}\big[ \big( \int_0^T \vert \theta_t dA_t \vert \big) \big( \int_0^T \vert \vartheta_t dA_t \vert \big) \big] + \mathbb{E}\big[ \int_0^T \theta_t \vartheta_t d\langle M \rangle_t \big]$. For a given liquidity constraint $C > 0$ and a financial derivative $H \in L^2(\mathbb{P})$, we aim to minimize the hedging error
\begin{equation*}
    \inf_{(x,\theta) \in \mathbb{R} \oplus L^2(S) \atop x \in [0,C]} \mathbb{E}\left[ \left( H - x - \int_0^T \theta_t dS_t \right)^2 \right] = \inf_{(x,\theta) \in \mathbb{R} \oplus L^2(S)} \left( f(x,\theta) + g_H(x,\theta) \right),
\end{equation*}
where $f(x,\theta) \eqdef 0$ if $x \in [0,C]$, and $f(x,\theta) \eqdef \infty$ otherwise, and $g_H(x,\theta) \eqdef \mathbb{E}\big[ \big( H - x - \int_0^T \theta_t dS_t \big)^2 \big]$. Hereby, we observe that the proximal operator of $f: \mathbb{R} \oplus L^2(S) \rightarrow (-\infty,\infty]$ is given by
\begin{equation*}
    \operatorname{prox}_f(x,\theta) = \argmin_{(y,\vartheta) \in \mathbb{R} \oplus L^2(S)} \left( f(y,\vartheta) + \Vert (y,\vartheta) - (x,\theta) \Vert_{\mathbb{R} \oplus L^2(S)}^2 \right) = \left( \operatorname{proj}_{[0,C]}(x), (\theta_t)_{t \in [0,T]} \right),
\end{equation*}
where $\operatorname{proj}_{[0,C]}(s) \eqdef \argmin_{t \in [0,C]} \vert s-t \vert = \max(\min(s,C),0)$.

\begin{example}
    We consider the Heston model with stock price $S \eqdef (S_t)_{t \in [0,T]}$ and stochastic volatility $V \eqdef (V_t)_{t \in [0,T]}$ following the SDEs
    \begin{equation*}
        \begin{aligned}
            dS_t & = \sqrt{V_t} S_t dW^1_t, \\
            dV_t & = \kappa (\theta - V_t) dt + \sigma \sqrt{V_t} dW^2_t,
        \end{aligned}
    \end{equation*}
    where $\kappa,\theta,\sigma > 0$ and $d\langle W^1, W^2 \rangle_t = \rho dt$ for some $\rho \in [-1,1]$. Moreover, we assume that $\mathbb{F}$ is the $\mathbb{P}$-completion of the filtration generated by $W^1$ and $W^2$. Hereby, we restrict ourselves to the closed vector subspace $X \subseteq \mathbb{R} \oplus L^2(S)$ of tuples $(x,\theta) \in \mathbb{R} \oplus L^2(S)$ such that $\theta$ is $(S,V)$-Markovian in the sense that $x_t$ is $\sigma(S_t,V_t)$-measurable, for all $t \in [0,T]$. In this setting, we learn the operator returning the price and optimal trading strategy in the sense of quadratic hedging by a Generative Equilibrium Operator, i.e.
    \begin{equation}
        \label{EqQuadHedg}
        L^2(\mathbb{P}) \ni H \quad \mapsto \quad \mathcal{S}(g_H) \eqdef \argmin_{(x,\theta) \in X \atop x \in [0,C]} \mathbb{E}\left[ \left( H - x - \int_0^T \theta_t dS_t \right)^2 \right] \in X
    \end{equation}
    by a Generative Equilibrium Operator $\mathcal{G}$ of rank $R = 11$, depth $L = 10$, and width $M = 40$. We thus choose the non-orthogonal basis $(e_{j_1,j_2,j_3})_{j_1,j_2,j_3 \in \mathbb{N}}$ of $X$ given by $e_{j_1,j_2,j_3}(t) \eqdef t^{j_1} \ln(S_t/S_0)^{j_2} (V_t/V_0)^{j_3}$, $t \in [0,T]$. Moreover, we apply the Adam algorithm over 5000 epochs with learning rate $5 \cdot 10^{-5}$ and batchsize $100$ to train the Generative Equilibrium Operator on a training set consisting of 200 European call options $H_k \eqdef \max(S_T-K_k,0)$, $k = 1,...,200$, and 200 European put options $H_k \eqdef \max(K_k-S_T,0)$, $k = 201,...,400$. In addition, we evaluate its generalization performance every 125-th epoch on a test set consisting of 50 European gap call options $H_k \eqdef (S_T-K_{k,1}) \mathds{1}_{\lbrace S_T \geq K_{k,2} \rbrace}$, $k = 401,...,450$, and 50 European gap put options $H_k \eqdef (K_{k,2}-S_T) \mathds{1}_{\lbrace S_T \leq K_{k,1} \rbrace}$, $k = 451,...,500$. Hereby, the parameters $K_1,...,K_{400} \geq 0$ and $K_{401,1},K_{401,2},...,K_{500,1},K_{500,2} \geq 0$ are randomly initialized with $K_{k,1} \leq K_{k,2}$, whereas the optimal prices and hedging strategies $(x_k,\theta_k) = \mathcal{S}(g_{H_k})$ are computed with the help of the minimal equivalent local martingale measure (see \cite{Schweizer_1999_QuadraticHedging,Pham_2000_QuadraticHedging}) and the Fourier arguments in \cite{Carr_1999_OptionFourier}. The results are reported in Figure~\ref{FigQuadHedg}.
\end{example}

\begin{figure}[h]
    \begin{minipage}[t]{0.49\textwidth}
        \includegraphics[width=0.95\linewidth]{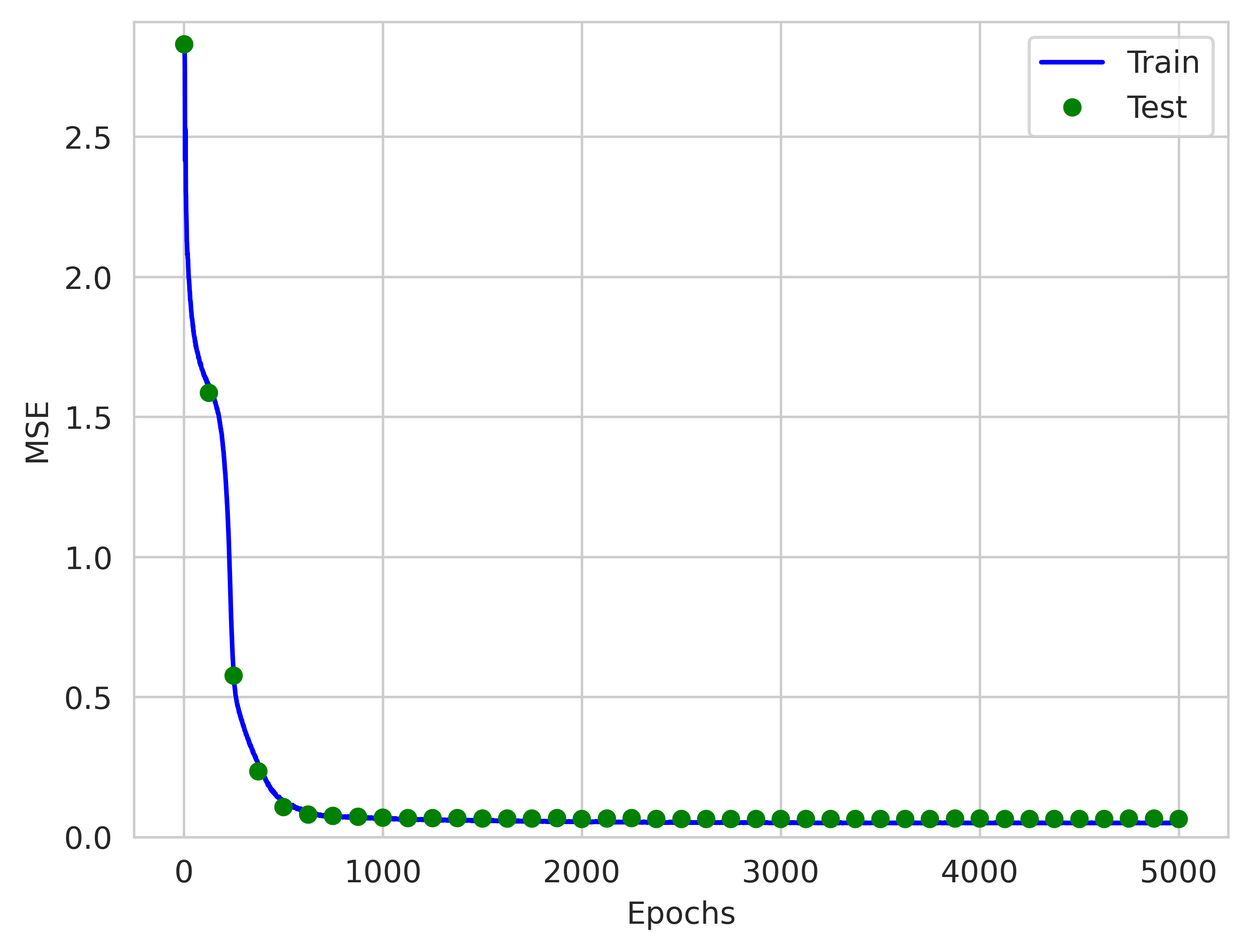}

        \subcaption{Learning performance}
    \end{minipage}
    \begin{minipage}[t]{0.49\textwidth}
        \includegraphics[width=1.0\linewidth]{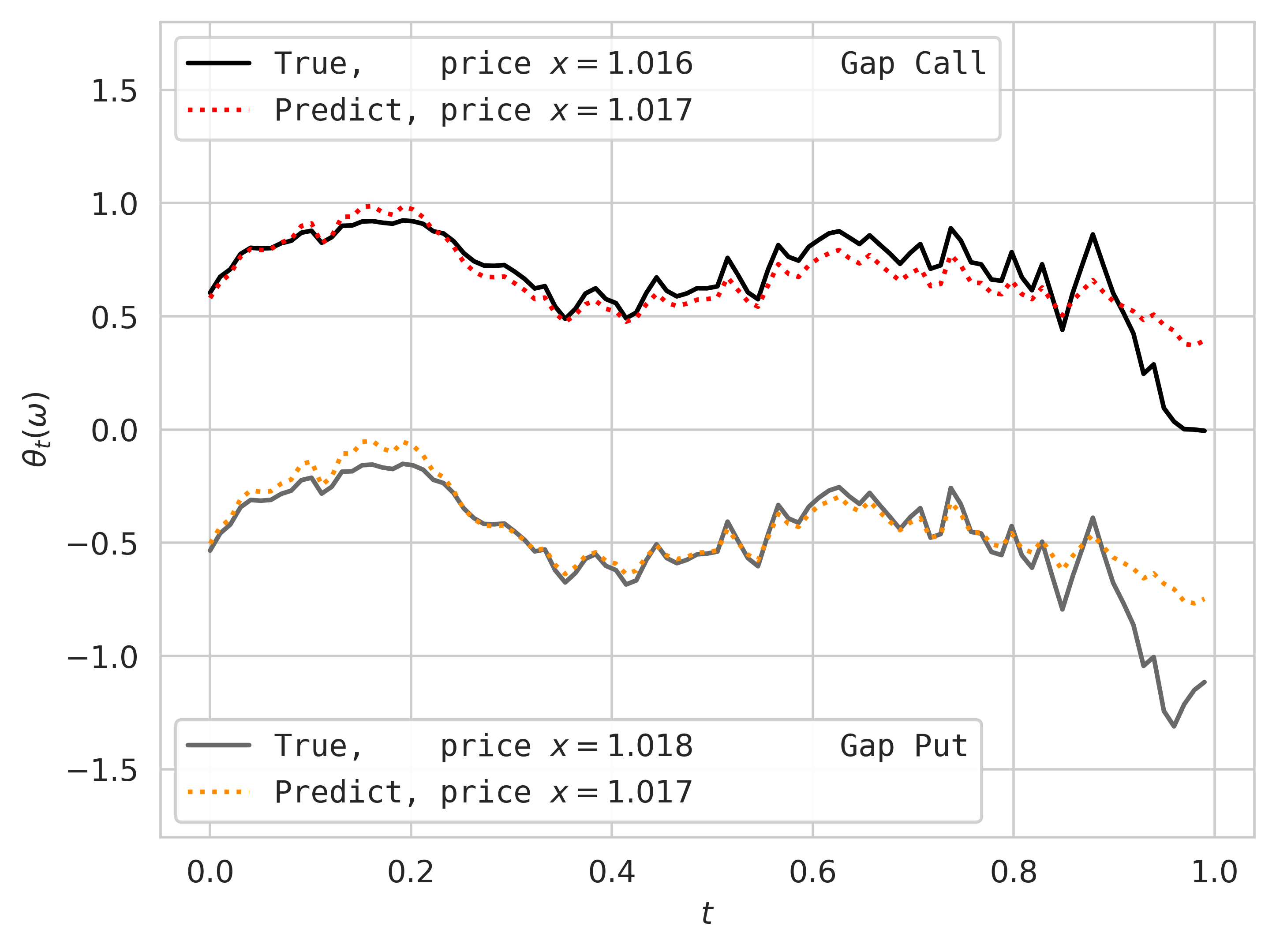}

        \subcaption{Optimal tuple $(x,\theta) \in X$ for one gap call option and one gap put option of the test set and some fixed $\omega \in \Omega$}        
    \end{minipage}
    \caption{Learning the pricing/hedging operator $\mathcal{S}$ in \eqref{EqQuadHedg} by a Generative Equilibrium Operator~$\mathcal{G}$. In (a), the learning performance is displayed in terms of the mean squared error (MSE) $\frac{1}{\vert K \vert} \sum_{k \in K} \Vert \mathcal{S}(g_{H_k}) - \mathcal{G}(g_{H_k}) \Vert^2$ on the training set (label ``Train'') and test set (label ``Test''). In (b), the predicted solution $\mathcal{G}(g_{H_k})$ (label ``Predict'') is compared to the true solution $\mathcal{S}(g_{H_k})$ (label ``True'') for one $k$ of the test set and some $\omega \in \Omega$.}
    \label{FigQuadHedg}
\end{figure}

\section{Conclusion}
\label{s:Conclusion}
This paper helps close the gap between neural operator theory which suggests that solving infinite-dimensional problems requires exponentially large models, 
whereas reasonably sized neural operators have consistently succeeded in experimental practice. We address this gap by designing a generative neural operator, the GEO model, whose internal architecture efficiently encodes proximal forward-backward splitting algorithms at scale. Our main results (Theorems~\ref{thrm:Main_AproximateSelection} and~\ref{thrm:Main_NearOptimization}) demonstrate that this architecture does not suffer from the theory–practice gap: it can uniformly approximate the (approximate) solution operator for infinite families of convex splitting problems of the form~\eqref{eq:optim}, to arbitrary accuracy, with complexity that scales only logarithmically in the residual approximation error.

To illustrate the broad scope of our results, we show that solution operators for a broad class of problems—including parametric nonlinear PDEs (Section~\ref{s:Applications__ss:PDEs}), stochastic optimal control problems (Section~\ref{s:Applications__ss:Control}), and dynamic hedging problems under liquidity constraints in mathematical finance (Section~\ref{s:Applications__ss:Finance}), can all be cast in the form~\eqref{eq:optim}, and are therefore tractable using small GEOs. Each of these theoretical claims is also validated through empirical experiments. 

\section*{Acknowledgements and funding}
\label{s:Acknow}
A.~Kratsios gratefully acknowledges financial support from the Natural Sciences and Engineering Research Council of Canada (NSERC) through Discovery Grant Nos.\ RGPIN-2023-04482 and DGECR-2023-00230.
A.~Neufeld gratefully acknowledges financial support by the MOE AcRF Tier 2 Grant MOE-2EP20222-0013.
We further acknowledge that resources used in the preparation of this research were provided, in part, by the Province of Ontario, the Government of Canada through CIFAR, and the industry sponsors of the Vector Institute\footnote{\href{https://vectorinstitute.ai/partnerships/current-partners/}{https://vectorinstitute.ai/partnerships/current-partners/}}.
This work was made possible by the facilities of the Shared Hierarchical Academic Research Computing Network (\href{https://www.sharcnet.ca}{SHARCNET}) and the Digital Research Alliance of Canada (\href{https://alliancecan.ca/en}{https://alliancecan.ca/en}).

\section{{Proof of \texorpdfstring{Theorem~\ref{thrm:Main_AproximateSelection}}{The Main Theorem}}}
\label{s:Proofs}
We now prove our main results in a sequence of steps.

\subsection{Step $0$ - Idealized Forward-Backwards Splitting Scheme}
\label{s:Proofs___sss:Idealized}
We first recall and reformat the main results of~\cite{guan_2021_forward} to our setting.  Briefly, these produce a quadratically (or in some cases linearly) convergent sequence in $X$, converging to an optimizer of $\ell_{f,g}$, as defined in~\eqref{eq:optim}; for any suitable $f$ and $g$.

\begin{lemma}[Convergence of the Proximal FB-Splitting Scheme with Identity Perturbations]
\label{lem:Splitting_Convergence}
Suppose that $f\in \Gamma_0(X)$, $g\in C^1(X)$, and $\nabla g$ is $\lambda$-Lipschitz for some $\lambda>1$.  Fix sequences $(\lambda_l)_{l=0}^{\infty}$ and $(\alpha_l)_{l=0}^{\infty}$ in $(0,1]$ and in $(0,1/\lambda)$, respectively.
For any $x_0\in X$, and each $l\in \mathbb{N}_+$ define the sequence $(x_l)_{l=0}^{\infty}$ in $X$ by the FB proximal splitting iteration
\begin{equation}
\label{eq:FB_Splitting}
        x_{l+1} 
    \eqdef  
        (1-\alpha_l)\, 
        x_l 
        +
        \alpha_l
        \operatorname{prox}_f\big(x_l- \lambda_l \nabla g(x_l)\big)
.
\end{equation}
If
for all $l \in \mathbb{N}_+$ we have $\lambda_l\le 1/(\lambda\alpha_l)$ 
and if 
$\sup_{l\in \mathbb{N}}\, \|x_l\|<\infty$ 
then: for every time-horizon $L\in \mathbb{N}_+$ 
\begin{equation}
\label{eq:Linear_convergence_Guo}
        \ell_{f,g}(x_L) - \inf_{x\in X}\ell_{f,g}(x)
    \lesssim
        \frac{1}{L}
\end{equation}
where $\lesssim$ hides an (positive) absolute constant (independent of $f$).
\hfill\\
Moreover, $(x_l)_{l=0}^{\infty}$ converges weakly in $X$ to a minimizer of $\ell_{f,g}$.
\end{lemma}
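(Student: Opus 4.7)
The lemma is essentially a restatement of the convergence theorems in~\cite{guan_2021_forward} for the (relaxed) forward-backward splitting iteration, so my strategy is to verify that the hypotheses imposed above are exactly those under which Guan's results apply, and to extract the quantitative bound~\eqref{eq:Linear_convergence_Guo} from the classical energy estimates for composite convex minimization.

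First, I would invoke the Baillon-Haddad theorem: since $g \in C^1(X)$ is convex with $\lambda$-Lipschitz gradient, $\nabla g$ is automatically $(1/\lambda)$-cocoercive. Consequently, for each $l$, the forward step $\operatorname{id}_X - \lambda_l \nabla g$ is $(\lambda_l\lambda/2)$-averaged nonexpansive whenever $\lambda_l \in (0, 2/\lambda)$, and composing with the firmly nonexpansive operator $\operatorname{prox}_f$ yields a forward-backward operator $T_l \eqdef \operatorname{prox}_f \circ (\operatorname{id}_X - \lambda_l \nabla g)$ which is $\beta_l$-averaged with an explicit $\beta_l \in (0,1)$. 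The relaxed iterate $x_{l+1} = (1-\alpha_l)\,x_l + \alpha_l T_l(x_l)$ therefore remains averaged nonexpansive: the hypothesis $\lambda_l \le 1/(\lambda\alpha_l)$ is precisely the condition ensuring that the relaxation factor $\alpha_l$ stays below the admissible threshold $1/\beta_l$.

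Next, I would derive Fej\'{e}r monotonicity of $(x_l)_{l=0}^{\infty}$ with respect to the set of minimizers $S \eqdef \argmin_{x\in X}\ell_{f,g}(x)$. Cocoercivity of $\nabla g$ combined with the firm nonexpansiveness of $\operatorname{prox}_f$ yields, for any $x^*\in S$, the standard one-step estimate
\[
        \|x_{l+1}-x^*\|_X^2
    \le
        \|x_l - x^*\|_X^2 - c_l \,\|T_l(x_l)-x_l\|_X^2
\]
with $c_l\ge 0$ under the stated step-size constraints. Summing and using the boundedness hypothesis $\sup_l\|x_l\|_X < \infty$ then yields $\|T_l(x_l)-x_l\|_X\to 0$. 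To obtain~\eqref{eq:Linear_convergence_Guo}, I would combine the descent lemma for the $\lambda$-smooth convex function $g$ with the subgradient inequality characterising $\operatorname{prox}_f$ to produce a per-iteration bound of the form
\[
        \ell_{f,g}(x_{l+1}) - \ell_{f,g}(x^*)
    \le
        \tfrac{1}{2\alpha_l\lambda_l}\bigl(\|x_l-x^*\|_X^2 - \|x_{l+1}-x^*\|_X^2\bigr);
\]
telescoping from $l=0$ to $L-1$, dividing by $L$, and passing to the running minimum (or average) gives the claimed $\mathcal{O}(1/L)$ decay.

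For weak convergence to a minimizer of $\ell_{f,g}$, I would invoke Opial's lemma (cf.~\cite{BauschkeCombettes_2017CABook}): the Fej\'{e}r monotonicity above makes $\|x_l - x^*\|_X$ convergent for every $x^* \in S$, while $\|T_l(x_l)-x_l\|_X\to 0$ combined with demiclosedness of $\operatorname{id}_X - T_l$ at $0$ guarantees that every weak cluster point of $(x_l)_{l=0}^{\infty}$ lies in $S$; these two ingredients together yield a unique weak limit in $S$. \emph{The main obstacle} is purely bookkeeping: aligning the step-size conditions $\lambda_l \in (0,1]$, $\alpha_l\in (0,1/\lambda)$, and $\lambda_l\le 1/(\lambda\alpha_l)$ with the (slightly differently normalised) admissibility thresholds used in~\cite{guan_2021_forward}, and carefully tracking the absolute constant absorbed into $\lesssim$ in~\eqref{eq:Linear_convergence_Guo}; once this reconciliation is done, the remaining calculations are standard monotone-operator manipulations.
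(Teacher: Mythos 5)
Your high-level strategy matches the paper's exactly: both treat the lemma as an instance of the results of Guan--Li (\cite{guan_2021_forward}) and reduce the work to checking that the stated step-size hypotheses fit their admissibility template. The paper's actual proof is even more minimal than you anticipate. It does \emph{not} re-derive any of the averaged-operator or Fej\'{e}r-monotonicity machinery you sketch; it only observes that in Hilbert space the normalized duality map $J_2$ is the identity, rewrites $\operatorname{prox}_f(x-\lambda\nabla g(x))$ as $\operatorname{argmin}_y \tfrac12\|y-x\|^2 + \lambda\langle y,\nabla g(x)+J_2(z_l)\rangle + f(y)$ with the perturbation $z_l\equiv 0$ so that the iteration matches the cited template verbatim, and then invokes \cite[Proposition 2(iii)]{guan_2021_forward} for both the $\mathcal{O}(1/L)$ rate and the weak convergence, with no further argument. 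Your additional unpacking (Baillon--Haddad to obtain cocoercivity, the averaged composition $T_l = \operatorname{prox}_f\circ(\operatorname{id}-\lambda_l\nabla g)$, Fej\'{e}r monotonicity, and Opial's lemma) is indeed the standard underlying machinery behind results of Guan's type, so it is a valid, more self-contained route; the paper simply outsources all of it to the citation.

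One point in your sketch that would need care if you pursued the self-contained route: your telescoped per-iteration bound yields control on $\min_{l\le L}\ell_{f,g}(x_l)$ or the Ces\`{a}ro average of the gap, whereas the lemma as stated asserts the bound for the \emph{last iterate} $\ell_{f,g}(x_L)$. You would need to also establish monotone descent $\ell_{f,g}(x_{l+1})\le\ell_{f,g}(x_l)$ (which holds for relaxed FB under the stated step-size regime via the descent lemma for $g$ combined with the three-point/prox inequality for $f$) to upgrade the running-minimum bound to a last-iterate bound. This is exactly the kind of ``bookkeeping'' you flag at the end, so it is not a conceptual gap, but it is a genuine step that the paper avoids entirely by citing Guan's Proposition 2(iii), which already gives the last-iterate rate.
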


Before continuing with the proof of Theorem~\ref{thrm:Main_AproximateSelection}, we take a moment to establish Proposition~\ref{prop:Existence}.
For every $\eta>0$ define the operator $S_{\eta}:\Omega\times C^1(X)\to X$ sending any $\omega\in \Omega$ and $g\in C^1(X)$ to 
\begin{equation}
\label{eq:Sdelta_apprx_sol_map}
    S_{\eta}(\omega,g) \eqdef x_L^g
\end{equation}
where $x\eqdef \xi(\omega)$, $\xi$ is as in Definition~\ref{def:GEO}, and $x_L^g$ is as in~\eqref{eq:FB_Splitting} with $x_0\eqdef x=\xi(\omega)$.    
This explicitly defines the operator in Proposition~\ref{prop:Existence}; note that its initial condition is intentionally coupled to that of the Generative Equilibrium Operator (meaning that their iterations always start at the same initial condition).  
\begin{proof}[{Proof of Proposition~\ref{prop:Existence}}]
Set $L\eqdef \lceil \eta\rceil$.
Then, the conclusion follows from Lemma~\ref{lem:Splitting_Convergence} as well as the definitions of $\mathcal{X}_\lambda$ and of $S_{\eta}$.
\hfill\\
To establish the continuity of $S_{\eta}(\omega,\cdot)$, it is enough to show the continuity of one of its iterates. Indeed, since the proximal operators $\operatorname{prox}_f$ is $1$-Lipschitz and the map $C^1(X)\ni g\mapsto \nabla g\in C(X,X)$ is continuous with respect to the semi-norm topology $\tau$ on $C^1(X)$ (see above \eqref{eq:gradient_map}), we have for every $g,\tilde{g}\in C^1(X,\mathbb{R})$ that
\allowdisplaybreaks
\begin{align*}
&
    \Big\|
            \big(
                (1-\alpha_l)x 
                + 
                \alpha_l 
                \operatorname{prox}_f(x-\lambda_l \nabla g(x))
            \big)
        -
            \big(
                (1-\alpha_l)x 
                + 
                \alpha_l 
                \operatorname{prox}_f(x-\lambda_l \nabla \tilde{g}(x))
            \big)
    \Big\|_X
\\
& =
    \alpha_l 
    \Big\|
            \operatorname{prox}_f(x-\lambda_l \nabla g(x))
        - 
            \operatorname{prox}_f(x-\lambda_l \nabla \tilde{g}(x))
    \Big\|_X
\\
& \le
    \alpha_l  \operatorname{Lip}(\operatorname{prox}_f)
    \Big\|
            x-\lambda_l \nabla g(x)
        - 
            x-\lambda_l \nabla \tilde{g}(x)
    \Big\|_X
\\
& =
    \alpha_l\lambda_l  
    \Big\|
            \nabla g(x)
        - 
            \nabla \tilde{g}(x)
    \Big\|_X
\\
& \le
    \max_{j=1,\dots,J}\,
        \sup_{u\in K_j}\,
            \alpha_l\lambda_l  
            \Big\|
                    \nabla g(u)
                - 
                    \nabla \tilde{g}(u)
            \Big\|_X
            +
            \big|
                g(u)
                -
                \tilde{g}(u)
            \big|
\\
& =
    \max_{j=1,\dots,J}\,
            p_{K_j}
            (
                g
                -
                \tilde{g}
            )
\\
& =
    p_{\cup_{j=1}^K\,K_j}
        (
            g
            -
            \tilde{g}
        )
.
\end{align*}    
Therefore, for each $t\in \mathbb{N}_+$ and every $x\in X$, the map 
\begin{equation}
\label{eq:map_continuity}
C^1(X,\mathbb{R})\ni g \mapsto 
                (1-\alpha_l)x 
            + 
                \alpha_l 
                \operatorname{prox}_f(x-\lambda_l \nabla g(x))
\end{equation}
is continuous.  The continuity of $S_{\eta}(\omega,\cdot)$ now follows.
\end{proof}

\begin{proof}
For \textit{any} $x_0\in X$, define the sequence obtained by a forward-backwards (FB) proximal splitting iteration with a convex combination of the current and previous step, iteratively for each $l\in \mathbb{N}_+$ by~\eqref{eq:FB_Splitting}.

Our first objective is re-expressing the FB iteration in~\eqref{eq:FB_Splitting} as in~\cite{guan_2021_forward}.  We first consider the $2$-duality mapping $J_2 : X \to X^{\star}\cong X$, defined for each $x\in X$ by
\begin{equation}
J_2(x) = \{x^{\star} \in X^{\star} \mid \langle x^{\star}, x \rangle = \|x^{\star}\|\|x\|, \ \|x^{\star}\| = \|x\|\}
.
\end{equation}
As shown in~\cite[Proposition 4.8, page 29]{Cioranescu_1990_MathAppl}, it holds for every $x\in X$ that $J_2(x) = \partial \big(\frac{1}{2}\|\cdot\|^2\big)(x)$. Moreover, since we are in the Hilbert (not the general Banach) case when there is a single element in the sub-differential set $\partial \frac{1}{2}\big(\|\cdot\|^2\big)(x)$; namely the identity map $\operatorname{id}_X$; thus
\begin{equation}
\label{eq:J2_identity_map}
    J_2(x) = x
\end{equation}
for all $x\in X$.
Note that for every $f\in \Gamma_0(X)$, $g\in C^1(X)$, $\lambda>0$, and each $x\in X$ 
\allowdisplaybreaks
\begin{align*}
    \operatorname{prox}_f\big(x-\lambda\nabla g(x)\big)
& =
    \operatorname{argmin}_{y\in X}\, 
        f(y)
    + 
        \frac{1}{2}\|y-(x-\lambda\nabla g(x))\|^2
\\
& = 
    \operatorname{argmin}_{y\in X}\, 
            f(y) 
        + 
            \frac1{2}\|y\|^2
        -
            \frac{2}{2}\langle y, x-\lambda\nabla g(x) \rangle
\\
& =
    \operatorname{argmin}_{y\in X}\, 
            f(y) 
        + 
            \frac{1}{2}\,
            \|y\|^2
        -
            \langle y, x\rangle
        +
            \langle y, \lambda\nabla g(x) \rangle
\\
\numberthis
\label{eq:key_quantity}
& =
    \operatorname{argmin}_{y\in X}\, 
        \frac1{2}
            \|y-x\|^2
        +
            \lambda \,\langle y,\nabla g(x) + J_2(0)\rangle
        +
            f(y)
.
\end{align*}
Consequently,~\eqref{eq:FB_Splitting} and~\eqref{eq:key_quantity} imply that for each $l\in \mathbb{N}_+$ we have
\begin{equation}
\label{eq:FB_Splitting__V2}
        x_{l+1} 
    \eqdef  
            (1-\alpha_l)\, 
                x_l 
        +
            \alpha_l
                \Big(
                    \operatorname{argmin}_{y\in X}\, 
                        \frac1{2}
                            \|y-x_l\|^2
                        +
                            \lambda \,\langle y,\nabla g(x_l) + J_2(z_l)\rangle
                        +
                            f(y)
                \Big)
\end{equation}
where $z_l\eqdef 0$ for all $l \in \mathbb{N}_+$.  
Now, under the boundedness assumption $\sup_{l\in \mathbb{N}}\, \|x_l\|<\infty$ and since $0<\lambda_l <1/\lambda$ then~\citep[Proposition 2 (iii)]{guan_2021_forward} implies that~\eqref{eq:Linear_convergence_Guo} holds and~\citep[Proposition 2 (iii)]{guan_2021_forward} guarantees that $(x_l)_{l=0}^{\infty}$ converges weakly to a minimizer of $\ell_{f,g}$ in $X$.   
\end{proof}

\subsection{Step $1$ - Approximately Implementing the Gradient Operator}

Since, in general, we cannot assume that we can directly implement the gradient operator $\nabla$, our first step is to approximate it via a finite difference as follows. Recall that the G\^{a}teaux derivative $Df$ of a real-valued function $f$ on $X$ which is G\^{a}teaux differentiable function at some $x\in X$ is given by
\begin{equation}
\label{eq:def_GateauxDerivative}
    Df(x)(y) \eqdef \lim\limits_{\eta\downarrow 0} \, \frac{f(x+\eta y)-f(x)}{\eta}
.
\end{equation}
We denote by $C^1(X)$ the set of functions $f: X \rightarrow \mathbb{R}$ which are G\^{a}teaux differentiable functions at all points in $X$ with bounded G\^{a}teaux derivative.
If $f$ is G\^{a}teaux differentiable at $x$, then its \textit{Fr\'{e}chet gradient} $\nabla f(x)$, see e.g.~\citep[Remark 2.55]{BauschkeCombettes_2017CABook}, must satisfy
\begin{equation}
\label{eq:def_FrechetDerivative}
    Df(x)(y) = \langle y , \nabla f(x)\rangle
\end{equation}
for each $y\in X$.  Upon fixing an orthonormal basis $(e_i)_{i\in I}$ of $X$, we may re-write the right-hand side of~\eqref{eq:def_FrechetDerivative} by
\begin{equation}
\label{eq:def_FrechetDerivative__rewritten}
        Df(x)(y) 
    = 
        \langle \sum_{i\in I}\, \langle y,e_i\rangle e_i , \nabla f(x)\rangle
    = 
        \sum_{i\in I}\, \langle y,e_i\rangle
        \,
        \langle  e_i , \nabla f(x)\rangle
.
\end{equation}
By definition of both derivatives, we have: for each $i\in I$
\begin{equation}
\label{eq:def_FrechetDerivative__coordinates}
        \langle e_i , \nabla f(x)\rangle
    =
        Df(x)(e_i) 
    = 
        \lim\limits_{\eta\downarrow 0} \, \frac{f(x+\eta e_i)-f(x)}{\eta}
    =
        (\partial_t f(x+t e_i))|_{t=0}
.
\end{equation}
Consequently,~\eqref{eq:def_FrechetDerivative__rewritten} implies that $Df(x)(y)
= 
\sum_{i\in I}\, \langle y,e_i\rangle
\,
(\partial_t f(x+t e_i))|_{t=0}$; whence
\begin{equation}
\label{eq:def_FrechetDerivative___BasisForm}
        \nabla f(x)
    = 
        \sum_{i\in I}\, (\partial_t f(x+t e_i))|_{t=0}\, e_i
.
\end{equation}
Without loss of generality, we identify $I$ with an initial segment of $\mathbb{N}$.  
We consider the case where $I$ is infinite, with the case where $\#I<\infty$ being more straightforward but similar; whence, for us $I=\mathbb{N}$.

This motivates our finite-rank, finite-difference operator.  Fix a rank $R\in \mathbb{N}_+$ and a precision parameter $\delta>0$.  Given any $f\in C^1(X)$.  We approximate the Fr\'{e}chet gradient in~\eqref{eq:def_FrechetDerivative___BasisForm} by the \textit{rank $R$-$\delta$-divided difference operator} $\Delta_R^{\delta}$.
In what follows, we consider the $\delta$-divided difference operator defined for each $f\in C^1(X)$ 
at every $x\in X$ by
\begin{equation}
\label{eq:finite_difference}
    \Delta_{\delta}^R(f)(x)
\eqdef 
    \sum_{i=0}^{R-1}\, 
        \frac{f(x+\delta e_i) - f(x)}{\delta}\,e_i
.
\end{equation}
As one may expect, the rank $R$-$\delta$-divided difference operator provides a rank $R$ approximation to the Fr\'{e}chet gradient of any $C^1(X)$ function.  Interestingly, if the target function's gradient's ``higher frequencies'' (coefficients of the $e_i$ for large $i$) are exponentially small, then $R$ need only grow logarithmically in the reciprocal approximation error $\varepsilon>0$.
\begin{lemma}[Finite Difference Approximation of Gradient Operator]
\label{lem:SimpleGradBound}
Suppose that $f\in C^1(X)$ and let $\mathcal{K}$ be a non-empty compact subset of $X$.  If $\delta>0$ and $R\in \mathbb{N}$, then
\begin{equation}
\label{eq:lem:SimpleGradBound}
    \sup_{x\in \mathcal{K}}\,
        \|
            \Delta^R_{\delta}(f)(x)
            -
            \nabla f(x)
        \|_X
    \lesssim 
        R
        \delta
    +
        \sqrt{
            \sum_{i=R}^{\infty}
            \, 
            \big|(\partial_t f(x+te_i))|_{t=0}\big|^2
        }
    .
\end{equation}
For instance, if there exist constants $r,C>0$ such that: for all $x\in \mathcal{K}$ and $i\in \mathbb{N}$ we have
\\
$\big|(\partial_t f(x+te_i))|_{t=0}\big|^2\le C\, e^{- 2r\,i}$, then there is a constant $c>0$ such that for every $\varepsilon>0$ we may pick $\delta$ small enough and $R\in \mathcal{O}(c + \log(1/\varepsilon))$ satisfying
\begin{equation}
\label{eq:lem:SimpleGradBound__ExponentialVersion}
    \sup_{x\in \mathcal{K}}\,
        \|
            \Delta^R_{\delta}(f) 
            -
            \nabla f(x)
        \|_X
    \le 
        \varepsilon
.
\end{equation}
\end{lemma}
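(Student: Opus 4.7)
The plan is to leverage the orthonormality of $(e_i)_{i=0}^{\infty}$ together with the basis expansion of the Fr\'{e}chet gradient in~\eqref{eq:def_FrechetDerivative___BasisForm}. First, I would write the error as
\[
        \nabla f(x) - \Delta_{\delta}^R(f)(x)
    =
        \underbrace{\sum_{i=0}^{R-1} \Big(  (\partial_t f(x+te_i))|_{t=0} - \tfrac{f(x+\delta e_i)-f(x)}{\delta}\Big) e_i}_{=:\, E_{\mathrm{FD}}(x)}
        \,+\, \underbrace{\sum_{i=R}^{\infty} (\partial_t f(x+te_i))|_{t=0}\, e_i}_{=:\, E_{\mathrm{tail}}(x)},
\]
so that by the triangle inequality $\|\nabla f(x) - \Delta_{\delta}^R(f)(x)\|_X \le \|E_{\mathrm{FD}}(x)\|_X + \|E_{\mathrm{tail}}(x)\|_X$, and Pythagoras yields $\|E_{\mathrm{tail}}(x)\|_X = \big(\sum_{i=R}^{\infty}|(\partial_t f(x+te_i))|_{t=0}|^2\big)^{1/2}$, which is precisely the second summand on the right-hand side of~\eqref{eq:lem:SimpleGradBound}.

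For the finite-difference error $E_{\mathrm{FD}}(x)$, for each $i=0,\dots,R-1$ I would invoke the (one-dimensional) mean value theorem applied to the $C^1$ map $t\mapsto f(x+te_i)$ to obtain some $\xi_i=\xi_i(x,\delta,i)\in (0,\delta)$ with
\[
    \tfrac{f(x+\delta e_i)-f(x)}{\delta}
    = (\partial_t f(x+te_i))|_{t=\xi_i}
    = \langle \nabla f(x+\xi_i e_i),\, e_i\rangle.
\]
Since $\mathcal{K}$ is compact in $X$ and $\delta$, $R$ are fixed, the set $\mathcal{K}_{R,\delta}\eqdef \{x+t e_i:\, x\in \mathcal{K},\,t\in[0,\delta],\,i=0,\dots,R-1\}$ is a finite union of continuous images of $\mathcal{K}\times[0,\delta]$ and hence compact, so the continuous map $\nabla f:X\to X$ admits a modulus of continuity $\omega_{R,\delta}$ on $\mathcal{K}_{R,\delta}$ with $\omega_{R,\delta}(\delta)\to 0$ as $\delta\downarrow 0$. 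Cauchy-Schwarz gives $|\langle \nabla f(x+\xi_i e_i)-\nabla f(x), e_i\rangle|\le \omega_{R,\delta}(\delta)$ per coordinate; after the triangle inequality in $X$ (using $\|e_i\|_X=1$) this yields $\|E_{\mathrm{FD}}(x)\|_X\le R\,\omega_{R,\delta}(\delta)$, which is what is meant by the $R\delta$ term in~\eqref{eq:lem:SimpleGradBound} once the hidden absolute constant absorbs a local Lipschitz/modulus factor of $\nabla f$ over the compact set $\mathcal{K}_{R,\delta}$.

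For the exponential refinement~\eqref{eq:lem:SimpleGradBound__ExponentialVersion}, I would summing a geometric series to get
\[
    \Big(\sum_{i=R}^{\infty}|(\partial_t f(x+te_i))|_{t=0}|^2\Big)^{1/2}
    \le \Big(\sum_{i=R}^{\infty} C\,e^{-2ri}\Big)^{1/2}
    \le C'\,e^{-rR},
\]
so choosing $R=\lceil c+\frac{1}{r}\log(2/\varepsilon)\rceil \in \mathcal{O}(c+\log(1/\varepsilon))$ makes the tail bounded by $\varepsilon/2$. Then, with $R$ fixed, the compact set $\mathcal{K}_{R,\delta}$ is bounded (indeed contained in $\mathcal{K}+\overline{B}_X(0,1)$ for all $\delta\le 1$), and uniform continuity of $\nabla f$ on $\mathcal{K}+\overline{B}_X(0,1)$ lets me pick $\delta>0$ small enough so that $R\,\omega_{R,\delta}(\delta)\le \varepsilon/2$.

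The main obstacle is the cleanly-written $R\delta$ factor in~\eqref{eq:lem:SimpleGradBound}: under the stated hypothesis $f\in C^1(X)$ (without any assumed Lipschitz constant on $\nabla f$), the best per-coordinate bound is $\omega_{R,\delta}(\delta)$, not literally $\delta$. One either reads $\lesssim$ as absorbing a local modulus/Lipschitz constant of $\nabla f$ on a suitably enlarged compact neighbourhood of $\mathcal{K}$, or one tacitly uses that the functions ultimately fed into this lemma (namely $g\in \mathcal{X}_\lambda(r)$) have $\lambda$-Lipschitz Fr\'{e}chet gradients so that $\omega_{R,\delta}(\delta)\le \lambda\delta$ and the literal $R\delta$ bound follows. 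Either way, the exponential conclusion~\eqref{eq:lem:SimpleGradBound__ExponentialVersion} is unaffected since $\delta$ may be taken arbitrarily small after $R$ is fixed.
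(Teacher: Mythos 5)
Your proof is correct and follows essentially the same route as the paper: both decompose the error into a finite-rank finite-difference part and an orthogonal tail, use Pythagoras on the tail, apply a one-dimensional estimate coordinate-wise to the finite-difference part, and close the exponential refinement with a geometric series followed by shrinking $\delta$. Your handling of the finite-difference term is in fact slightly more careful than the paper's: the paper invokes ``Taylor's theorem/forward difference estimates'' to bound each coordinate error by $\tilde C\,\delta$ with $\tilde C$ a sup of the \emph{first} derivative $|\partial_t f(x+te_i)|$, which as stated does not yield a factor of $\delta$ under a bare $C^1$ hypothesis (the mean-value form of the error is $|(\partial_t f)|_{t=\xi}-(\partial_t f)|_{t=0}|$, controlled by the modulus of continuity of the gradient, not by its size); your observation that the literal $R\delta$ requires a Lipschitz gradient, but that this is harmless both because the exponential conclusion only needs $\delta\downarrow 0$ after $R$ is fixed and because the $g\in\mathcal X_\lambda(r)$ actually fed to this lemma have $\lambda$-Lipschitz Fr\'echet gradients, is exactly the right reading and resolves the imprecision.
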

\begin{proof}[{Proof of Lemma~\ref{lem:SimpleGradBound}}]
Suppose that: there are $C,r>0$ such that for all $x\in \mathcal{K}$ and each $i\in \mathbb{N}_+$
\begin{equation}
\label{eq:small_grad}
    |\langle \nabla f(x),e_i\rangle|
\le 
    C e^{-ri}
.
\end{equation}
Fix $\delta>0$.
Then, for every $x\in \mathcal{K}$, we have
\allowdisplaybreaks
\begin{align*}
        \|
            \Delta^R_{\delta}(f)(x)
            -
            \nabla f(x)
        \|_X
    & \le 
        \biggl\|
            \sum_{i=0}^{R-1}\, 
                \Big(
                    \frac{f(x+\delta e_i)-f(x)}{\delta}
                    -
                    (\partial_t f(x+te_i))|_{t=0}
                \Big)e_i
\\
&
            +
            \sum_{i=R}^{\infty}\, 
                    (\partial_t f(x+te_i))|_{t=0}e_i
        \biggr\|_X
\\
    & \le 
        \sum_{i=0}^{R-1}\, 
            \Big|
                \frac{f(x+\delta e_i)-f(x)}{\delta}
                -
                (\partial_t f(x+te_i))|_{t=0}
            \Big|
            \|e_i\|_X
\\
&
        +
        \sqrt{
            \sum_{i= R}^{\infty}
            \, 
            ((\partial_t f(x+te_i))|_{t=0})^2\|e_i\|_X^2
            }
\\
\numberthis
\label{eq:fin_diff}
    & \le 
        R
        \tilde{C}
        \,
        \delta
    +
        \sqrt{
            \sum_{i=R}^{\infty}
            \, 
            \big|(\partial_t f(x+te_i))|_{t=0}\big|^2
        },
\end{align*}
where we have used Taylor's theorem/standard $1$-dimensional finite (forward) difference estimates to obtain~\eqref{eq:fin_diff}; where $0\le \tilde{C}\eqdef \sup_{(x,t)\in X\times [0,\delta]}\,|(\partial_t f(x+te_i))|$ and $\tilde{C}<\infty$ by the continuity of $\nabla f$ and the compactness of $\mathcal{K} \times [0,\delta]$. Now, if there are constant $C,r>0$ such that $\big|(\partial_t f(x+te_i))|_{t=0}\big|^2\le C\,e^{- {2r\,i}}$ for all $i\in \mathbb{N}$; then
\allowdisplaybreaks
\begin{align}
            \sqrt{
            \sum_{i=R}^{\infty}
            \, 
            \big|(\partial_t f(x+te_i))|_{t=0}\big|^2
        }
    \le 
        \sqrt{C} 
        \sqrt{
            \sum_{i=R}^{\infty} e^{-i2R}
        }
    =
        \frac{
            \sqrt{C} 
        }{
            \sqrt{1-e^{2r}}
        }
            e^{-Rr}
.
\end{align}
Setting $C^{\prime}\eqdef \max\{\sqrt{C} /\sqrt{1-e^{2r}},\tilde{C}\}$ yields the bound
\begin{equation}
\label{eq:FD_truncation_Bound}
            \sup_{x\in X}\, 
        \|
            \Delta^R_{\delta}(f)(x)
            -
            \nabla f(x)
        \|_X
    \le
        C^{\prime}
        R\delta 
        + 
        C^{\prime}
        e^{-Rr}
.
\end{equation}
Let $\varepsilon>0$ be given.  Retroactively setting $
R\eqdef \lceil 
\ln((2C^{\prime})^{1/r}) + \frac1{r}\ln(\varepsilon^{-1})
\rceil$ and 
$\delta = \varepsilon/(2C^{\prime} R)$ completes our proof.
\end{proof}

\subsection{Step $2$ - Approximate Implementation of Proximal Forward-Backwards Splitting}

We first approximate the idealized proximal forward-backward splitting scheme considered in Lemma~\ref{lem:Splitting_Convergence} by a variant where the Fr\'{e}chet gradient operator is replaced by a finite-rank finite-difference variance.  Thus, the next lemma takes the ``differentiation'' component of our problem one step closer to an implementable object on a computer processing finite-dimensional linear algebra.

\begin{lemma}[Approximate Proximal Splitting Scheme]
\label{lem:Discretization_bound}
Let $R\in \mathbb{N}_+$, and $\delta,\lambda>0$, $f\in C^1(X)$ and suppose that $\nabla g$ is $\lambda$-Lipschitz.
Let $(\alpha_l)_{t=0}^{\infty},(\lambda_l)_{l=0}^{\infty}$ be as in Lemma~\ref{lem:Splitting_Convergence}.
Define the approximate proximal splitting iteration, for each $l \in \mathbb{N}_+$ by
\begin{equation}
\label{eq:FBSplitting_approximate}
    \hat{x}_{l+1}
\eqdef 
    (1-\alpha_l)
        \hat{x}_l
    +
        \alpha_l
        \operatorname{prox}_f\big(
            \hat{x}_l
            -
            \lambda_l\Delta_{\delta}^R(g)(\hat{x}_l)
        \big)
.        
\end{equation}
Then, for every $L\in \mathbb{N}_+$ we have
\begin{equation}
\label{eq:FBSplitting_approximate__Estimate0}
    \big\|
            x_L
        -
            \hat{x}_L
    \big\|_X
\lesssim 
        \big(
            R
            \delta
        +
            \tau(R,g)
        \big)
        \big(1 - 2^{-(L+1)}\big)
\end{equation}
where $\tau(R,g)^2 
\eqdef 
\sum_{i=R}^{\infty}
\, 
\big|(\partial_t g(x+te_i))|_{t=0}\big|^2$.
\end{lemma}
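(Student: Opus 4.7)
The plan is to compare the two iterations step by step in $X$, exploiting the non-expansiveness of the idealized proximal–gradient update and controlling the per-step discrepancy by the finite-difference error supplied by Lemma~\ref{lem:SimpleGradBound}. I set $\epsilon_l\eqdef\|x_l-\hat x_l\|_X$; since both recursions start at the same initial point, $\epsilon_0=0$. Writing $T_l(x)\eqdef(1-\alpha_l)x+\alpha_l\operatorname{prox}_f(x-\lambda_l\nabla g(x))$ and $\hat T_l(x)\eqdef(1-\alpha_l)x+\alpha_l\operatorname{prox}_f(x-\lambda_l\Delta_\delta^R(g)(x))$, we have $x_{l+1}=T_l(x_l)$ and $\hat x_{l+1}=\hat T_l(\hat x_l)$, so everything reduces to estimating $\|T_l(x_l)-\hat T_l(\hat x_l)\|_X$.

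The key technical ingredient is that $T_l$ is $1$-Lipschitz on $X$. Since $\operatorname{prox}_f$ is $1$-Lipschitz (a standard consequence of $f\in\Gamma_0(X)$) and the averaging $(1-\alpha_l)\operatorname{id}_X+\alpha_l(\cdot)$ preserves non-expansiveness, this reduces to showing that $\operatorname{id}_X-\lambda_l\nabla g$ is non-expansive. Convexity of $g$ together with $\lambda$-Lipschitzness of $\nabla g$ yields, via the Baillon–Haddad theorem, the cocoercivity estimate $\langle x-y,\nabla g(x)-\nabla g(y)\rangle\geq \tfrac{1}{\lambda}\|\nabla g(x)-\nabla g(y)\|_X^2$; expanding $\|x-y-\lambda_l(\nabla g(x)-\nabla g(y))\|_X^2$ and applying this inequality yields non-expansiveness whenever $\lambda_l\leq 2/\lambda$, which follows from the standing hypothesis $\lambda_l\leq 1/(\lambda\alpha_l)\leq 1/\lambda$. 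Combining the triangle inequality with a second use of the $1$-Lipschitz property of $\operatorname{prox}_f$ gives
\[
  \epsilon_{l+1}\leq \|T_l(x_l)-T_l(\hat x_l)\|_X+\|T_l(\hat x_l)-\hat T_l(\hat x_l)\|_X\leq \epsilon_l+\alpha_l\lambda_l\,\|\nabla g(\hat x_l)-\Delta_\delta^R(g)(\hat x_l)\|_X.
\]

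To finish, I invoke Lemma~\ref{lem:SimpleGradBound} at each iterate $\hat x_l$ to bound the finite-difference error uniformly by $\lesssim R\delta+\tau(R,g)$, so unrolling from $\epsilon_0=0$ gives $\epsilon_L\lesssim (R\delta+\tau(R,g))\sum_{l=0}^{L-1}\alpha_l\lambda_l$. Selecting the (admissible) geometrically decaying schedule $\alpha_l\lambda_l\asymp 2^{-(l+1)}$, which is consistent with the step-size conditions of Lemma~\ref{lem:Splitting_Convergence} and with the standing constraint $\lambda_l\leq 1/(\lambda\alpha_l)$, produces the telescoping bound $\sum_{l=0}^{L-1}\alpha_l\lambda_l\lesssim 1-2^{-(L+1)}$, which delivers \eqref{eq:FBSplitting_approximate__Estimate0}. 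The main obstacle is the non-expansiveness of $T_l$; once the Baillon–Haddad cocoercivity is paired with the step-size constraint, the rest is a routine recursion unroll together with a geometric series summation.
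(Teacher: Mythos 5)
Your argument is correct and reaches the same estimate, but it takes a genuinely different route from the paper. The paper never invokes the Baillon–Haddad theorem: it controls the update map via plain triangle inequality and the $\lambda$-Lipschitzness of $\nabla g$, obtaining the per-step recursion $\|x_{l+1}-\hat x_{l+1}\|_X\le(1+\alpha_l\lambda_l\lambda)\|x_l-\hat x_l\|_X+\alpha_l\lambda_l\|\nabla g(\hat x_l)-\Delta_\delta^R(g)(\hat x_l)\|_X$, and then crudely bounds the contraction factor by $2$ using $\alpha_l\le 1$, $\lambda_l\le 1/\lambda$. This growing factor of $2^l$ is then killed by imposing an $L$-dependent decay $\alpha_l\le\lambda_l/2^{L-2l}$. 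You instead observe that convexity plus $\lambda$-Lipschitz gradient makes $\operatorname{id}_X-\lambda_l\nabla g$ non-expansive for $\lambda_l\le 2/\lambda$, so $T_l$ is itself non-expansive and the recursion has factor $1$. This is cleaner, buys a simpler unrolling $\epsilon_L\lesssim(R\delta+\tau(R,g))\sum_{l<L}\alpha_l\lambda_l$, and needs only a geometrically summable product schedule $\alpha_l\lambda_l\asymp 2^{-(l+1)}$, which is arguably more natural than the paper's $L$-dependent condition. Both proofs, however, silently impose extra step-size constraints beyond what the lemma hypothesis states, so you are not giving up anything the paper does not already give up.

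Two small cautions. First, your chain $\lambda_l\le 1/(\lambda\alpha_l)\le 1/\lambda$ is wrong in its second step: since $\alpha_l\le 1$ we have $1/(\lambda\alpha_l)\ge 1/\lambda$, not $\le$. The conclusion $\lambda_l\le 2/\lambda$ that you actually need is instead immediate from the standing hypothesis $\lambda_l\in(0,1/\lambda)$ in Lemma~\ref{lem:Splitting_Convergence}, so the claim survives, but the stated justification does not. Second, your use of Baillon–Haddad requires $g$ to be convex; the lemma statement only says $\nabla g$ is Lipschitz, so you are relying on the convexity of $g$ that is implicit in the global setup \eqref{eq:optim} rather than on the local hypotheses. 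The paper's more pedestrian triangle-inequality route avoids needing convexity at this step. It is worth flagging that dependence explicitly if you adopt the Baillon–Haddad route.
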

\begin{proof}[{Proof of Lemma~\ref{lem:Discretization_bound}}]
Recall that $\operatorname{prox}_f$ is firmly non-expansive, see e.g.~\citep[Proposition 12.28]{BauschkeCombettes_2017CABook}; thus it is $1$-Lipschitz. Consequently, for every $l\in\mathbb{N}_+$ we have
\allowdisplaybreaks
\begin{align*}
&
    \big\|
            x_{l+1}
        -
            \hat{x}_{l+1}
    \big\|_X
\\
& \le 
    \big\|
        (1-\alpha_l)x_l
        +
        \alpha_l
        \operatorname{prox}_f\big(
            x_l
            -
            \lambda_l\nabla g (x_l)
        \big)
        -
        (1-\alpha_l)\hat{x}_l
        -
        \alpha_l
        \operatorname{prox}_f\big(
            \hat{x}_l
            -
            \lambda_l\Delta_{\delta}^R(g)(\hat{x}_l)
        \big)
    \big\|_X
\\
& \le 
    (1-\alpha_l)
    \|x_l-\hat{x}_l\|_X
    +
    \alpha_l
    \big\|
        \operatorname{prox}_f\big(
            x_l
            -
            \lambda_l\nabla g (x_l)
        \big)
        -
        \operatorname{prox}_f\big(
            \hat{x}_l
            -
            \lambda_l\Delta_{\delta}^R(g)(\hat{x}_l)
        \big)
    \big\|_X
\\
& \le 
    (1-\alpha_l)
    \|x_l-\hat{x}_l\|_X
    +
    \alpha_l
    \big\|
        \big(
            x_l
            -
            \lambda_l\nabla g (x_l)
        \big)
        -
        \big(
            \hat{x}_l
            -
            \lambda_l\Delta_{\delta}^R(g)(\hat{x}_l)
        \big)
    \big\|_X
\\
& \le 
    (1-\alpha_l)
    \|x_l-\hat{x}_l\|_X
    +
    \alpha_l
    \|x_l - \hat{x}_l\|_X
    +
    \alpha_l\lambda_l
        \big\|
            \nabla g (x_l)
            -
            \Delta_{\delta}^R(g)(\hat{x}_l)
        \big\|_X
\\
& \le
    (1-\alpha_l)
    \|x_l-\hat{x}_l\|_X
    +
    \alpha_l
    \|x_l - \hat{x}_l\|_X
    +
    \alpha_l\lambda_l
    \big\|
            \nabla g (x_l)
            -
            \nabla g (\hat{x}_l)
        \big\|_X
    +
    \alpha_l\lambda_l
    \big\|
        \nabla g (\hat{x}_l)
        -
        \Delta_{\delta}^R(g)(\hat{x}_l)
    \big\|_X
\\
& \le 
    (1-\alpha_l)
    \|x_l-\hat{x}_l\|_X
    +
    \alpha_l
    \|x_l - \hat{x}_l\|_X
    +
    \alpha_l\lambda_l\lambda
    \big\|
            x_l
            -
            \hat{x}_l
        \big\|_X
    +
    \alpha_l\lambda_l
    \big\|
        \nabla g (\hat{x}_l)
        -
        \Delta_{\delta}^R(g)(\hat{x}_l)
    \big\|_X
\\
& =
    (1+\alpha_l\lambda_l \lambda)
    \|x_l-\hat{x}_l\|_X
    +
    \alpha_l\lambda_l
    \big\|
        \nabla g (\hat{x}_l)
        -
        \Delta_{\delta}^R(g)(\hat{x}_l)
    \big\|_X
\\
\numberthis
\label{eq:max_sizealphalambda}
& \le 
    (1+1\frac1{\lambda} \lambda)
    \|x_l-\hat{x}_l\|_X
    +
    \alpha_l\lambda_l
    \big\|
        \nabla g (\hat{x}_l)
        -
        \Delta_{\delta}^R(g)(\hat{x}_l)
    \big\|_X
\\
\numberthis
\label{eq:uniformboundtime}
& \le 
    2
    \,
    \|x_l-\hat{x}_l\|_X
    +
    \alpha_l\lambda_l
    \underbrace{
        \big\|
            \nabla g (\hat{x}_l)
            -
            \Delta_{\delta}^R(g)(\hat{x}_l)
        \big\|_X
    }_{\term{t:UniformBoundMe}},
\end{align*}
where~\eqref{eq:max_sizealphalambda} held by assumption that for each $l \in \mathbb{N}_+$ we had $\alpha_l\in [0,1]$ and that $\lambda_l\in (0,1/\lambda)$.  
Now, under our assumptions, term~\eqref{t:UniformBoundMe} can be bounded using Lemma~\ref{lem:SimpleGradBound}.  
Let $\tau(R,g)^2 
\eqdef 
\sum_{i=R}^{\infty}
\, 
\big|(\partial_t g(x+te_i))|_{t=0}\big|^2$.
The right-hand side of~\eqref{eq:uniformboundtime} can further be controlled, for every $l\in \mathbb{N}_+$, by
\begin{equation}
\label{eq:recusions}
    \big\|
            x_{l+1}
        -
            \hat{x}_{l+1}
    \big\|_X
\le 
    2
    \,
    \|x_l-\hat{x}_l\|_X
    +
    \alpha_l\lambda_l
    \,
    \big(
            R
            \delta
        +
            \tau(R,g)
    \big)
.
\end{equation}
Recursively applying the estimate in~\eqref{eq:recusions} we find that
\allowdisplaybreaks
\begin{align*}
    \big\|
            x_{L}
        -
            \hat{x}_{L}
    \big\|_X
& \le 
    2^L
        \|x_0-\hat{x}_0\|_X
    +
        \big(
            R
            \delta
        +
            \tau(R,g)
        \big)
        \sum_{s=0}^L
        \,
            \alpha_s\lambda_s
            \prod_{u=s}^t\, 
                (1+\alpha_u\lambda_u\lambda_g)
\\
\numberthis
\label{eq:further_boundable}
& \le 
    2^L
    \underbrace{
        \|x_0-\hat{x}_0\|_X
    }_{\term{t:NoInitialError}}
    +
        \big(
            R
            \delta
        +
            \tau(R,g)
        \big)
        \sum_{s=0}^L
        \,
            \alpha_s\lambda_s
            2^{(L-s-1)_+}
.
\end{align*}
Note that $x_0=\hat{x}_0$; whence,~\eqref{t:NoInitialError} vanishes.  Next, if for each $l\in \mathbb{N}_+$ with $l\le L$, we constrain $\alpha_l\le \lambda_l/2^{L-2l}$ then~\eqref{eq:further_boundable} can be further controlled as
\allowdisplaybreaks
\begin{align*}
    \big\|
            x_{L}
        -
            \hat{x}_{L}
    \big\|_X
& \le 
        \big(
            R
            \delta
        +
            \tau(R,g)
        \big)
        \sum_{s=0}^L
        \,
            \alpha_s\lambda_s
            2^{(L-s-1)_+}
\\
& \le 
        \big(
            R
            \delta
        +
            \tau(R,g)
        \big)
        \sum_{s=0}^L
        \,
            \frac1{
                2^{L-2s}
            }
            \lambda_s
            2^{(L-s-1)_+}
\\
& \le 
        \big(
            R
            \delta
        +
            \tau(R,g)
        \big)
        \sum_{s=0}^L
        \,
        \frac1{2^s}
\\
& \le 
        2
        \big(
            R
            \delta
        +
            \tau(R,g)
        \big)
        \big(1 - 2^{-(L+1)}\big)
.
\end{align*}
\end{proof}

Unfortunately, the proximal operator $\operatorname{prox}_f$ need not map $\operatorname{span}(\{e_j\}_{j=0}^{R-1})$ into itself.  Thus, we further modify the iteration in Lemma~\ref{lem:Discretization_bound} to incorporate a projection step following the application of $\operatorname{prox}_f$ back down onto the span of $\operatorname{span}(\{e_j\}_{j=0}^{R-1})$.
\begin{lemma}[Approximation by Projected (Finite-Dimensional) Proximal Splitting Scheme]
\label{lem:Discretization_and_Projection}
\\
Let $R\in \mathbb{N}_+$, $\delta,\lambda>0$, $f\in C^1(X)$ and suppose that $\nabla g$ is $\lambda$-Lipschitz.
Let $(\alpha_l)_{l=0}^{\infty},(\lambda_l)_{l=0}^{\infty}$ be as in Lemma~\ref{lem:Splitting_Convergence}.
Define the approximate proximal splitting iteration for each $l \in \mathbb{N}_+$ by
\begin{equation}
\label{eq:FBSplitting_approximate2}
    z_{l+1}
\eqdef 
    (1-\alpha_l)
        z_l
    +
        \alpha_l
        \sigma_f\big(
            z_l
            -
            \lambda_l\Delta_{\delta}^R(g)(z_l)
        \big)
.        
\end{equation}
Then, for every $L\in \mathbb{N}_+$, if the hyperparameters $\alpha_0,\dots,\alpha_T$ satisfy
\begin{equation}
\label{eq:decay_condition}
0< \alpha_l\le 2^{-l-L}
\biggl(
        \max\biggl\{
            \sup_{u\in \operatorname{prox}_f(\mathcal{K})}
            \big\|
                \operatorname{prox}_f(u)
                -
                P_R(u)
            \big\|_X
        ,
            1
        \biggr\}
\biggr)
^{-(L-l-1)_+}
\end{equation}
then, for each $l=0,\dots L$ we have
\begin{equation}
\label{eq:FBSplitting_approximate__Estimate}
    \|\hat{x}_L-z_L\|  
 \le 
    2^{1-L},
\end{equation}
where $C_r>0$ depends only on $r$.
\end{lemma}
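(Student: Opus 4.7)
The strategy mirrors the proof of Lemma~\ref{lem:Discretization_bound} but with an extra term accounting for the projection error introduced by replacing $\operatorname{prox}_f$ by $\sigma_f = P_R\circ \operatorname{prox}_f$ in~\eqref{eq:activation_function}. First I would set up the natural one-step recursion for $\|\hat{x}_{l+1}-z_{l+1}\|_X$: defining the shorthands $u_l \eqdef \hat{x}_l - \lambda_l\Delta_\delta^R(g)(\hat{x}_l)$ and $v_l \eqdef z_l - \lambda_l\Delta_\delta^R(g)(z_l)$, the triangle inequality applied to the convex combinations in~\eqref{eq:FBSplitting_approximate} and~\eqref{eq:FBSplitting_approximate2} gives
\[
    \|\hat{x}_{l+1}-z_{l+1}\|_X
    \le (1-\alpha_l)\|\hat{x}_l-z_l\|_X
    + \alpha_l\big\|\operatorname{prox}_f(u_l) - \sigma_f(v_l)\big\|_X.
\]

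Next, I would split the last norm by inserting $\pm \operatorname{prox}_f(v_l)$, so that
\[
    \|\operatorname{prox}_f(u_l) - \sigma_f(v_l)\|_X
    \le \|\operatorname{prox}_f(u_l) - \operatorname{prox}_f(v_l)\|_X
    + \|\operatorname{prox}_f(v_l) - P_R(\operatorname{prox}_f(v_l))\|_X.
\]
The first term is handled as in Lemma~\ref{lem:Discretization_bound}: using that $\operatorname{prox}_f$ is $1$-Lipschitz and that the divided-difference operator $\Delta_\delta^R(g)$ inherits (up to constants controllable in $\delta$) the $\lambda$-Lipschitz property of $\nabla g$ via~\eqref{eq:finite_difference}, this piece is at most $(1+\alpha_l\lambda_l\lambda)\|\hat{x}_l-z_l\|_X$ after combining with the first line. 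The second term is, by construction, bounded uniformly along the iterations by the projection defect
\[
    \epsilon_R \eqdef \sup_{u\in \operatorname{prox}_f(\mathcal{K})}\big\|\operatorname{prox}_f(u)-P_R(u)\big\|_X,
\]
where $\mathcal{K}$ is any fixed compact superset containing all iterates $\{\hat{x}_l\}$ and $\{z_l\}$ (boundedness of which comes from the hypotheses of Lemma~\ref{lem:Splitting_Convergence}, paralleling the argument leading to~\eqref{eq:uniformboundtime}). This yields the clean recursion
\[
    \|\hat{x}_{l+1}-z_{l+1}\|_X
    \le (1+\alpha_l\lambda_l\lambda)\|\hat{x}_l-z_l\|_X + \alpha_l\,\epsilon_R.
\]

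Finally, I would unroll this recursion exactly as in~\eqref{eq:further_boundable}, using $\hat{x}_0 = z_0 = \xi(\omega)$ (so the initial term vanishes) to obtain
\[
    \|\hat{x}_L - z_L\|_X
    \le \epsilon_R \sum_{s=0}^{L-1} \alpha_s \prod_{u=s+1}^{L-1}(1+\alpha_u\lambda_u\lambda).
\]
The hyperparameter condition~\eqref{eq:decay_condition} is then designed precisely so that each factor $(1+\alpha_u\lambda_u\lambda)$ is at most $\max\{\epsilon_R,1\}$ and the product telescopes against $\epsilon_R^{(L-s-1)_+}$, leaving $\sum_{s=0}^{L-1}\alpha_s \cdot \max\{\epsilon_R,1\}^{(L-s-1)_+}\cdot \epsilon_R \le \sum_{s=0}^{L-1} 2^{-s-L} = \mathcal{O}(2^{-L})$, which delivers the claimed bound $\|\hat{x}_L-z_L\|_X \le 2^{1-L}$.

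The main obstacle is the bookkeeping around the projection defect: ensuring that $\epsilon_R$ really acts as a uniform per-step additive error (which requires an a priori boundedness argument for the iterates $z_l$, even though $\sigma_f$ is no longer non-expansive as a map on all of $X$), and then verifying that the apparently ad hoc choice of $\alpha_l$ in~\eqref{eq:decay_condition} exactly cancels the compounded multiplicative blow-up $\prod(1+\alpha_u\lambda_u\lambda)$ against the geometric series. Once these two pieces are aligned, the proof becomes a direct analogue of the one for Lemma~\ref{lem:Discretization_bound}.
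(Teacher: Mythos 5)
Your overall skeleton matches the paper's proof: you decompose via triangle inequality, insert $\pm\operatorname{prox}_f(v_l)$, invoke $1$-Lipschitzness (firm non-expansiveness) of $\operatorname{prox}_f$, bound the contribution from $\Delta_\delta^R(g)$, unroll the resulting affine recursion using $\hat{x}_0 = z_0$ to kill the initial term, and then appeal to the decay condition~\eqref{eq:decay_condition} on $\alpha_l$ to control the remaining sum. That much is the same argument.

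The one substantive divergence is in how the Lipschitz constant of $\Delta_\delta^R(g)$ is handled, and this is where there is a gap. You assert that $\Delta_\delta^R(g)$ ``inherits (up to constants controllable in $\delta$) the $\lambda$-Lipschitz property of $\nabla g$,'' so that the multiplicative factor per step is $(1+\alpha_l\lambda_l\lambda)\le 2$. This is not what the paper proves, and you do not prove it. The paper instead invokes Lemma~\ref{lem:BoundedFinDiff}, which establishes that $\Delta_\delta^R(g)$ is $\tfrac{2R\lambda_g}{\delta}$-Lipschitz (in terms of a Lipschitz constant for $g$ itself, not $\nabla g$), and this constant is decidedly \emph{not} controllable in $\delta$: it blows up as $\delta\to 0$. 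The paper's per-step factor therefore becomes $\bigl(1+\tfrac{2R}{\delta}\bigr)$ rather than your $(1+\alpha_l\lambda_l\lambda)$. A sharper estimate of the type you want does exist — writing each coordinate of $\Delta_\delta^R(g)(x)-\Delta_\delta^R(g)(y)$ as $\int_0^1\langle\nabla g(x+t\delta e_i)-\nabla g(y+t\delta e_i),e_i\rangle\,dt$ shows $\Delta_\delta^R(g)$ is $\sqrt{R}\,\lambda$-Lipschitz when $\nabla g$ is $\lambda$-Lipschitz, independent of $\delta$ — but this is not the lemma the paper cites, the $\sqrt{R}$ factor is missing from your sketch, and in either case you would need to supply the argument rather than assert inheritance. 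A second, smaller looseness: your claim that the decay condition makes each factor $(1+\alpha_u\lambda_u\lambda)$ at most $\max\{\epsilon_R,1\}$ cannot hold when $\epsilon_R<1$ (since $1+\alpha_u\lambda_u\lambda>1$); the absorption mechanism in the paper works differently, folding powers of $C_{1:R}$ (not of the multiplicative factor) into the $\alpha_l$'s. To close the proof you need to either prove the $\delta$-independent Lipschitz bound for $\Delta_\delta^R(g)$ or follow the paper and carry the $R/\delta$ factor explicitly, and then redo the final telescoping calculation carefully.
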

The proof of Lemma~\ref{lem:Discretization_and_Projection} relies on the following two technical lemmata elucidating some elementary properties of the operator $\Delta_{\delta}^R$.

\begin{lemma}[Finite Difference-Type Operator $\Delta_{\delta}^R$ are Bounded]
\label{lem:BoundedFinDiff}
Let $\delta>0$, $\lambda\ge 0$, $R\in \mathbb{N}_+$, and $g:\mathcal{X}\to \mathbb{R}$ be $\lambda$-Lipschitz.  
Then, $\Delta_{\delta}^R(g)$ is $\frac{2R\lambda}{\delta}$-Lipschitz.
\end{lemma}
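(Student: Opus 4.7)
The proof will be a short direct computation using the definition of $\Delta_{\delta}^R$ given in~\eqref{eq:finite_difference} together with the Lipschitz hypothesis on $g$. The plan is the following.

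Fix $x,y \in X$. The first step is to subtract the definitions of $\Delta_{\delta}^R(g)(x)$ and $\Delta_{\delta}^R(g)(y)$ coordinate-by-coordinate along the orthonormal system $(e_i)_{i=0}^{R-1}$, obtaining
\[
    \Delta_{\delta}^R(g)(x) - \Delta_{\delta}^R(g)(y)
    = \sum_{i=0}^{R-1}
        \frac{1}{\delta}\Big(
            \big(g(x+\delta e_i) - g(y+\delta e_i)\big)
            - \big(g(x) - g(y)\big)
        \Big)\, e_i.
\]
The second step is to bound the norm of this sum by the triangle inequality (using $\|e_i\|_X = 1$) rather than by Pythagoras; this is what yields a factor of $R$ rather than $\sqrt{R}$, matching the stated constant.

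The third step is to estimate each scalar coefficient using the $\lambda$-Lipschitz hypothesis on $g$ twice:
\[
    \big|g(x+\delta e_i) - g(y+\delta e_i)\big| \le \lambda\|x-y\|_X,
    \qquad
    \big|g(x) - g(y)\big| \le \lambda\|x-y\|_X,
\]
so that the triangle inequality gives $\frac{2\lambda}{\delta}\|x-y\|_X$ for each of the $R$ terms. Summing yields $\|\Delta_{\delta}^R(g)(x) - \Delta_{\delta}^R(g)(y)\|_X \le \frac{2R\lambda}{\delta}\|x-y\|_X$, which is the claim.

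There is no real obstacle here; the only mild subtlety is recognizing that applying the triangle inequality (rather than the Hilbert-space orthogonality identity) is what produces the stated constant $\frac{2R\lambda}{\delta}$, and that the constant is otherwise independent of $g$ beyond its Lipschitz modulus $\lambda$.
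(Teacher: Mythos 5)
Your proof is correct and follows the paper's argument exactly: expand $\Delta_{\delta}^R(g)(x)-\Delta_{\delta}^R(g)(y)$ coordinate-by-coordinate, bound the norm of the vector sum by $\sum_{i=0}^{R-1}|c_i|$ via the triangle inequality, and apply the $\lambda$-Lipschitz hypothesis on $g$ to each of the two scalar differences in every term. Your remark that the triangle inequality (rather than Parseval) is what produces the factor $R$ is accurate; in fact Parseval would give the sharper constant $2\sqrt{R}\lambda/\delta$, which neither you nor the paper needs.
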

\begin{proof}
Let $x,\tilde{x}\in X$.  Then, 
\allowdisplaybreaks
\begin{align*}
    \big\|
        \Delta_{\delta}^R(g)(x)
        -
        \Delta_{\delta}^R(g)(\tilde{x})
    \big\|_X
& =
    \frac{1}{\delta}
    \,
    \Big|
        \sum_{i=0}^{R-1}\,
            g(x+\delta e_i)
            -
            g(x)
            -
            g(\tilde{x}+\delta e_i)
            +
            g(\tilde{x})
    \Big|
\\
& \le 
    \frac{1}{\delta}
    \sum_{i=0}^{R-1}
    \,
    \big(
        \big|
            g(x+\delta e_i)
            -
            g(\tilde{x}+\delta e_i)
        \big|
        +
        \big|
            g(x)
            -
            g(\tilde{x})    
        \big|
    \big)
\\
& \le 
    \frac{\lambda}{\delta}
    \sum_{i=0}^{R-1}
    \,
    \big(
        \big\|
            x+\delta e_i
            -
            \tilde{x}+\delta e_i
        \big\|_X
        +
        \big\|
            x
            -
            \tilde{x}
        \big\|_X
    \big)
\\
& =
    \frac{2\lambda R}{\delta}
    \|x-\tilde{x}\|_X
.
\end{align*}    
Thus, $\Delta_{\delta}^R$ is $(2\lambda R)/\delta$-Lipschitz.
\end{proof}

\begin{lemma}[Projection Operator Approximation Properties]
\label{lem:ProjFixed}
Let $r>0$, $C\ge 0$, $R\in \mathbb{N}_+$ and let $\mathcal{K}\eqdef \{x\in X:\, |\langle x,e_i\rangle|\le C e^{-ri}\}$.  Then the projection operator $P_R:X\mapsto \operatorname{span}\{e_i\}_{i=0}^{R-1}$ satisfies
\begin{itemize}
    \item[(i)] $P_R(\mathcal{K})\subseteq \mathcal{K}$ and
    \item[(ii)] $\sup_{x\in \mathcal{K}}\,\|P_R(x)-x\|_X
    \le C_r \, e^{-rR}$,
\end{itemize}
where $C_r\eqdef C/\sqrt{1-e^{2r}}>0$.
\end{lemma}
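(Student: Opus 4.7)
The proof plan is essentially straightforward and rests on Parseval's identity, since $(e_i)_{i\in\mathbb{N}}$ is the distinguished orthonormal basis of $X$ fixed in the preliminaries. The statement has two parts: an invariance claim and a quantitative tail bound on the projection error. I will address them in order.

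For claim (i), I would start from the definition $P_R(x) = \sum_{j=0}^{R-1}\langle x, e_j\rangle e_j$ and simply compute the coordinates of $P_R(x)$ in the basis $(e_i)$. For $i < R$ we have $\langle P_R(x), e_i\rangle = \langle x, e_i\rangle$, and for $i \ge R$ we have $\langle P_R(x), e_i\rangle = 0$. In either case, using $x \in \mathcal{K}$ on the first and nonnegativity of $Ce^{-ri}$ on the second, we obtain $|\langle P_R(x), e_i\rangle| \le C e^{-ri}$. Hence $P_R(x) \in \mathcal{K}$, uniformly in $x \in \mathcal{K}$.

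For claim (ii), the key step is to use Parseval to convert the Hilbert-space norm into a tail sum of squared coefficients, yielding
\[
    \|P_R(x) - x\|_X^2
    = \Bigl\| \sum_{i=R}^{\infty}\langle x, e_i\rangle e_i \Bigr\|_X^2
    = \sum_{i=R}^{\infty} |\langle x, e_i\rangle|^2 .
\]
Using the exponential decay estimate $|\langle x,e_i\rangle| \le Ce^{-ri}$ guaranteed by $x\in\mathcal{K}$, the tail becomes a geometric series, controlled by
\[
    \sum_{i=R}^{\infty} C^2 e^{-2ri}
    = C^2 e^{-2rR} \sum_{k=0}^{\infty} e^{-2rk}
    = \frac{C^2 e^{-2rR}}{1 - e^{-2r}},
\]
where summability is ensured by $r>0$. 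Taking square roots and noting that the right-hand side is independent of $x$, one obtains the claimed uniform bound $\|P_R(x)-x\|_X \le C_r e^{-rR}$.

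There is no real obstacle in this lemma; the only thing to flag is an apparent typo in the stated constant: $C_r = C/\sqrt{1-e^{2r}}$ is not real for $r>0$, whereas the natural constant produced by the geometric series argument is $C_r = C/\sqrt{1-e^{-2r}}$. I would write the proof with the corrected constant and note that this matches the asserted exponential rate $e^{-rR}$, which is the only quantitatively relevant feature of $C_r$ for its downstream use in Lemma~\ref{lem:Discretization_and_Projection}.
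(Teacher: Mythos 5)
Your proof is correct and follows essentially the same route as the paper: checking coordinates for (i) and using Parseval plus a geometric tail sum for (ii). You have also correctly identified the typo in the stated constant, which should be $C_r = C/\sqrt{1-e^{-2r}}$ (and, incidentally, the paper's own displayed chain for (ii) has a companion typo, summing $i=0$ to $R-1$ where it means $i=R$ to $\infty$, as your computation makes plain).
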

\begin{proof}
Let $x\in \mathcal{K}$. Then by linearity of $P_R$,
\[
        P_R(x) 
    = 
        \sum_{i=0}^{\infty}\,
            \langle x,e_i \rangle \, P_R(e_i)
    = 
        \sum_{i=0}^{\infty}\,
            \langle x,e_i \rangle \, e_i I_{i<R}
    = 
        \sum_{i=0}^{R-1}\,
            \langle x,e_i \rangle \, e_i
.
\]
Therefore, for each $i\in \mathbb{N}$, we have $
|\langle P_R(x),e_i\rangle | \le C \,e^{-r i } I_{i<R} 
\le C\, e^{-ri}
$. Thus, $P_R(x)\in \mathcal{K}$ and (i) is verified. Moreover,  
\allowdisplaybreaks
\begin{align*}
       \|
            P_R(x)
            -
            x
        \|
     =
        \sqrt{
            \sum_{i=0}^{R-1}\,
                |\langle x,e_i \rangle|^2
                \|e_i\|^2
        }
    \le
        C\,
        \sqrt{
            \sum_{i=0}^{R-1}\,
                e^{-ri2}
        }
    =
        C_r e^{-Rr}
.
\end{align*}
Thus, (ii) holds.
\end{proof}

\begin{proof}[{Proof of Lemma~\ref{lem:Discretization_and_Projection}}]
Fix $R \in \mathbb{N}_+$ and $\delta>0$. Then, for every $t\in \mathbb{N}_+$, we have that
\allowdisplaybreaks
\begin{align*}
\numberthis
\label{eq:finite_dimensionalization_bound__BEGIN}
    & \|\hat{x}_l-z_l\| _X \\
& 
\le 
    (1-\alpha_l)
    \big\|
        x_{l-1}- z_{l-1}
    \big\|_X
\\
&
    +
    \alpha_l 
    \big\|
        \operatorname{prox}_f(\hat{x}_{l-1}-\lambda_l \Delta_{\delta}^R(g)(\hat{x}_{l-1}))
        -
        P_R\circ \operatorname{prox}_f(z_{l-1}-\lambda_l \Delta_{\delta}^R(g)(z_{l-1}))
    \big\|_X
\\
& 
\le 
    (1-\alpha_l)
    \big\|
        x_{l-1}- z_{l-1}
    \big\|_X
\\
\nonumber
&
    +
    \alpha_l 
    \big\|
        \operatorname{prox}_f(\hat{x}_{l-1}-\lambda_l \Delta_{\delta}^R(g)(\hat{x}_{l-1}))
        -
        \operatorname{prox}_f(z_{l-1}-\lambda_l \Delta_{\delta}^R(g)(z_{l-1}))
    \big\|_X
\\
\nonumber
&
    +
    \alpha_l 
    \big\|
        \operatorname{prox}_f(z_{l-1}-\lambda_l \Delta_{\delta}^R(g)(z_{l-1}))
        -
        P_R\circ \operatorname{prox}_f(z_{l-1}-\lambda_l \Delta_{\delta}^R(g)(z_{l-1})
    \big\|_X
\\
& 
\le 
    (1-\alpha_l)
    \big\|
        x_{l-1}- z_{l-1}
    \big\|_X
\\
\nonumber
&
    +
    \alpha_l 
    \operatorname{Lip}(\operatorname{prox}_f)
    \big\|
        \hat{x}_{l-1}-\lambda_l \Delta_{\delta}^R(g)(\hat{x}_{l-1})
        -
        z_{l-1}-\lambda_l \Delta_{\delta}^R(g)(z_{l-1})
    \big\|_X
\\
\nonumber
&
    +
    \alpha_l 
    \big\|
        \operatorname{prox}_f(z_{l-1}-\lambda_l \Delta_{\delta}^R(g)(z_{l-1}))
        -
        P_R\circ \operatorname{prox}_f(z_{l-1}-\lambda_l \Delta_{\delta}^R(g)(z_{l-1})
    \big\|_X
\\
& 
\le 
    (1-\alpha_l)
    \big\|
        x_{l-1}- z_{l-1}
    \big\|_X
\\
\nonumber
&
    +
    \alpha_l 
    \operatorname{Lip}(\operatorname{prox}_f)
    \big\|
        \hat{x}_{l-1}-\lambda_l \Delta_{\delta}^R(g)(\hat{x}_{l-1})
        -
        z_{l-1}-\lambda_l \Delta_{\delta}^R(g)(z_{l-1})
    \big\|_X
\\
\numberthis
\label{eq:convexity_preservation__used}
&
    +
    \alpha_l 
    \sup_{u\in \operatorname{prox}_f(\mathcal{K})}
    \big\|
        \operatorname{prox}_f(u)
        -
        P_R(u)
    \big\|_X
\\
& 
\le 
    (1-\alpha_l)
    \big\|
        x_{l-1}- z_{l-1}
    \big\|_X
\\
\numberthis
\label{eq:nonexpansive_again}
&
    +
    \alpha_l 
    \big\|
        \hat{x}_{l-1}-\lambda_l \Delta_{\delta}^R(g)(\hat{x}_{l-1})
        -
        z_{l-1}-\lambda_l \Delta_{\delta}^R(g)(z_{l-1})
    \big\|_X
\\
\nonumber
&
    +
    \alpha_l 
    \sup_{u\in \operatorname{prox}_f(\mathcal{K})}
    \big\|
        \operatorname{prox}_f(u)
        -
        P_R(u)
    \big\|_X
\\
& 
\le 
    (1-\alpha_l)
    \big\|
        x_{l-1}- z_{l-1}
    \big\|_X
\\
\nonumber
&
    +
    \alpha_l 
    \big\|
        \lambda_l \Delta_{\delta}^R(g)(\hat{x}_{l-1})
        -
        \lambda_l \Delta_{\delta}^R(g)(z_{l-1})
    \big\|_X
    +
    \alpha_l 
    \big\|
        \hat{x}_{l-1}
        -
        z_{l-1}
    \big\|_X
\\
\nonumber
&
    +
    \alpha_l 
    \sup_{u\in \operatorname{prox}_f(\mathcal{K})}
    \big\|
        \operatorname{prox}_f(u)
        -
        P_R(u)
    \big\|_X
\\
& 
= 
    \big\|
        x_{l-1}- z_{l-1}
    \big\|_X
    +
    \alpha_l \lambda_l 
    \big\|
        \Delta_{\delta}^R(g)(\hat{x}_{l-1})
        -
        \Delta_{\delta}^R(g)(z_{l-1})
    \big\|_X
\\
\nonumber
&
    +
    \alpha_l 
    \sup_{u\in \operatorname{prox}_f(\mathcal{K})}
    \big\|
        \operatorname{prox}_f(u)
        -
        P_R(u)
    \big\|_X
\\
\numberthis
\label{eq:Lipfinitediff}
& 
\le 
    \big\|
        x_{l-1}- z_{l-1}
    \big\|_X
    +
    \alpha_l \lambda_l
    \frac{\alpha_l 2R}{\delta}
    \big\|
        \hat{x}_{l-1}
        -
        z_{l-1}
    \big\|_X
    +
    \alpha_l 
    \sup_{u\in \operatorname{prox}_f(\mathcal{K})}
    \big\|
        \operatorname{prox}_f(u)
        -
        P_R(u)
    \big\|_X
\\
\numberthis
\label{eq:finite_dimensionalization_bound__END}
& 
\le 
    \Big(
        1+\frac{2R}{\delta}
    \Big)
    \big\|
        x_{l-1}- z_{l-1}
    \big\|_X
    +
    \alpha_l 
    \sup_{u\in \operatorname{prox}_f(\mathcal{K})}
    \big\|
        \operatorname{prox}_f(u)
        -
        P_R(u)
    \big\|_X,
\end{align*}
where we used \eqref{eq:nonexpansive_again} again held by the firm non-expansiveness of $\operatorname{prox}_f$ (see, e.g., \citep[Proposition 12.28]{BauschkeCombettes_2017CABook}), implying that $\operatorname{prox}_f$ is $1$-Lipschitz, 
we used Lemma~\ref{lem:BoundedFinDiff} to deduce~\eqref{eq:Lipfinitediff}, and we used the constant $\alpha_l\le 1$ to deduce~\eqref{eq:finite_dimensionalization_bound__END}.  

Moreover, by compactness of $\mathcal{K}$ and by continuity of $\operatorname{prox}_f$, we have that $\operatorname{prox}_f(\mathcal{K})$ is compact.  Thus, by the metric approximation property in separable Hilbert spaces we have that
\begin{equation}
\label{eq:finiteness_BAP}
        C_{1:R}
    \eqdef 
        \sup_{u\in \operatorname{prox}_f(\mathcal{K})}
        \big\|
            \operatorname{prox}_f(u)
            -
            P_R(u)
        \big\|_X
    <
        \infty
.
\end{equation}
Fix a time-horizon $L\in\mathbb{N}_+$. Incorporating~\eqref{eq:finiteness_BAP} in the right-hand side of~\eqref{eq:finite_dimensionalization_bound__END} and iterating we obtain the bound
\allowdisplaybreaks
\begin{align*}
\numberthis
\label{eq:finite_dimensionalization_bound__III__begin}
    \|\hat{x}_L-z_L\|_X
& \le 
        \underbrace{
            \prod_{l=0}^L
            \Big(
                1+\frac{2R}{\delta}
            \Big)
            \|x_0-z_0\|_X
        }_{\term{t:matched_ICs}}
    +
        \underbrace{
            \sum_{l=0}^L\, \alpha_s C_{1:R}^{(L-l-1)_+}
        }_{\term{t:Controllable_Term_via_alphas}}
.
\end{align*}
Now, since $x_0\in E_R$ we may indeed pick $x_0=z_0$; implying that~\eqref{t:matched_ICs} vanishes.  Likewise, if 
\begin{equation}
\label{eq:raw_parameterization_form}
    \alpha_l C_{1:R}^{(L-l-1)_+}\le \frac{1}{2^{L+l}}
\end{equation}
for each $l=0,\dots,L$ then $\sum_{s=0}^l \alpha_l C_{1:R}^{(l-s-1)_+} \le \frac1{2^L}\sum_{s=0}^{l} \frac{1}{2^s}\le 2^{1-L}$.  Now the constraint~\eqref{eq:raw_parameterization_form} is equivalent to the condition~\eqref{eq:decay_condition}.  Consequently,~\eqref{t:Controllable_Term_via_alphas} is also controllable and the estimate~\eqref{eq:finite_dimensionalization_bound__III__begin} reduces to
\allowdisplaybreaks
\begin{align*}
    \|\hat{x}_L-z_L\|_X
& \le 
    \frac1{2^L}
    \sum_{s=0}^l\, \frac1{2^s}
\le 
    2^{1-L}
.
\end{align*}
\end{proof}

\subsection{Step $4$ - Convergence Under Additional Regularity of $f$ and $g$}

Under more regularity on $f$ and on the input $g$, we may guarantee that the neural operator is minimizing the loss function.

\begin{proposition}[Convergence of Objective Function]
\label{prop:Main_Result__TechnicalForm}
Fix $x\in X$.
Let $\lambda,\lambda_f,\lambda_g,\delta>0$, $L,R\in \mathbb{N}_+$, $f\in \Gamma(X)$ be coercive and bounded from below. Let $(\alpha_l)_{l=0}^{\infty},(\lambda_l)_{l=0}^{\infty}$ be sequences satisfying the conditions of Lemma~\ref{lem:Splitting_Convergence} and the decay condition
\begin{equation}
\label{eq:decay_condition2}
0< \alpha_l\le 2^{-l-L}
\biggl(
        \max\biggl\{
            \sup_{u\in \operatorname{prox}_f(\mathcal{K})}
            \big\|
                \operatorname{prox}_f(u)
                -
                P_R(u)
            \big\|_X
        ,
            1
        \biggr\}
\biggr)
^{-(L-l-1)_+}
\end{equation}
and let $(x_l)_{l=0}^\infty$ and $(z_l)_{l=0}^{\infty}$ be given by \eqref{eq:FBSplitting_approximate} and \eqref{eq:FBSplitting_approximate2}, respectively, with $x_0\eqdef z_0\eqdef x\in X$.
Then, for any $g\in C^1(X)$ with $\lambda$-Lipschitz Fr\'{e}chet gradient
\begin{equation}
\label{eq:iterate_convergence}
        \|x_l-z_l\|_X
    \lesssim 
        2^{1-L}
        +
        \big(
            R
            \delta
        +
            \tau(R,g)
        \big)
        \big(1 - 2^{-(L+1)}\big)
\end{equation}
and $\lesssim$ hides a constant 
independent of $\delta,R,g$, and $L$.
If, additionally, $f$ is $\lambda_f$-Lipschitz and if $g$ is $\lambda_g$-Lipschitz with 
$\lambda$-Lipschitz Fr\'{e}chet gradient then~\eqref{eq:iterate_convergence} strengthens to
\begin{equation}
\label{eq:approximation}
            \ell_{f,g}(z_T)
        -
            \inf_{x\in X}\, \ell_{f,g}(x)
    \lesssim 
        \frac{1}{L}
        +
        (\lambda_f+\lambda_g)
        \Big(
            2^{1-L}
            +
            \big(
                R
                \delta
            +
                \tau(R,g)
            \big)
            \big(1 - 2^{-(L+1)}\big)
        \Big)
\end{equation}
where $\tau(R,g)^2 
\eqdef 
\sum_{i=R}^{\infty}
\, 
\big|(\partial_t g(x+te_i))|_{t=0}\big|^2$.
\end{proposition}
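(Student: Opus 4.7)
The plan is to chain together Lemmas~\ref{lem:Splitting_Convergence}, \ref{lem:Discretization_bound}, and \ref{lem:Discretization_and_Projection} via a triangle inequality, and then lift the resulting iterate-distance estimate to an objective-value estimate using the Lipschitzianity assumed in the second half of the statement.

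First, I introduce three parallel iterations starting from the common initial condition $x$: let $x_l^\star$ denote the idealized forward-backward iteration from \eqref{eq:FB_Splitting}, let $\hat{x}_l$ denote the finite-difference iteration from \eqref{eq:FBSplitting_approximate}, and let $z_l$ denote the projected finite-dimensional iteration from \eqref{eq:FBSplitting_approximate2}, each initialized at $x$. A triangle inequality then gives
\begin{equation*}
\|x_l^\star - z_l\|_X \le \|x_l^\star - \hat{x}_l\|_X + \|\hat{x}_l - z_l\|_X.
\end{equation*}
Lemma~\ref{lem:Discretization_bound} bounds the first summand by a constant multiple of $(R\delta + \tau(R,g))(1 - 2^{-(L+1)})$, while Lemma~\ref{lem:Discretization_and_Projection}, applied under the decay condition~\eqref{eq:decay_condition2}, bounds the second by $2^{1-L}$. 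Summing the two estimates delivers \eqref{eq:iterate_convergence}.

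To obtain \eqref{eq:approximation}, I decompose
\begin{equation*}
\ell_{f,g}(z_L) - \inf_{y\in X}\ell_{f,g}(y)
= \bigl[\ell_{f,g}(z_L) - \ell_{f,g}(x_L^\star)\bigr]
+ \bigl[\ell_{f,g}(x_L^\star) - \inf_{y\in X}\ell_{f,g}(y)\bigr].
\end{equation*}
Under the additional assumption that $f$ is $\lambda_f$-Lipschitz and $g$ is $\lambda_g$-Lipschitz, the first bracket is at most $(\lambda_f + \lambda_g)\|z_L - x_L^\star\|_X$, which is controlled by the already-established bound \eqref{eq:iterate_convergence}. The second bracket is of order $\mathcal{O}(1/L)$ by the quantitative convergence rate \eqref{eq:Linear_convergence_Guo} of Lemma~\ref{lem:Splitting_Convergence}. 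Adding these two controls yields \eqref{eq:approximation}.

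The main obstacle is the verification of the uniform boundedness hypothesis $\sup_{l}\|x_l^\star\|_X < \infty$ required to invoke Lemma~\ref{lem:Splitting_Convergence}. This is where the assumed coercivity and lower-boundedness of $f$ enter: together with the $\lambda$-Lipschitz gradient of $g$ (so that $g$ is bounded on bounded sets) they ensure that $\ell_{f,g}$ is coercive with bounded sublevel sets, and the hypothesis $\alpha_l\lambda_l \le 1/\lambda$ ensures that the relaxed forward-backward iterates are Fej\'er-monotone with respect to the (nonempty) set of minimizers of $\ell_{f,g}$, cf.~\cite[Chapter 28]{BauschkeCombettes_2017CABook}; this Fej\'er-monotonicity then supplies the required uniform bound. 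The remaining hyperparameter constraints from Lemmas~\ref{lem:Discretization_bound} and~\ref{lem:Discretization_and_Projection} are all subsumed by the conditions explicitly imposed in the proposition, so no further bookkeeping is needed.
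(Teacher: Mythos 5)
Your proposal follows the same overall architecture as the paper's proof: the triangle inequality through $\hat{x}_l$ combining Lemmas~\ref{lem:Discretization_bound} and~\ref{lem:Discretization_and_Projection} for the iterate bound, and the split $\ell_{f,g}(z_L) - \inf \ell_{f,g} = [\ell_{f,g}(z_L) - \ell_{f,g}(x_L)] + [\ell_{f,g}(x_L) - \inf \ell_{f,g}]$ with the Lipschitz constants controlling the first bracket and Lemma~\ref{lem:Splitting_Convergence} controlling the second. One genuine difference: the paper invokes coercivity and lower-boundedness of $f$ via Tonelli's direct method only to certify that a minimizer $x^\star_{f,g}$ exists (so that the infimum is attained and can be subtracted), whereas you additionally use these hypotheses, together with a Fej\'er-monotonicity argument for the relaxed forward-backward iterates, to verify the explicit standing hypothesis $\sup_l \|x_l\|_X < \infty$ of Lemma~\ref{lem:Splitting_Convergence}. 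The paper's own proof applies Lemma~\ref{lem:Splitting_Convergence} at step~\eqref{eq:convergence_lemma} without addressing this hypothesis, so your treatment is actually more careful on this point; the Fej\'er sketch (fixed points of $\operatorname{prox}_f \circ (\operatorname{id} - \lambda_l \nabla g)$ coincide with minimizers, the composition is averaged under $\lambda_l < 2/\lambda$, and relaxation by $\alpha_l \in (0,1]$ preserves Fej\'er monotonicity) is the right tool and worth spelling out in a final write-up.
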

\begin{proof}[{Proof of Proposition~\ref{prop:Main_Result__TechnicalForm}}]
We first establish~\eqref{eq:iterate_convergence}; indeed
\begin{align*}
\numberthis
\label{eq:Lip_fg}
\le & 
    \big\|
        z_L-x_L
    \big\|_X
\\
\le & 
        \big\|
            z_L-\hat{x}_L
        \big\|_X
        +
        \big\|
            x_L-\hat{x}_L
        \big\|_X
\\
\numberthis
\label{eq:hatx_z}
\le & 
        2^{1-L}
        +
        \big\|
            x_L-\hat{x}_L
        \big\|_X
\\
\numberthis
\label{eq:z_x}
\lesssim & 
        2^{1-L}
        +
        \big(
            R
            \delta
        +
            \tau(R,g)
        \big)
        \big(1 - 2^{-(L+1)}\big),
\end{align*}
where~\eqref{eq:hatx_z} held by Lemma~\ref{lem:Discretization_and_Projection} due to our decay assumptions on $\alpha_{\cdot}$ made in ~\eqref{eq:FBSplitting_approximate__Estimate}, and~\eqref{eq:z_x} held by Lemma~\eqref{lem:Discretization_bound} by our assumptions on the Lipschitzness of the gradient of $g$.
\hfill\\
Next, we establish~\eqref{eq:approximation}. We first show the existence of a minimizer to $\ell_{f,g}$ over $X$, which we will routinely use momentarily.   
Since $f$ was assumed to be coercive, then for every $\eta\in \mathbb{R}$ the level set $f^{-1}[(-\infty,\eta]]$ is relatively compact in $X$ (see e.g.~\cite[Definition 1.12]{DalMasoBooko1993}).  Since $g$ was assumed to take non-negative values then, the level set 
$(f+g)^{-1}[(-\infty,\eta]]\subseteq f^{-1}[(-\infty,\eta]]$ is relatively compact; i.e.\ $f+g$ is coercive. Now, since $f$ is also bounded below, then Tonelli's direct method, see e.g.~\cite[Theorem 1.15]{DalMasoBooko1993}, implies that there exists a minimizer $x^{\star}_{f,g}\in X$ of $\ell_{f,g}$; i.e.
\begin{equation}
\label{eq:TonelliResult}
        \inf_{x\in X}\, \ell_{f,g}(x)
    =
        \ell_{f,g}(x^{\star}_{f,g})
.
\end{equation}
Now, using the Lipschitzness of $f+g$ and the minimality of $x^{\star}_{f,g}$ in~\eqref{eq:TonelliResult} we have
\allowdisplaybreaks
\begin{align*}
    \ell_{f,g}(z_L) - \inf_{x\in X}\, \ell_{f,g}(x)
= & 
    \big|
        \ell_{f,g}(z_L) - \inf_{x\in X}\, \ell_{f,g}(x)
    \big|
\\
= & 
    \big|
        \ell_{f,g}(z_L) - \ell_{f,g}(x^{\star}_{f,g})
    \big|
\\
\le & 
    \big|
        \ell_{f,g}(z_L) - \ell_{f,g}(x_L)
    \big|
    +
    \big|
        \ell_{f,g}(x_L) - \ell_{f,g}(x^{\star}_{f,g})
    \big|
\\
= & 
    \big|
        \ell_{f,g}(z_L) - \ell_{f,g}(x_L)
    \big|
    +
        \ell_{f,g}(x_L) - \ell_{f,g}(x^{\star}_{f,g})
\\
\numberthis
\label{eq:convergence_lemma}
\lesssim & 
    \big|
        \ell_{f,g}(z_L) - \ell_{f,g}(x_L)
    \big|
    +
        \frac{1}{L}
\\
\numberthis
\label{eq:final_estimate_utilityconvergence}
\lesssim & 
    (\lambda_f+\lambda_g)
    \Big(
        2^{1-L}
        +
        \big(
            R
            \delta
        +
            \tau(R,g)
        \big)
        \big(1 - 2^{-(L+1)}\big)
    \Big)
    +
        \frac{1}{L}
\end{align*}
where~\eqref{eq:convergence_lemma} held by Lemma~\ref{lem:Splitting_Convergence}, ~\eqref{eq:Lip_fg} held by our Lipschitzness assumptions on $f$ and on $g$, and~\eqref{eq:final_estimate_utilityconvergence} held by~\eqref{eq:iterate_convergence}.
\end{proof}

{We are now in place to establish Theorem~\ref{thrm:Main_AproximateSelection}.  Indeed, we only need to show that $z_L$ (as defined in \eqref{eq:FBSplitting_approximate2}) can be computed by a Generative Equilibrium Operator of depth $L$ and we only need to verify that $x_L$ (as defined in \eqref{eq:FB_Splitting}) is the output of $S_{\eta}(\omega,g)$ (as defined in~\eqref{eq:Sdelta_apprx_sol_map}).
We prove both our main theorems together, as this yields the most streamlined treatment thereof.
}
\begin{proof}[{Proofs of Theorem~\ref{thrm:Main_AproximateSelection} and~\ref{thrm:Main_NearOptimization}}]
Fix $\omega \in \Omega$, $\eta,>0$, and let $S_{\eta}\eqdef S_{\eta}(\omega,\cdot):C^1(X)\to X$ be defined as in~\eqref{eq:Sdelta_apprx_sol_map}.  
Set $x_0\eqdef z_0\eqdef \xi(\omega)\in X$ and couple 
\[
        \delta 
    \eqdef 
        2^{-L}/R >0
.
\]
Fix any $R\in \mathbb{N}_+$, set $M\eqdef R$, $L\eqdef \lceil 1/\eta\rceil$, and for each $l \in \{1,\dots,L\}$ fix any $\alpha_l,\lambda_l \in (0,1/\lambda)$ such that $(\alpha_l)_{l=1}^L$ satisfies the decay condition in~\eqref{eq:decay_condition2}.
For each $l\in \{1,\dots,L\}$ we iteratively define the GEO layers $\mathcal{L}^{(l)}$ (see Definition~\ref{def:GEO}) by
\begin{equation}
\label{eq:GEO_LAYER}
        \mathcal{L}^{(l)}_g(x)
    \eqdef 
                \gamma^{(l)}
                x
        +
                (1-\gamma^{(l)})
            \sigma_f\Big(
                    A^{(l)}
                    x
                +
                    \big[
                            B^{(l)}
                            \big(
                                    g(x + x_m^{(l)})
                            \big)_{m=1}^M
                    +
                            b^{(l)}
                    \big]^{\uparrow:M}
            \Big)
\end{equation}
where, for each $m=1,\dots,R$, we set $x_m^{(l)}=e_m$, $A^{(l)}=I_R$ (the $R\times R$ identity matrix) and $B^{(l)}\eqdef \frac{\lambda_l}{\delta} I_D$, $b^{(l)}=\mathbf{0}_R$ (the zero vector in $\mathbb{R}^R$), and $\gamma_l\eqdef 1-\alpha_l$.  Then, by definition of \textit{rank $R$, $\delta$-divided difference operator} $\Delta_{\delta}^R(\cdot)$ (defined in~\eqref{eq:finite_difference}), the lifting/embedding operator $\cdot^{\uparrow:M}$ (defined in~\eqref{eq:lifting_operator}), and each GEO layer $\mathcal{L}^{(l)}$ in~\eqref{eq:GEO_LAYER} we have that
\begin{equation}
\label{eq:GEO_LAYER__specified}
        \mathcal{L}^{(l)}_g(x)
    \eqdef 
                (1-\alpha^{(l)})
                x
        +
                \gamma^{(l)}
            \sigma_f\big(
                    x
                +
                    \lambda_l
                    \Delta_{\delta}^R(x)
            \big)
.
\end{equation}
Consequently, we find that
\begin{equation}
\label{eq:collapse_to_z_equence}
\mathcal{L}^{(L)}_g\circ \dots \circ \mathcal{L}^{(1)}_g(x_0) = z_L
\end{equation}
where $z_L$ is defined in~\eqref{eq:FBSplitting_approximate}.  Now, observe that $\mathcal{G}(\omega,g)\eqdef \mathcal{L}^{(L)}\circ \dots \circ \mathcal{L}^{(1)}(x_0)$ is a well-defined GEO (with dependence on $\omega$ implicitly in $x_0=\xi(\omega)$ and on $g$ by the definitions of each GEO layer in~\eqref{eq:GEO_LAYER}). Consequently, Proposition~\ref{prop:Main_Result__TechnicalForm} and the definition of the $\mathcal{O}(\eta)$-approximate solution operator $S_{\eta}$ in~\eqref{eq:Sdelta_apprx_sol_map} imply that
\begin{equation}
\label{eq:iterate_convergence2}
    \sup_{g\in \mathcal{X}_\lambda}
    \,
        \big\|
                S_{\eta}(\omega,g)
            -
                \mathcal{G}(\omega,g)
        \big\|_X
    \lesssim 
        2^{1-L}
        +
        \big(
            2^{-L}
        +
            \tau(R,g)
        \big)
        \big(1 - 2^{-(L+1)}\big)
\end{equation}
with $\lesssim$ hiding a constant independent of $\delta$, $R$, $L$ (and thus of $\eta$), and of any $g\in \mathcal{X}_\lambda$. Restricting the supremum in~\eqref{eq:iterate_convergence} to the set $\mathcal{X}_\lambda(r)$ (defined in~\eqref{eq:nice_gs}) we find that
\begin{align*}
\numberthis
\label{eq:iterate_convergence3}
    \sup_{g\in \mathcal{X}_\lambda(r)}
    \,
        \big\|
                S_{\eta}(\omega,g)
            -
                \mathcal{G}(\omega,g)
        \big\|_X
    & \lesssim 
        2^{1-L}
        +
        \big(
            2^{-R}
        +
            r\, 2^{-R}
        \big)
        \big(1 - 2^{-(L+1)}\big)
\\
    & \le 
        2^{1-L}
        +
        \big(
            2^{-R}
        +
            r\, 2^{-R}
        \big)
\\
\numberthis
\label{eq:decay}
    & \lesssim 
        2^{-L} + 2^{-R}
.
\end{align*}
Fix an approximation error $\varepsilon>0$.  
Retroactively, setting $R\eqdef L\eqdef \lceil \varepsilon\rceil$; then~\eqref{eq:iterate_convergence3}-\eqref{eq:decay} implies that
\begin{align}
\label{eq:iterate_convergence__Done}
    \sup_{g\in \mathcal{X}_\lambda(r)}
    \,
        \big\|
                S_{\eta}(\omega,g)
            -
                \mathcal{G}(\omega,g)
        \big\|_X
    & \lesssim 
        \varepsilon
\end{align}
yielding~\eqref{eq:thrm:Main_AproximateSelection__Approx}.
If, additionally, $f$ is $\lambda_f$-Lipschitz and if $g$ is $\lambda_g$-Lipschitz with $\lambda$-Lipschitz Fr\'{e}chet gradient then~\eqref{eq:approximation} in Lemma~\ref{prop:Main_Result__TechnicalForm} yields~\eqref{eq:thrm:Main_AproximateSelection__LossOptima}.
\end{proof}

\newpage

\bibliographystyle{siamplain}
\bibliography{Bookkeaping/Refs}

\begin{thebibliography}{10}

\bibitem{Abramowitz_1970_Handbook}
{\sc M.~Abramowitz and I.~A. Stegun}, {\em Handbook of mathematical functions with formulas, graphs, and mathematical tables}, Applied mathematics series / National Bureau of Standards 55, Print. 9, Dover, New York, 9th~ed., 1970.

\bibitem{adcock2024optimal}
{\sc B.~Adcock, N.~Dexter, and S.~Moraga~Scheuermann}, {\em Optimal deep learning of holomorphic operators between {Banach} spaces}, Advances in Neural Information Processing Systems, 37 (2024), pp.~27725--27789.

\bibitem{alvarez2024neural}
{\sc G.~Alvarez, I.~Ekren, A.~Kratsios, and X.~Yang}, {\em Neural operators can play dynamic {Stackelberg} games}, arXiv preprint arXiv:2411.09644,  (2024).

\bibitem{azizzadenesheli2024neural}
{\sc K.~Azizzadenesheli, N.~Kovachki, Z.~Li, M.~Liu-Schiaffini, J.~Kossaifi, and A.~Anandkumar}, {\em Neural operators for accelerating scientific simulations and design}, Nature Reviews Physics, 6 (2024), pp.~320--328.

\bibitem{Bachouch_2022_DNNforSOC}
{\sc A.~Bachouch, C.~Hur\'e, N.~Langren\'e, and H.~Pham}, {\em Deep neural networks algorithms for stochastic control problems on finite horizon: Numerical applications}, Methodology and Computing in Applied Probability, 24 (2022), pp.~143--178.

\bibitem{bai2019deep}
{\sc S.~Bai, J.~Z. Kolter, and V.~Koltun}, {\em Deep equilibrium models}, in Advances in Neural Information Processing Systems, vol.~32, 2019.

\bibitem{BauschkeCombettes_2017CABook}
{\sc H.~H. Bauschke and P.~L. Combettes}, {\em Convex analysis and monotone operator theory in {H}ilbert spaces}, CMS Books in Mathematics/Ouvrages de Math\'ematiques de la SMC, Springer, Cham, second~ed., 2017, \url{https://doi.org/10.1007/978-3-319-48311-5}.
\newblock With a foreword by H\'edy Attouch.

\bibitem{BeckCheridito_2021_DeepSplitting}
{\sc C.~Beck, S.~Becker, P.~Cheridito, A.~Jentzen, and A.~Neufeld}, {\em Deep splitting method for parabolic {PDE}s}, SIAM Journal on Scientific Computing, 43 (2021), pp.~A3135--A3154.

\bibitem{Becker_2019_DeepOptimalStopping}
{\sc S.~Becker, P.~Cheridito, and A.~Jentzen}, {\em Deep optimal stopping}, Journal of Machine Learning Research, 20 (2019), pp.~1--25.

\bibitem{benitez2024out}
{\sc J.~A.~L. Benitez, T.~Furuya, F.~Faucher, A.~Kratsios, X.~Tricoche, and M.~V. de~Hoop}, {\em Out-of-distributional risk bounds for neural operators with applications to the {Helmholtz} equation}, Journal of Computational Physics, 513 (2024), p.~113168.

\bibitem{BenyaminiLindenstrauss_2000_NonlinearFunctionalAnalysis}
{\sc Y.~Benyamini and J.~Lindenstrauss}, {\em Geometric nonlinear functional analysis. {V}ol. 1}, vol.~48 of American Mathematical Society Colloquium Publications, American Mathematical Society, Providence, RI, 2000, \url{https://doi.org/10.1090/coll/048}.

\bibitem{Berner_2020_ParFamKolmPDE}
{\sc J.~Berner, M.~Dablander, and P.~Grohs}, {\em Numerically solving parametric families of high-dimensional {Kolmogorov} partial differential equations via deep learning}, in Advances in Neural Information Processing Systems, H.~Larochelle, M.~Ranzato, R.~Hadsell, M.~Balcan, and H.~Lin, eds., vol.~33, Curran Associates, Inc., 2020, pp.~16615--16627.

\bibitem{Bertsekas_2005_DynamicProgramming}
{\sc D.~P. Bertsekas}, {\em Dynamic programming and optimal control}, Athena scientific series in optimization and computation, Athena Scientific, Belmont, Mass, third~ed., 2005.

\bibitem{Bhattacharya_2021_ModelRedParPDE}
{\sc K.~Bhattacharya, B.~Hosseini, N.~B. Kovachki, and A.~M. Stuart}, {\em Model reduction and neural networks for parametric {PDE}s}, The SMAI Journal of computational mathematics, 7 (2021), pp.~121--157.

\bibitem{borde2024scalable}
{\sc H.~S. d.~O. Borde, A.~Lukoianov, A.~Kratsios, M.~Bronstein, and X.~Dong}, {\em Scalable message passing neural networks: No need for attention in large graph representation learning}, arXiv preprint arXiv:2411.00835,  (2024).

\bibitem{Bredies_FBSplitting_2008}
{\sc K.~Bredies}, {\em A forward-backward splitting algorithm for the minimization of non-smooth convex functionals in {B}anach space}, Inverse Problems, 25 (2009), pp.~015005, 20, \url{https://doi.org/10.1088/0266-5611/25/1/015005}.

\bibitem{Buehler_2019_DeepHedging}
{\sc H.~Buehler, L.~Gonon, J.~Teichmann, and B.~W. and}, {\em Deep hedging}, Quantitative Finance, 19 (2019), pp.~1271--1291.

\bibitem{Carr_1999_OptionFourier}
{\sc P.~Carr and D.~B. Madan}, {\em Option valuation using the fast {Fourier} transform}, Journal of Computational Finance, 2 (1999), pp.~61--73.

\bibitem{Chen_2018_OptimalControl}
{\sc Y.~Chen, Y.~Shi, and B.~Zhang}, {\em Optimal control via neural networks: A convex approach}, arXiv preprint arXiv:1805.11835,  (2018).

\bibitem{Cioranescu_1990_MathAppl}
{\sc I.~Cioranescu}, {\em Geometry of {B}anach spaces, duality mappings and nonlinear problems}, vol.~62 of Mathematics and its Applications, Kluwer Academic Publishers Group, Dordrecht, 1990, \url{https://doi.org/10.1007/978-94-009-2121-4}.

\bibitem{combettes2005signal}
{\sc P.~L. Combettes and V.~R. Wajs}, {\em Signal recovery by proximal forward-backward splitting}, Multiscale Model. Simul., 4 (2005), pp.~1168--1200, \url{https://doi.org/10.1137/050626090}.

\bibitem{cuchiero23global}
{\sc C.~Cuchiero, P.~Schmocker, and J.~Teichmann}, {\em Global universal approximation of functional input maps on weighted spaces}, arXiv preprint 2306.03303,  (2023).

\bibitem{DalMasoBooko1993}
{\sc G.~Dal~Maso}, {\em An introduction to {$\Gamma$}-convergence}, vol.~8 of Progress in Nonlinear Differential Equations and their Applications, Birkh\"auser Boston, Inc., Boston, MA, 1993, \url{https://doi.org/10.1007/978-1-4612-0327-8}.

\bibitem{E_2019_MLP}
{\sc W.~E, M.~Hutzenthaler, A.~Jentzen, and T.~Kruse}, {\em On multilevel {Picard} numerical approximations for high-dimensional nonlinear parabolic partial differential equations and high-dimensional nonlinear backward stochastic differential equations}, Journal of Scientific Computing, 79 (2019), p.~1534=1571.

\bibitem{E_2021_MLP}
{\sc W.~E, M.~Hutzenthaler, A.~Jentzen, and T.~Kruse}, {\em Multilevel picard iterations for solving smooth semilinear parabolic heat equations}, Partial Differential Equations and Applications, 2 (2021), p.~80.

\bibitem{feischl2025neural}
{\sc M.~Feischl, C.~Schwab, and F.~Zehetgruber}, {\em Neural general operator networks via {Banach} fixed point iterations}, tech. report, ETH Zurich, Research Report No. 2025-13, 2025.

\bibitem{Fleming_1975_DetStochOptimalControl}
{\sc W.~H. Fleming and R.~W. Rishel}, {\em Deterministic and stochastic optimal control}, Stochastic Modelling and Applied Probability, 1, Springer-Verlag, Berlin, Germany, 1st~ed., 1975.

\bibitem{furuya2024simultaneously}
{\sc T.~Furuya and A.~Kratsios}, {\em Simultaneously solving {FBSDEs} with neural operators of logarithmic depth, constant width, and sub-linear rank}, arXiv preprint arXiv:2410.14788,  (2024).

\bibitem{gabor2024positive}
{\sc M.~Gabor, T.~Piotrowski, and R.~L.~G. Cavalcante}, {\em Positive concave deep equilibrium models}, arXiv preprint arXiv:2402.04029,  (2024).

\bibitem{galimberti2022designing}
{\sc L.~Galimberti, A.~Kratsios, and G.~Livieri}, {\em Designing universal causal deep learning models: The case of infinite-dimensional dynamical systems from stochastic analysis}, arXiv preprint arXiv:2210.13300,  (2022).

\bibitem{Geist_2021_ParametricDiffEq}
{\sc M.~Geist, P.~Petersen, M.~Raslan, R.~Schneider, and G.~Kutyniok}, {\em Numerical solution of the parametric diffusion equation by deep neural networks}, Journal of Scientific Computing, 88 (2021), p.~22.

\bibitem{Gnoatto_2024_DeepQuadraticHedging}
{\sc A.~Gnoatto, S.~Lavagnini, and A.~Picarelli}, {\em Deep quadratic hedging}, Mathematics of Operations Research, 0 (0), p.~null.

\bibitem{guan_2021_forward}
{\sc W.-B. Guan and W.~Song}, {\em The forward--backward splitting method and its convergence rate for the minimization of the sum of two functions in {Banach} spaces}, Optimization Letters, 15 (2021), pp.~1735--1758.

\bibitem{Han_2018_DLforSOC}
{\sc J.~Han and W.~E}, {\em Deep learning approximation for stochastic control problems}, arXiv preprint arXiv:1611.07422,  (2016).

\bibitem{Han_2017_SolvingHighDimPDEs}
{\sc J.~Han, A.~Jentzen, and W.~E}, {\em Solving high-dimensional partial differential equations using deep learning}, Proceedings of the National Academy of Sciences, 115 (2018), pp.~8505--8510.

\bibitem{herde2024poseidon}
{\sc M.~Herde, B.~Raonic, T.~Rohner, R.~K{\"a}ppeli, R.~Molinaro, E.~de~B{\'e}zenac, and S.~Mishra}, {\em Poseidon: Efficient foundation models for {PDE}s}, Advances in Neural Information Processing Systems, 37 (2024), pp.~72525--72624.

\bibitem{Karatzas_1998_MethodsMathFinance}
{\sc I.~Karatzas and S.~E. Shreve}, {\em Methods of mathematical finance}, Probability theory and stochastic modelling, volume 39, Springer, New York, corrected 4th printing 2016~ed., 2016.

\bibitem{Khoo_2021_SolvingParametricPDE}
{\sc Y.~Khoo, J.~Lu, and L.~Ying}, {\em Solving parametric {PDE} problems with artificial neural networks}, European Journal of Applied Mathematics, 32 (2021), p.~421–435.

\bibitem{korolev2022two}
{\sc Y.~Korolev}, {\em Two-layer neural networks with values in a {Banach} space}, SIAM Journal on Mathematical Analysis, 54 (2022), pp.~6358--6389.

\bibitem{kovachki2021universal}
{\sc N.~Kovachki, S.~Lanthaler, and S.~Mishra}, {\em On universal approximation and error bounds for {Fourier} neural operators}, Journal of Machine Learning Research, 22 (2021), pp.~1--76.

\bibitem{kovachki2023neural}
{\sc N.~Kovachki, Z.~Li, B.~Liu, K.~Azizzadenesheli, K.~Bhattacharya, A.~Stuart, and A.~Anandkumar}, {\em Neural operator: Learning maps between function spaces with applications to {PDEs}}, Journal of Machine Learning Research, 24 (2023), pp.~1--97.

\bibitem{kratsios2024mixture}
{\sc A.~Kratsios, T.~Furuya, J.~A.~L. Benitez, M.~Lassas, and M.~de~Hoop}, {\em Mixture of experts soften the curse of dimensionality in operator learning}, arXiv preprint arXiv:2404.09101,  (2024).

\bibitem{Kutyniok_2022_NNParPDE}
{\sc G.~Kutyniok, P.~Petersen, M.~Raslan, and R.~Schneider}, {\em A theoretical analysis of deep neural networks and parametric {PDE}s}, Constructive Approximation, 55 (202), pp.~73--125.

\bibitem{lanthaler2023operator}
{\sc S.~Lanthaler}, {\em Operator learning with {PCA-Net}: upper and lower complexity bounds}, Journal of Machine Learning Research, 24 (2023), pp.~1--67.

\bibitem{lanthaler2024operator}
{\sc S.~Lanthaler}, {\em Operator learning of {Lipschitz} operators: An information-theoretic perspective}, arXiv preprint arXiv:2406.18794,  (2024).

\bibitem{lanthaler2022error}
{\sc S.~Lanthaler, S.~Mishra, and G.~E. Karniadakis}, {\em Error estimates for {DeepONets}: A deep learning framework in infinite dimensions}, Transactions of Mathematics and Its Applications, 6 (2022), p.~tnac001.

\bibitem{Li_2024_NeuralNetworkSOC}
{\sc X.~Li, D.~Verma, and L.~Ruthotto}, {\em A neural network approach for stochastic optimal control}, SIAM Journal on Scientific Computing, 46 (2024), pp.~C535--C556.

\bibitem{li2020fourier}
{\sc Z.~Li, N.~Kovachki, K.~Azizzadenesheli, B.~Liu, K.~Bhattacharya, A.~Stuart, and A.~Anandkumar}, {\em Fourier neural operator for parametric partial differential equations}, arXiv preprint arXiv:2010.08895,  (2020).

\bibitem{Lim_2004_QuadraticHedging}
{\sc A.~E.~B. Lim}, {\em Quadratic hedging and mean-variance portfolio selection with random parameters in an incomplete market}, Mathematics of Operations Research, 29 (2004), pp.~132--161.

\bibitem{lu2021learning}
{\sc L.~Lu, P.~Jin, G.~Pang, Z.~Zhang, and G.~E. Karniadakis}, {\em Learning nonlinear operators via {DeepONet} based on the universal approximation theorem of operators}, Nature Machine Intelligence, 3 (2021), pp.~218--229.

\bibitem{Marcati_2023_ExpConvDeepON}
{\sc C.~Marcati and C.~Schwab}, {\em Exponential convergence of deep operator networks for elliptic partial differential equations}, SIAM Journal on Numerical Analysis, 61 (2023), pp.~1513--1545.

\bibitem{marwah2023deep}
{\sc T.~Marwah, A.~Pokle, J.~Z. Kolter, Z.~C. Lipton, J.~Lu, and A.~Risteski}, {\em Deep equilibrium based neural operators for steady-state {PDE}s}, in Thirty-seventh Conference on Neural Information Processing Systems, 2023, \url{https://openreview.net/forum?id=v6YzxwJlQn}.

\bibitem{mohammad2025deep}
{\sc S.~Mohammad-Taheri, M.~J. Colbrook, and S.~Brugiapaglia}, {\em Deep greedy unfolding: Sorting out argsorting in greedy sparse recovery algorithms}, arXiv preprint arXiv:2505.15661,  (2025).

\bibitem{molinaro2023neural}
{\sc R.~Molinaro, Y.~Yang, B.~Engquist, and S.~Mishra}, {\em Neural inverse operators for solving {PDE} inverse problems}, in Proceedings of the 40th International Conference on Machine Learning, 2023, pp.~25105--25139.

\bibitem{monga2021unrolling}
{\sc V.~Monga, Y.~Li, and Y.~C. Eldar}, {\em Algorithm unrolling: Interpretable, efficient deep learning for signal and image processing}, IEEE Signal Processing Magazine, 38 (2021), pp.~18--44.

\bibitem{Salvi_2025_RoughKernelHedging}
{\sc N.~{Mu\c{c}a Cirone} and C.~Salvi}, {\em Rough kernel hedging}, arXiv preprint arXiv:2501.09683,  (2025).

\bibitem{Neufeld_2022_ChaoticHedging}
{\sc A.~Neufeld and P.~Schmocker}, {\em Chaotic hedging with iterated integrals and neural networks}, arXiv preprint arXiv:2209.10166,  (2022).

\bibitem{neufeld2023universal}
{\sc A.~Neufeld and P.~Schmocker}, {\em Universal approximation property of {Banach} space-valued random feature models including random neural networks}, arXiv preprint arXiv:2312.08410,  (2023).

\bibitem{Neufeld_2025_RandomDeepSplitting}
{\sc A.~Neufeld, P.~Schmocker, and S.~Wu}, {\em Full error analysis of the random deep splitting method for nonlinear parabolic {PDE}s and {PIDE}s}, Communications in Nonlinear Science and Numerical Simulation, 143 (2025), p.~108556.

\bibitem{neyt2025hermite}
{\sc L.~Neyt, J.~Toft, and J.~Vindas}, {\em Hermite expansions for spaces of functions with nearly optimal time-frequency decay}, J. Funct. Anal., 288 (2025), pp.~Paper No. 110706, 17, \url{https://doi.org/10.1016/j.jfa.2024.110706}.

\bibitem{pathak2022fourcastnet}
{\sc J.~Pathak, S.~Subramanian, L.~Berkeley, P.~Harrington, S.~Raja, M.~Mardani, T.~Kurth, D.~Hall, Z.~Li, K.~Azizzadenesheli, et~al.}, {\em Fourcastnet: A global data-driven high-resolution weather model using adaptive {Fourier} neural operators}, Ann Arbor, 1001 (2022), p.~48109.

\bibitem{Pham_2000_QuadraticHedging}
{\sc H.~Pham}, {\em On quadratic hedging in continuous time}, Mathematical Methods of Operations Research, 51 (2000), pp.~315--339.

\bibitem{rahman2022generative}
{\sc M.~A. Rahman, M.~A. Florez, A.~Anandkumar, Z.~E. Ross, and K.~Azizzadenesheli}, {\em Generative adversarial neural operators}, arXiv preprint arXiv:2205.03017,  (2022).

\bibitem{riedi2023singular}
{\sc R.~H. Riedi, R.~Balestriero, and R.~G. Baraniuk}, {\em Singular value perturbation and deep network optimization}, Constructive Approximation, 57 (2023), pp.~807--852.

\bibitem{Ruf_2022_HedgingLinearRegressionNN}
{\sc J.~Ruf and W.~Wang}, {\em Hedging with linear regressions and neural networks}, Journal of Business \& Economic Statistics, 40 (2022), pp.~1442--1454.

\bibitem{Schweizer_1999_QuadraticHedging}
{\sc M.~Schweizer}, {\em A guided tour through quadratic hedging approaches}, SFB 373 Discussion Papers 1999,96, Humboldt University of Berlin, Interdisciplinary Research Project 373: Quantification and Simulation of Economic Processes, 1999.

\bibitem{Soner_2005_StochasticOptimalControl}
{\sc H.~M. Soner}, {\em Stochastic optimal control in finance}, Cattedra Galileiana, Scuola Normale Superiore, Pisa, 2005.

\bibitem{Touzi_2014_OptimalStochasticControl}
{\sc N.~Touzi}, {\em Optimal Stochastic Control, Stochastic Target Problems, and Backward SDE}, Springer, New York, NY, 2014.

\bibitem{winston2020monotone}
{\sc E.~Winston and J.~Z. Kolter}, {\em Monotone operator equilibrium networks}, Advances in neural information processing systems, 33 (2020), pp.~10718--10728.

\bibitem{yang2021seismic}
{\sc Y.~Yang, A.~F. Gao, J.~C. Castellanos, Z.~E. Ross, K.~Azizzadenesheli, and R.~W. Clayton}, {\em Seismic wave propagation and inversion with neural operators}, The Seismic Record, 1 (2021), pp.~126--134.

\end{thebibliography}

\newpage

\appendix
\section{Supplementary Material}
\label{a:supplmat}
\subsection{Examples of Proximal Operators}
\label{a:Proximal_fin_dim}

For some prominent Hilbert spaces $X$ and functions $f: X \rightarrow (-\infty,\infty]$, we compute the corresponding activation function $\sigma_f: X \rightarrow X$ defined as $\sigma_f(x) \eqdef P_R(\operatorname{prox}_f(x))$, for $x \in X$, where $R \in \mathbb{N}_+$. For example, on any Hilbert space $X$, the proximal operator of $f(x) \eqdef \frac{1}{2} \Vert x \Vert^2$ is given by $\operatorname{prox}_f(x) = \frac{1}{2} x$. Hence, we obtain a linear $R$-rank operator $\sigma_f(x) = P_R(\operatorname{prox}_f(x)) = \frac{1}{2} P_R(x)$ as activation function.

\begin{example}
    For $d\in \mathbb{N}_+$, let $X \eqdef \mathbb{R}^d$, set $R \eqdef d-1$, and define $f: X \rightarrow (-\infty,\infty]$ by $f(x) = 0$ if $x \in [0,\infty)^d$, and $f(x) = \infty$ otherwise. Then, for every $x \eqdef (x_1,...,x_d)^\top \in \mathbb{R}^d$, it holds that $\operatorname{prox}_f(x) = \operatorname{ReLU}_d(x) \eqdef (\max(x_1,0),...,\max(x_d,0))^\top$. Hence, by using that $P_R = \operatorname{id}_{\mathbb{R}^d}$, we obtain the multivariate ReLU activation function
    \begin{equation*}
        \sigma_f(x) = \operatorname{prox}_f(x) = \operatorname{ReLU}_d(x).
    \end{equation*}
    Moreover, the linear operators $A^{(l)} \in L(\mathbb{R}^d;\mathbb{R}^d) \cong \mathbb{R}^{d \times d}$ and $B^{(l)} \in \mathbb{R}^{d \times M}$ in Definition~\ref{def:GEO} correspond to matrices, while $b^{(l)} \in \mathbb{R}^d$ are classical bias vectors.
\end{example}

\begin{example}
    For the sequence space $X \eqdef l^2 \eqdef \big\lbrace x \eqdef (x_i)_{i \in \mathbb{N}}: \Vert x \Vert \eqdef \sum_{i=0}^\infty x_i^2 < \infty \big\rbrace$ and some fixed $R \in \mathbb{N}_+$, we define $f: X \rightarrow (-\infty,\infty]$ by $f(x) = \Vert x \Vert_{l^1} \eqdef \sum_{i=1}^\infty \vert x_i \vert$, for $x \in l^2$. Then, for every $x \eqdef (x_i)_{i \in \mathbb{N}} \in l^2$, it holds that $\operatorname{prox}_f(x) = \big( (x_i+1) \mathds{1}_{\lbrace x_i < -1 \rbrace} + (x_i-1) \mathds{1}_{\lbrace x_i > 1 \rbrace} \big)_{i \in \mathbb{N}}$. Thus, by using the projection $P_R$, we obtain a non-linear $R$-rank activation function
    \begin{equation*}
        \begin{aligned}
            & \sigma_f(x) = \sum_{j=0}^{R-1} \langle \operatorname{prox}_f(x), e_j \rangle e_j \\
            & = \left( (x_0+1) \mathds{1}_{\lbrace x_0 < -1 \rbrace} + (x_0-1) \mathds{1}_{\lbrace x_0 > 1 \rbrace}, ..., (x_{R-1}+1) \mathds{1}_{\lbrace x_{R-1} < -1 \rbrace} + (x_{R-1}-1) \mathds{1}_{\lbrace x_{R-1} > 1 \rbrace}, 0, 0,...\right).
        \end{aligned}
    \end{equation*}
    Moreover, the linear operators $A^{(l)} \in L(l^2_R;l^2_R) \cong \mathbb{R}^{R \times R}$ in Definition~\ref{def:GEO} are of the form $l^2_R \ni x \eqdef (x_0,...,x_{R-1},0,0,...) \mapsto A^{(l)} x = \big( \sum_{j=0}^{R-1} a^{(l)}_{0,j} x_j, ..., \sum_{j=0}^{R-1} a^{(l)}_{R-1,j} x_j, 0, 0,...) \in l^2_R$ for some $a^{(l)} \eqdef (a^{(l)}_{i,j})_{i,j=0,...,R-1} \in \mathbb{R}^{R \times R}$, while $B^{(l)} \in \mathbb{R}^{R \times M}$ and $b^{(l)} \in \mathbb{R}^R$ are classical matrices and vectors.
\end{example}

\begin{example}
    For the $L^2$-space $X \eqdef L^2(\mu) \eqdef L^2(\Omega,\mathcal{A},\mu)$, a basis $(e_j)_{j \in \mathbb{N}}$ of $L^2(\mu)$, some fixed $R \in \mathbb{N}$, and some $-\infty < c_1 < c_2 < \infty$, we define the function $f: L^2(\mu) \rightarrow (-\infty,\infty]$ by $f(x) = 0$ if $x(\Omega) \subseteq [c_1,c_2]$, and $f(x) \eqdef \infty$ otherwise. Then, the proximal operator $\operatorname{prox}_f(x) = \operatorname{proj}_{[c_1,c_2]}(x(\cdot))$ is the pointwise projection to $[c_1,c_2]$ defined by $\operatorname{proj}_{[c_1,c_2]}(u) \eqdef \min(\max(u,c_1),c_2)$ for $u \in \mathbb{R}$. Hence, we obtain a non-linear $R$-rank activation function
    \begin{equation*}
        \sigma_f(x) = \sum_{j=0}^{R-1} \langle \operatorname{prox}_f(x), e_j \rangle e_j = \sum_{j=0}^{R-1} \langle \operatorname{proj}_{[c_1,c_2]}(x(\cdot)), e_j \rangle e_j.
    \end{equation*}
    Moreover, the linear operators $A^{(l)} \in L(L^2(\mu)_R;L^2(\mu)_R) \cong \mathbb{R}^{R \times R}$ in Definition~\ref{def:GEO} are of the form $L^2(\mu)_R \ni x \mapsto A^{(l)} x = \sum_{i,j=0}^{R-1} a^{(l)}_{i,j} \langle x, e_i \rangle e_j \in L^2(\mu)_R$ for some $a^{(l)} \eqdef (a^{(l)}_{i,j})_{i,j=0,...,R-1} \in \mathbb{R}^{R \times R}$, while $B^{(l)} \in \mathbb{R}^{R \times M}$ and $b^{(l)} \in \mathbb{R}^R$ are classical matrices and vectors.
\end{example}

\subsection{A Finite-Dimensional Application: Learning a minimization operator}
\label{s:Applications__ss:Convex}

As an additional sanity check, we first consider a splitting problem over a finite dimensional Hilbert space. For the Hilbert space $X = \mathbb{R}^d$ and a convex subset $C \subseteq \mathbb{R}^d$, we aim to learn the operator
\begin{equation*}
    \left\lbrace g: \mathbb{R}^d \rightarrow \mathbb{R} \text{ is convex} \right\rbrace \ni g \quad \mapsto \quad \argmin_{x \in C} g(x) = \argmin_{x \in \mathbb{R}^d} \left( f(x) + g(x) \right) \in \mathbb{R}^d,
\end{equation*}
where $f: \mathbb{R}^d \rightarrow (-\infty,\infty]$ is defined by $f(x) = 0$ if $x \in C$, and $f(x) = \infty$ otherwise. In this case, the proximal operator of $f$ is given as $\operatorname{prox}_f(x) = \operatorname{proj}_C(x) \eqdef \argmin_{y \in C} \Vert x-y \Vert
$, for all $x\in \mathbb{R}^d$.

\begin{figure}[ht]
    \begin{minipage}[c]{0.49\textwidth}
        \centering
        \includegraphics[width=1.0\linewidth]{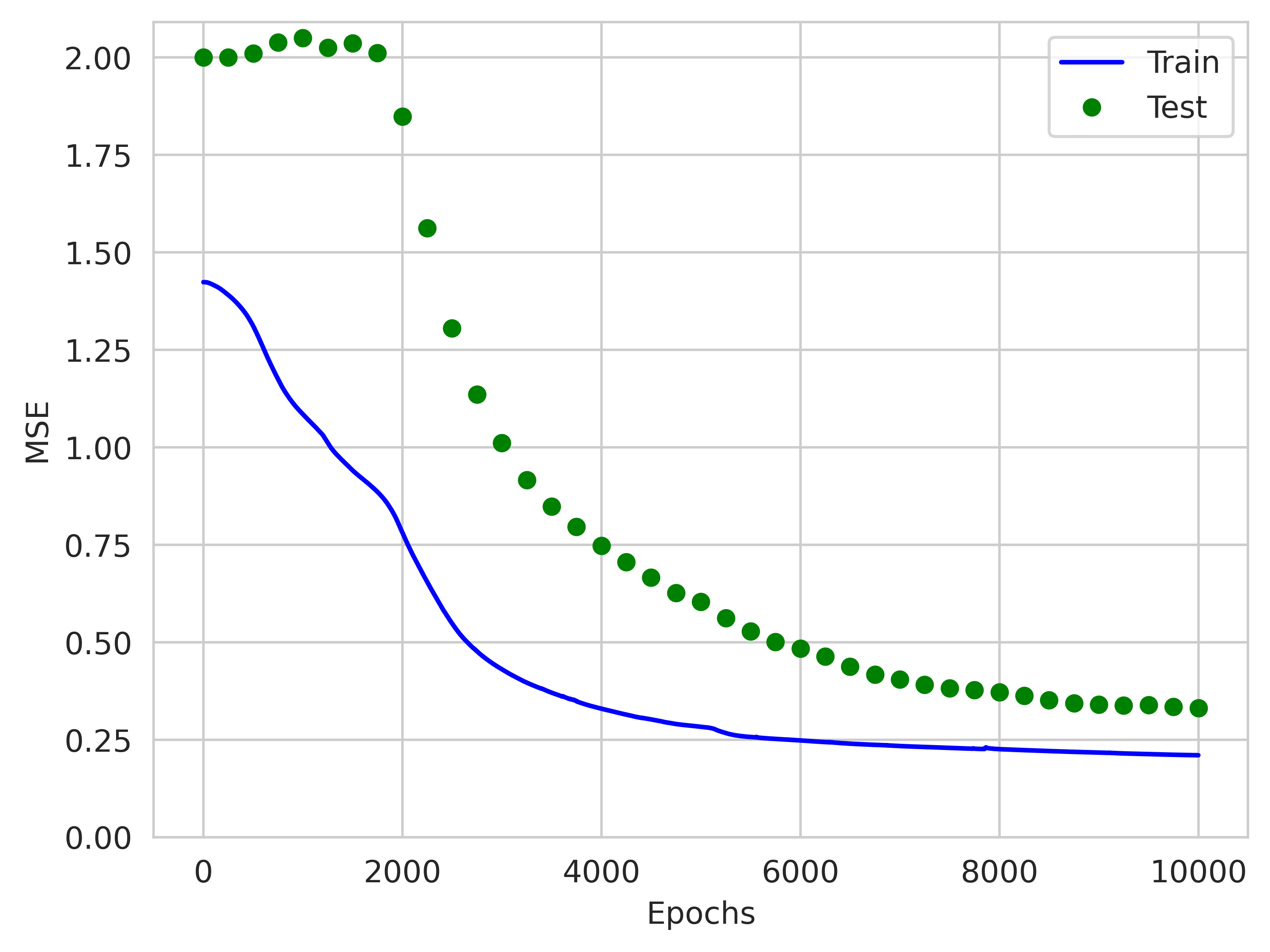}

        \subcaption{Learning performance}
    \end{minipage}
    \begin{minipage}[c]{0.49\textwidth}        
        \centering
        \vspace{0.1cm}
        
        \begin{tabular}{r|r|r|}
             \multicolumn{1}{l|}{$k$} & \multicolumn{1}{l|}{True} & \multicolumn{1}{l|}{Predict} \\
             \hline
             \input{python/min_op_test.txt}
        \end{tabular}
        
        \vspace{0.65cm}

        \subcaption{Solution of \eqref{EqMinOp} for four functions $g_k$ of the test set.}
    \end{minipage}
    \caption{Learning the minimization operator $\mathcal{S}$ in \eqref{EqMinOp} by a Generative Equilibrium Operator $\mathcal{G}$. In (a), the learning performance is displayed in terms of the mean squared error (MSE) $\frac{1}{\vert K \vert} \sum_{k \in K} \Vert \mathcal{S}(g_k) - \mathcal{G}(g_k) \Vert^2$ on the training set (label ``Train'') and test set (label ``Test''). In (b), the predicted solution $\mathcal{G}(g_k)$ (label ``Predict'') is compared to the true solution $\mathcal{S}(g_k)$ (label ``True'') for four $k$ of the test set.}
    \label{FigMinOp}
\end{figure}

\begin{example}
    For $d = 2$, we consider the Hilbert space $X = \mathbb{R}^d$ and the convex subset $C = [-1,1]^d$. In this setting, we aim to learn the minimization operator
    \begin{equation}
        \label{EqMinOp}
        \left\lbrace g: \mathbb{R}^d \rightarrow \mathbb{R} \text{ is convex} \right\rbrace \ni g \quad \mapsto \quad \mathcal{S}(g) \eqdef \argmin_{x \in [-1,1]^d} g(x) = \argmin_{x \in \mathbb{R}^d} \left( f(x) + g(x) \right) \in \mathbb{R}^d,
    \end{equation}
    by a Generative Equilibrium Operator $\mathcal{G}$ of rank $R = d = 2$, depth $L = 20$, and sample points $M = 20$. Hereby, the function $f: \mathbb{R}^d \rightarrow (-\infty,\infty]$ is defined as above. 
    
    To this end, we choose the standard orthonormal basis of $\mathbb{R}^d$. Moreover, we apply the Adam algorithm over 10000 epochs with learning rate $2 \cdot 10^{-4}$ to train the Generative Equilibrium Operator on a training set consisting of 9000 convex functions $\mathbb{R}^d \ni x \mapsto g_k(x) \eqdef \frac{1}{2} x^\top A_k x + b_k^\top x + c_k \in (-\infty,\infty)$, $k = 1,...,9000$, where $A_k \in \mathbb{S}^d_+$, $b_k \in \mathbb{R}^d$, and $c_k \in \mathbb{R}$ are randomly initialized. In addition, we evaluate its generalization performance every 250-th on a test set consisting of 1000 convex functions $\mathbb{R}^d \ni x \mapsto g_k(x) \eqdef \ln\big( \sum_{i=1}^d \exp(b_{k,i} x_i) + c_k \big) \in (-\infty,\infty)$, $k = 9001,...,10000$, where $b_k \eqdef (b_{k,1},...,b_{k,d})^\top  \in \mathbb{R}^d$ (with either $b_{k,i} \geq 0$ for all $i=1,...,d$, or $b_{k,i} \leq 0$ for all $i=1,...,d$) and $c_k \in \mathbb{R}$ are also randomly initialized. The results are reported in Figure~\ref{FigMinOp}.
\end{example}

\end{document}